\newtheorem{assumption}{Assumption}
\newtheorem{theorem}{Theorem}
\newtheorem{lemma}{Lemma}
\newtheorem{corollary}{Corollary}
\newtheorem{proof}{Proof}
\DeclareMathOperator*{\argmax}{arg\,max}
\title{An Efficient Pessimistic-Optimistic Algorithm for Stochastic Linear\\ Bandits with General Constraints\footnote{A preliminary version of this paper that considers the traditional multi-armed bandits can be found in \cite{LiuLiShi_20}.}}
\author{Xin Liu \\University of Michigan, Ann Arbor \\ xinliuee@umich.edu
   \and Bin Li \\Pennsylvania State University \\ binli@psu.edu
   \and ~~~~~~~
   \and Pengyi Shi \\~~~~Purdue University~~~~ \\ shi178@purdue.edu
   \and Lei Ying \\University of Michigan, Ann Arbor \\ leiying@umich.edu}
\date{}
\begin{document}

\maketitle

\begin{abstract}
This paper considers stochastic linear bandits with general nonlinear constraints. 
The objective is to maximize the expected cumulative reward over horizon $T$ subject to a set of constraints in each round $\tau\leq T$. We propose a pessimistic-optimistic algorithm for this problem, which is efficient in two aspects. First, the algorithm yields $\tilde{\cal O}\left(\left(\frac{K^{0.75}}{\delta}+d\right)\sqrt{\tau}\right)$ (pseudo) regret in round $\tau\leq T,$ where $K$ is the number of constraints, $d$ is the dimension of the reward feature space, and $\delta$ is a Slater's constant; and  {\em zero}  constraint violation in any round $\tau>\tau',$ where $\tau'$ is  {\em independent} of horizon $T.$ Second, the algorithm is computationally efficient. 
Our algorithm is based on the primal-dual approach in optimization and includes two components. The primal component is similar to unconstrained stochastic linear bandits (our algorithm uses the linear upper confidence bound algorithm (LinUCB)). The computational complexity of the dual component depends on the number of constraints, but is independent of the sizes of the contextual space, the action space, and the feature space. Thus, the overall computational complexity of our algorithm is similar to that of the linear UCB for unconstrained stochastic linear bandits.  
\end{abstract}

\section{Introduction}
Stochastic linear bandits have a broad range of applications in practice, including online recommendations, job assignments in crowdsourcing, and clinical trials in healthcare. Most existing studies on stochastic linear bandits formulated them as unconstrained online optimization problems, limiting their application to problems with operational constraints such as safety, fairness, and budget constraints. 
In this paper, we consider a stochastic linear bandit with general constraints. 
As in a standard stochastic linear bandit, at the beginning of each round $t\in [T],$ the learner is given a context $c(t)$ that is randomly sampled from the context set $\mathcal C$ (a countable set), and takes an action $A(t)\in[J].$ The learner then receives a reward $R(c(t), A(t))=r(c(t),A(t)) + \eta(t),$ where $r(c, j) = \langle \theta_{*}, \phi(c, j) \rangle,$ $\phi(c,j)\in \mathbb R^d$ is a $d$-dimensional feature vector for (context, action) pair $(c,j),$ $\theta_{*} \in \mathbb R^d$ is an unknown underlying vector to be learned, and $\eta(t)$ is a zero-mean random variable. 
For constrained stochastic linear bandits, we further assume when action $A(t)$ is taken on context $c(t),$ it incurs $K$ different types of costs, denoted by $W^{(k)}(c(t), A(t)).$ We assume $W^{(k)}(c,j)$ is a random variable with mean $w^{(k)}(c,j)$ that is unknown to the learner.  
This paper considers general cost functions and does {\em not} require $w^{(k)}(c,j)$ to have a linear form like $r(c,j)$.

Denote the action taken by policy $\pi$ in round $t$ by $A^{\pi}(t)$. The learner's objective is to learn a policy $\pi$ that maximizes the cumulative rewards over horizon $T$ subject to {\em anytime cumulative constraints}: 
\begin{align}
    &\max_{\pi} \mathbb E\left[\sum_{t=1}^{T} R(c(t), A^\pi(t)) \right] 
    \label{obj-intro}\\
\hbox{subject to: }& \mathbb E\left[\sum_{t=1}^{\tau} W^{(k)}\left(c(t), A^\pi(t)\right) \right] \leq 
0, ~\forall\ \tau\in[T], k\in[K]. \label{eq:cons-intro}
\end{align}
The constraint \eqref{eq:cons-intro} above may represent different operational constraints including safety, fairness, and budget constraints.

\subsection*{Anytime cumulative constraints}
In the literature, 
constraints in stochastic bandits have been formulated differently. There are two popular formulations. The first one is a cumulative constraint over horizon $T,$ including knapsack bandits \cite{BadLanSli_14,BadAshKle_18,AgrDev_16,AgrDev_14,AgrDevLi_16,FerSimWan_18,CayErySri_20} where the process terminates when the total budget has been consumed; fair bandits where the number of times an action can be taken must exceed a threshold at the end of the horizon \cite{CheCueLuo_20}; and contextual bandits with a cumulative budget constraint \cite{WuSriLiu_15,ComJiaSri_15}. In these settings, the feasible action set in each round depends on the history. In general, the learner has more flexibility in the earlier rounds, close to that in  the unconstrained setting. Another formulation is {\em anytime} constraints, which either require the expected cost of the action taken in each round to be lower than a threshold  \cite{AmaAliThr_19,MorAmaAli_19} or the expected cost of the policy in each round is lower than a threshold \cite{PacGhaBar_20}. We call them {\em anytime} action constraints and {\em anytime} policy constraints, respectively. 

\begin{wrapfigure}{r}{0.45\textwidth}
\centering
  \includegraphics[width=0.4\textwidth]{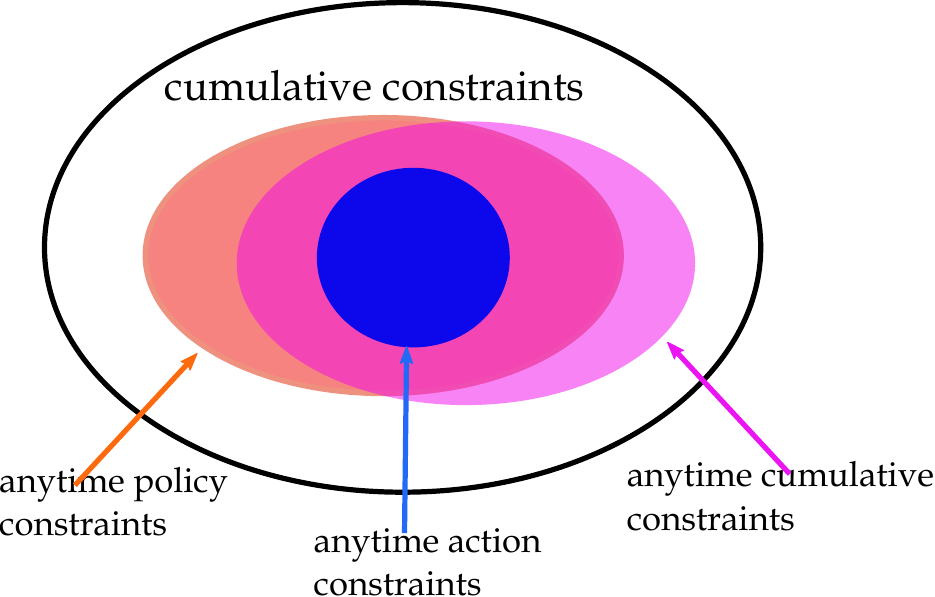}
  \caption{\small{A conceptual description of feasible policy sets under different constraint formulations.}}
  \label{fig:constraints}
\end{wrapfigure}

Our constraint in the form of \eqref{eq:cons-intro} is an {\em anytime cumulative constraint}, i.e., it imposes a cumulative constraint in {\em every} round. This anytime cumulative constraint is most similar to the anytime policy constraint in \cite{PacGhaBar_20} because the average cost of a policy is close to its mean after the policy has been applied for many rounds and the process converges, so it can be viewed as a cumulative constraint on actions over many rounds (like ours). Furthermore, when our anytime cumulative constraint  \eqref{eq:cons-intro} is satisfied, it is guaranteed that the time-average cost is below a threshold in every round. 

In summary, our anytime cumulative constraint is stricter than a cumulative constraint over fixed horizon $T$ but is less restrictive than anytime action constraint in \cite{AmaAliThr_19,MorAmaAli_19}. Figure~\ref{fig:constraints} provides a conceptual description of the relationship between these different forms of constraints.  

\subsection*{Main Contributions}
\label{sec:contribution}

This paper presents a pessimistic-optimistic algorithm based on the primal-dual approach in optimization for the problem defined in \eqref{obj-intro}-\eqref{eq:cons-intro}.  The algorithm is efficient in two aspects. First, the algorithm yields $\tilde{\cal O}\left(\left(\frac{K^{0.75}}{\delta}+d\right)\sqrt{\tau}\right)$  regret in round $\tau\leq T$ and achieves {\em zero}  constraint violation in any round $\tau>\tau'$ for a constant $\tau'$  {independent} of horizon $T$. Second, the algorithm is computationally efficient.

For computational efficiency, the design of our algorithm is based on the primal-dual approach in optimization. The computation of the primary component is similar to unconstrained stochastic linear bandits \cite{DanHayTho_08,RusTsi_10,LiChuLan_10,AbbPalSze_11,ChuLiRey_11}. 
The dual component includes a set of Lagrangian multipliers that are updated in a simple manner to keep track of the levels of constraint violations so far in each round; the update depends on the number of constraints, but it is independent of the sizes of the contextual space, the action space, and the feature space. Thus, the overall computational complexity of our algorithm is similar to that of LinUCB in the unconstrained setting. This results in a much more efficient calculation comparing to OPLB proposed in \cite{PacGhaBar_20}. OPLB needs to construct a safe policy set in each round, hence, its computational complexity is prohibitively high as the authors acknowledged. 

For constraint violation, our algorithm guarantees that for any $\tau>\tau',$ the constraint holds with probability one. In other words, after a constant number of rounds, the constraint is always satisfied. This is in contrast to prior works \cite{PacGhaBar_20,AmaAliThr_19}, where anytime constraints are proven to hold over horizon $T$ with probability $1-\chi$ for a constant $\chi$. In other words, the anytime constraints may be violated with probability $\chi,$ and it is not clear how often they are violated when it happens. Furthermore, beyond mean cost constraints considered in \eqref{eq:cons-intro} and in  \cite{PacGhaBar_20,AmaAliThr_19}, we prove that a sample-path version of constraint \eqref{eq:cons-intro} holds with probability $1-O\left(e^{-\frac{\delta\sqrt{\tau}}{50K^{2.5}}}\right)$ in round $\tau$ under our algorithm. 

To summarize, our algorithm is computationally efficient and provides strong guarantees on both regret and constraint violations. Additionally, our cost function is in a general form and does not need to be linear as those in \cite{PacGhaBar_20,AmaAliThr_19}. 
We discuss more related work in the following.

\subsection*{Related Work}
\label{sec:related}
Stochastic linear bandits \cite{AbeLon_99, Auer_03} are a special class of contextual bandits \cite{Mic_79, LanZha_08}, which generalize multi-armed bandits \cite{LaiRob_85}. 
Besides \cite{PacGhaBar_20}, \cite{CheCueLuo_20} considered an adversarial contextual bandit with anytime policy constraint representing fairness. 
The proposed algorithm has $\tilde{\cal O}(\sqrt{|\mathcal C|JT})$ regret when the context distribution is known to the learner; otherwise it has $\tilde{\cal O}(\sqrt{|\mathcal C|JT})$ regret and $\tilde{\cal O}(\sqrt{|\mathcal C|T})$ constraint violation. \cite{LiLiuJi_19} studied a combinatorial sleeping bandits problem under cumulative fairness
constraints and proposed an algorithm based on UCB which they {\em conjectured} to have $\tilde{\cal O}(\sqrt{T})$ regret and $\tilde{\cal O}(\sqrt{T})$ constraint violation. 
Recent work studied unconstrained structured bandits and proposed primal-dual approach based on asymptotically lower bound problem in bandits \cite{KirLatVerSze_21, TirPirResLaz_20, DegShaKoo_20}.
However, our algorithm is different from them in three aspects. Our primal component is a greedy algorithm instead of a (sub-)gradient algorithm (as in \cite{TirPirResLaz_20}). Our dual component does not solve a best response problem, which is a constrained optimization problem as in \cite{KirLatVerSze_21, DegShaKoo_20}. Our analysis is based on the Lyapunov-drift analysis for queueing systems, e.g., we establish a bound on the exponential moment of the dual variable, which is not present in \cite{KirLatVerSze_21, TirPirResLaz_20, DegShaKoo_20}. It is also worth mentioning that \cite{KiaEil_20, AbbMohYad_17,AhmChrMah_20} studied ``conservative'' bandits which require that the reward or the cumulative reward exceeds a threshold at each step. Another line of related work is
online convex optimization with constraints, studied in \cite{ShiJohYu_09,MahJinYan_12,YuNee_20,YuNeeWei_17,WeiYuNee_20,UsmKraKam_19}, where online primal-dual with proximal regularized algorithms have been proposed to achieve $O(\sqrt{T})$ regret and $O(1)$ violation for static constraints in \cite{YuNee_20} and $O(\sqrt{T})$ violation for stochastic constraints in \cite{YuNeeWei_17} .

\noindent{\bf Notation.} 
$f(n) = \tilde{\mathcal O}(g(n))$ denotes $f(n) = O(g(n){\log}^k n)$ with $k>0;$ $[N]$ denotes the set $\{1,2,\cdots, N\};$ $\langle \cdot, \cdot \rangle$ denotes the inner product; $(\cdot)^\dag$ denotes the transpose of a vector or a matrix; $||\cdot||=||\cdot||_2,$ and $||\mathbf x||_{\Sigma} = \sqrt{\mathbf x^{\dag} \Sigma \mathbf x}.$ We add subscript $t$ to a variable when it is a time-varying sequence of constants (e.g., $c_t$), and add $(t)$ when they are random variables or decision variables (e.g., $c(t)$). We summarize our notation in Appendix \ref{app:notation}.

\section{A Pessimistic-Optimistic Algorithm}
\label{sec:alg}

We consider a stochastic linear bandit over  horizon $T$ as described in the introduction. The learner's objective is to maximize the cumulative reward over time horizon $T$ subject to $K$ anytime cumulative constraints as defined in \eqref{obj-intro}-\eqref{eq:cons-intro}.  
To address the challenges on the \emph{unknown} reward and cost in constraint, as well as the anytime cumulative constraints, we develop a pessimistic-optimistic algorithm based on the primal-dual approach for constrained optimization. We first give out the intuition of the algorithm and then provide the formal statement of the algorithm.

To start, we consider a baseline, deterministic problem that replaces all the random variables with their expectations. Different from the conventional setup, we introduce a ``tightness'' constant $\epsilon>0$: 
\begin{align}
    \max_{\mathbf x} & ~  \sum_{c\in\mathcal C,j\in[J]} p_c r(c,j) x_{c,j} 
    \label{obj-fluid-tightened}\\
    \text{s.t.} & ~ \sum_{j\in[J]}  x_{c,j} = 1,   ~x_{c,j} \geq 0, \forall c \in \mathcal C, \label{arrival-fluid-tightened} \\
    & ~ \sum_{c\in\mathcal C,j\in[J]}p_cw^{(k)}(c,j) x_{c,j} +\epsilon\leq 0, ~\forall k \in [K], \label{resource limit-fluid-tightened} 
\end{align} 

where $x_{c,j}$ can be viewed as the probability of taking action $j$ on context $c,$ and $p_c$ is the probability that context $c$ is selected in each round. We will discuss in further details the importance of the tightness constant $\epsilon$ in Section~\ref{sec:main proof}.   
The Lagrangian of the problem above is 
\begin{align}
    \max_{{\mathbf x}: \sum_{j}  x_{c,j} = 1, x_{c,j} \geq 0} \sum_{c,j} p_c r(c,j) x_{c,j}-\sum_{k}\lambda^{(k)}\left(\sum_{c\in\mathcal C,j\in[J]}p_cw^{(k)}(c,j) x_{c,j} +\epsilon \right),
\end{align} where $\lambda^{(k)}$ is the Lagrange multiplier associated with the $k$th constraint in \eqref{resource limit-fluid-tightened}. Fixing the values of the Lagrange multipliers, solving the optimization problem is equivalent to solving $|\mathcal C|$ separate subproblems \eqref{eq:subprob}, one for each context $c,$ because the optimization variables $\bf x$ are coupled through $j$ only:  
\begin{align}
    \max_{{\mathbf x}: \sum_{j}  x_{c,j} = 1, x_{c,j} \geq 0} p_c\left(\sum_{j}  r(c,j) x_{c,j}-\sum_{k}\lambda^{(k)}\left(\sum_{j}w^{(k)}(c,j) x_{c,j} \right)\right).\label{eq:subprob}
\end{align}
Since the problem above is a linear programming, one of the optimal solutions is $x_{c,j}=1$ for $j=j^*$ and $x_{c,j}=0$ otherwise, where 
\begin{equation}
j^*\in\arg\max_j  r(c,j) -\sum_{k}\lambda^{(k)} w^{(k)}(c,j)
\label{eq:maxvalue}
\end{equation} and a tie can be broken arbitrarily. If we call $r(c,j) -\sum_{k}\lambda^{(k)} w^{(k)}(c,j)$ the action-value of context $c,$ then the solution for fixed values of Lagrange multipliers is to choose an action with the highest action-value. We may view the action value here plays a similar role as the Q-value (also called action-value function) in Q-learning \cite{WatDay_92}. 

Now the challenges to find a solution according to \eqref{eq:maxvalue} include: (i) both $r(c,j)$ and $w^{(k)}(c,j)$ are unknown, and (ii) the optimal Lagrange multipliers $\lambda^{(k)}$ are also unknown. To overcome these challenges, we develop a pessimistic-optimistic algorithm that 
\begin{itemize}[leftmargin=*]
    \item Uses LinUCB to estimate $r(c,j)$ based on its linear structure. 
    \item Uses observed $W^{(k)}(c(t),j)$ to replace $w^{(k)}(c(t),j)$ at each round $t.$ 
    \item Uses the following function to dynamically approximate the Lagrange multipliers: 
    \begin{align*}
  Q^{(k)}(t+1) =& \left[Q^{(k)}(t) + \sum_{j\in [J]}W^{(k)}(c(t),j) X_{j}(t)  + \epsilon_t \right]^{+}, \forall k.
 \end{align*} In other words, we increase its value when the current cost exceeds the current ``budget,'' and decrease it otherwise. Therefore, $Q^{(k)}(t)$ keeps track of the cumulative constraint violation by round $t$.

 \item We further add a scaling parameter $1/V_t$ to $Q^{(k)}(t),$ i.e. $\frac{Q^{(k)}(t)}{V_t},$ to approximate $\lambda^{(k)}.$ With a carefully designed $V_t,$ we can control the tradeoff between maximizing reward and minimizing constraint violation in the policy and achieve the regret and constraint violation bounds to be presented in the main theorem. 
\end{itemize}

\noindent Next, we formally state our algorithm. This algorithm 
takes the following information as input at the beginning of each round $t$: (i) historical observations $${\cal F}_{t-1}=\{c(s), A(s), R(c(s),A(s)), W^{(k)}(c(s), A(s))\}_{s\in[t-1], k\in[K]},$$ (ii) current observations $c(s)$ and $\{W^{(k)}(c(s), j)\}_{k\in[K],j\in[J]},$ and (iii) system parameters: the feature map $\{\phi(c,j)\}_{c\in\mathcal C,j\in[J]},$ time horizon $T,$ and a pre-set constant $\delta$. In the analysis of our algorithm, we will reveal the connection of this constant $\delta$ with Slater's condition.   
The algorithm outputs the action in each round, observes reward $R(c(t), A(t))$, makes updates, and then moves to the next round $t+1$.

\vspace{0.1in}
\hrule
\vspace{0.1in}
\noindent{\bf A Pessimistic-Optimistic Algorithm}
\vspace{0.1in}
\hrule
\vspace{0.1in}

\noindent {\bf Initialization:} $Q^{(k)}(1)=0,$ $\mathcal B_1 = \{\theta | ||\theta||_{\Sigma_0} \leq \sqrt{\beta_1} \}, 
\Sigma_0 = \mathbf I~\text{and}~\sqrt{\beta_1} =  m + \sqrt{2\log T}.$

\noindent For $t=1,\cdots, T,$ 
\begin{itemize}[leftmargin=*]
\item \noindent {\bf Set:}  
$V_t=\delta K^{0.25}\sqrt{\frac{2t}{3}}$ and $\epsilon_t=K^{0.75}\sqrt{\frac{6}{t}}.$

\item {\bf LinUCB (Optimistic):} Use LinUCB to estimate $r(c(t),j)$ for all $j:$ $$\hat r(c(t),j) = \min\{1,\tilde r(c(t),j)\} ~~\text{with}~~ \tilde r(c(t),j) = \max_{\theta \in \mathcal B_t} \langle \theta, \phi(c(t),j)\rangle.$$

\item {\bf MaxValue:} Compute {\em pseudo-action-value} of context $c(t)$ for all action $j,$ and take the action $j^*$ with the highest pseudo-action-value, breaking a tie arbitrarily
\begin{align*}
j^*\in \arg\max_j \underbrace{\hat r(c(t),j) - \frac{1}{V_t}\sum_k W^{(k)}(c(t),j)Q^{(k)}(t)}_{\text{pseudo action value of $(c(t),j)$}}.
\end{align*}

\item {\bf Dual Update (Pessimistic):} Update the estimates of dual variables $Q^{(k)}(t)$ as follows:
\begin{align}
  Q^{(k)}(t+1) =& \left[Q^{(k)}(t) + \sum_jW^{(k)}(c(t),j) X_{j}(t)  + \epsilon_t\right]^{+}, \forall k.\label{eq:dual-update}
 \end{align}

\item {\bf Confidence Set Update:} Update $\Sigma_t,$ $\hat \theta_t,$ $\beta_{t+1}$ and $\mathcal B_{t+1}$ according to the received reward $R(c(t), j^*):$
\begin{align*}
\Sigma_t = \Sigma_{t-1} + \phi(c(t),j^*) \phi^{\dag}(c(t),j^*), ~~~~
\hat \theta_t = \Sigma^{-1}_t\sum_{s=1}^t \phi(c(s),A(s)) R(c(s), A(s)), \\
\sqrt{\beta_{t+1}} = m + \sqrt{2 \log T + d\log \left(\frac{d+t}{d}\right)}, ~~~ \mathcal B_{t+1} = \{\theta ~|~ ||\theta - \hat \theta_{t} ||_{\Sigma_{t}} \leq \sqrt{\beta_{t+1}} \}.
\end{align*}
\end{itemize}
\vspace{0.1in}
\hrule
\vspace{0.1in}

The complexity of our algorithm is similar to LinUCB. The additional complexity is proportional to the number of constraints (for updating $Q^{(k)}$), and it is much lower than OPLB in \cite{PacGhaBar_20}, where the construction of a safe policy set in each round is a major computational hurdle. Our algorithm is computationally efficient. Additionally, our algorithm does not estimate $w^{(k)}(c,j)$, hence, we do not need to make any specific assumption on $w^{(k)}(c,j).$ 

\section{Main Results: Regret and Constraint Violation Bounds}
\label{sec:main-results}

To understand the performance of a given policy $\pi,$ we will analyze both the regret and the constraint violation. For that, we first define the baselines and state the assumptions made for the performance analysis. Then, we present our main results on the regret bound and constraint violations -- for the latter, we present both results on expected violation and an additional high probability bound for pathwise constraint violation.

\subsection{Baselines and Assumptions}

\noindent{\bf Regret baseline:} We consider the following optimization problem:
\begin{align}
    \max_{\mathbf x} & ~  \sum_{c\in\mathcal C,j\in[J]} p_c r(c,j) x_{c,j} 
    \label{obj-fluid}\\
    \text{s.t.} & ~ \sum_{j\in[J]}  x_{c,j} = 1,   ~x_{c,j} \geq 0, \forall c \in \mathcal C, \label{arrival-fluid} \\
    & ~ \sum_{c\in\mathcal C,j\in[J]}p_cw^{(k)}(c,j) x_{c,j} \leq 0, ~\forall k \in [K]. \label{resource limit-fluid} 
\end{align}

\noindent{\bf Constraint violation baseline:} We choose zero (no violation) as our baseline. 

It worth noting the baseline we use in the regret analysis is derived from relaxed cumulative constraints instead of anytime cumulative constraints in the original problem. Since the cumulative constraint is the least restrictive constraint, a learner obtains the highest cumulative rewards in such a setting. In other words, our regret analysis is with respect to the best (the most relaxed) baseline. 

We make the following assumptions for all the results present in this paper. 
\begin{assumption}
The context $c(t)$ are i.i.d. across rounds. The mean reward $r(c, j)=\langle \theta_*, \phi(c,j) \rangle\in [0, 1]$ with $||\phi(c,j)||\leq 1, ||\theta_*|| \leq m$ for any $c \in \mathcal C,\  j \in [J].$  $\eta(t)$ is zero-mean $1$-subGaussian conditioned on $\{\mathcal F_{t-1}, A(t)\}$. \label{assumption:reward}
\end{assumption}

\begin{assumption}
The costs in the constraints 
satisfy $|W^{(k)}(c,j)|\leq 1.$   
Furthermore, we assume $\{W^{(k)}(c, j)\}_{t=1}^{T}$ are i.i.d. samples for given $c$ and $j$. \label{assumption:constraints}
\end{assumption}

\begin{assumption}[Slater's condition]
There exists $\delta > 0$ such that there exists a feasible solution $\bf x$ to optimization problem \eqref{obj-fluid}-\eqref{resource limit-fluid} that guarantees $\sum_{c\in\mathcal C,j\in[J]} p_cw^{(k)}(c,j) x_{c,j} \leq -\delta, \forall k \in [K].$ We assume $\delta\leq 1$ because if the condition holds for $\delta>1,$ it also holds for $\delta=1$.  \label{assumption:slater}
\end{assumption} 
We call $\delta$ Slater's constant because it comes from Slater's condition in optimization -- this is the constant used as an input our algorithm. This constant plays a similar role as the cost of a safe action in \cite{AmaAliThr_19,PacGhaBar_20}. In fact, a safe action guarantees the existence of a Slater's constant, and we can estimate the constant by running the safe action for a period of time. However, the existence of a Slater's constant does not require the existence of a safe action. It is also a more relaxed quantity than the safety gap in \cite{AmaAliThr_19}, which is defined under the optimal policy. Slater's constant can be from a feasible solution that is not necessarily optimal.

The next lemma shows that the optimal value of \eqref{obj-fluid}-\eqref{resource limit-fluid} is an upper bound on that of \eqref{obj-intro}-\eqref{eq:cons-intro}. The proof of this lemma can be found in Appendix \ref{app: baseline}. 
\begin{lemma} \label{lemma: fluid upper bound}
Assume $\{c(t)\}$ are i.i.d. across rounds, and $\{R(c, j)\}$ and $\{W^{(k)}(c, j)\}$ are i.i.d. samples given action $j$ and context $c.$ Let $\pi^*$ be the optimal policy to problem \eqref{obj-intro}-\eqref{eq:cons-intro} and $\mathbf x^*$ be the solution to \eqref{obj-fluid}-\eqref{resource limit-fluid} with entries $\{x_{c,j}^*\}_{c\in\mathcal C, j \in [J]}$. We have  
$$\mathbb E\left[\sum_{t=1}^{T}\sum_{j\in[J]} R(c(t), j)X^{\pi^*}_{j} (t)\right] \leq T\sum_{c\in\mathcal C,j\in[J]} p_c r(c,j) x^*_{c,j}.$$
\end{lemma}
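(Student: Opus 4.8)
The plan is to show that the optimal policy $\pi^*$ for the original problem \eqref{obj-intro}-\eqref{eq:cons-intro} induces a feasible point for the linear program \eqref{obj-fluid}-\eqref{resource limit-fluid} whose objective value equals the per-round expected reward of $\pi^*$, so that optimality of $\mathbf x^*$ gives the claimed inequality. Concretely, I would first define, for each context $c$ and action $j$, the time-averaged occupation measure
\begin{align*}
x_{c,j}^{\pi^*} = \frac{1}{T p_c} \sum_{t=1}^{T} \mathbb P\!\left(c(t)=c,\ A^{\pi^*}(t)=j\right),
\end{align*}
equivalently $p_c x_{c,j}^{\pi^*} = \frac{1}{T}\sum_{t=1}^T \mathbb P(c(t)=c, A^{\pi^*}(t)=j)$. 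The first step is to verify that $\{x_{c,j}^{\pi^*}\}$ satisfies the simplex constraints \eqref{arrival-fluid}: nonnegativity is immediate, and summing over $j$ uses that $c(t)=c$ forces exactly one action to be chosen, together with the i.i.d. assumption $\mathbb P(c(t)=c)=p_c$, giving $\sum_j p_c x_{c,j}^{\pi^*} = \frac{1}{T}\sum_t \mathbb P(c(t)=c) = p_c$, hence $\sum_j x_{c,j}^{\pi^*}=1$.

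The second step is to relate expectations under $\pi^*$ to these occupation measures. For the reward, since $R(c,j)$ is drawn i.i.d. with mean $r(c,j)=\langle\theta_*,\phi(c,j)\rangle$ and is independent of the event $\{c(t)=c, A^{\pi^*}(t)=j\}$ (the action depends only on the current context and the past, not on the current reward realization), conditioning gives
\begin{align*}
\mathbb E\!\left[\sum_{t=1}^T \sum_{j} R(c(t),j) X_j^{\pi^*}(t)\right]
= \sum_{t=1}^T \sum_{c,j} r(c,j)\, \mathbb P\!\left(c(t)=c, A^{\pi^*}(t)=j\right)
= T \sum_{c,j} p_c\, r(c,j)\, x_{c,j}^{\pi^*},
\end{align*}
and the identical manipulation for the costs $W^{(k)}$ yields $\mathbb E[\sum_{t=1}^T W^{(k)}(c(t),A^{\pi^*}(t))] = T\sum_{c,j} p_c w^{(k)}(c,j) x_{c,j}^{\pi^*}$. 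The third step invokes the original anytime constraint \eqref{eq:cons-intro} at $\tau=T$, which states exactly that the left side of the last display is $\le 0$; dividing by $T$ shows $\{x_{c,j}^{\pi^*}\}$ satisfies the cost constraints \eqref{resource limit-fluid}. Hence $\mathbf x^{\pi^*}$ is feasible for the LP, so its objective is at most the optimal value attained by $\mathbf x^*$, i.e. $\sum_{c,j} p_c r(c,j) x_{c,j}^{\pi^*} \le \sum_{c,j} p_c r(c,j) x_{c,j}^*$. Combining with the reward identity from step two and multiplying by $T$ gives the lemma.

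The routine parts are the linearity-of-expectation bookkeeping and the interchange of summation order (finite in $t$, countable in $c$; a short Tonelli/nonnegativity or absolute-summability remark handles the countable context set). The one point that needs genuine care — the main obstacle — is the independence/measurability argument in step two: one must argue that $A^{\pi^*}(t)$ is determined by $\mathcal F_{t-1}$ and $c(t)$ and is therefore conditionally independent of the fresh draws $R(c(t),\cdot)$ and $W^{(k)}(c(t),\cdot)$, so that $\mathbb E[R(c(t),j)\mid c(t)=c, A^{\pi^*}(t)=j] = r(c,j)$. This is where the i.i.d. sampling assumptions in the lemma hypothesis are actually used, and it is worth spelling out to justify pulling $r(c,j)$ and $w^{(k)}(c,j)$ out of the expectations cleanly.
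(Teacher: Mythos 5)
Your proof is correct and follows essentially the same occupation-measure argument as the paper: define $x^{\pi}_{c,j}$ as the time-averaged conditional probability of playing $j$ on context $c$, use the i.i.d./measurability structure to identify the expected reward and costs with the LP objective and constraints evaluated at this point, check feasibility, and invoke optimality of $\mathbf x^*$. The only (harmless) difference is that the paper routes through an auxiliary relaxed problem that keeps only the end-of-horizon constraint and its optimal policy $\hat\pi^*$, whereas you apply the argument directly to $\pi^*$ by using the $\tau=T$ instance of the anytime constraint --- a slight streamlining of the same idea.
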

\noindent The baseline problem \eqref{obj-fluid}-\eqref{resource limit-fluid}  is the same as the one presented in Section~\ref{sec:alg} except that the tightness constant $\epsilon = 0$ here. Any feasible solution for the tightened problem in Section~\ref{sec:alg} is a feasible solution to the baseline problem. Under Slater's condition, when $\epsilon<\delta,$ the tightened problem also has feasible solutions.

\subsection{Regret and Constraint Violation Bounds}
Given the baselines above, we now define regret and constraint violation.

\noindent{\bf Regret:} Given policy $\pi,$ we define the (pseudo)-regret of  the policy to be 
\begin{align}
\mathcal R(\tau) =& \tau \sum_{c\in\mathcal C,j\in[J]} p_c r(c,j) x^*_{c,j}  - \mathbb E\left[\sum_{t=1}^{\tau} \sum_{j\in [J]}R(c(t), j) X^{\pi}_{j}(t)\right]. \label{def:regret}
\end{align}

\noindent{\bf Constraint violation:} The constraint violation in round $\tau$ is defined to be
\begin{align}
    \mathcal V(\tau) = \sum_{k\in[K]} \left(  \mathbb E\left[\sum_{t=1}^{\tau}\sum_{j\in[J]}W^{(k)}(c(t),j)X^{\pi}_{j}(t)\right] \right)^+. \label{def:cv}
\end{align}
Note that the operator $(\cdot)^+=\max(\cdot, 0)$ is imposed so that different types of constraint violations will not be canceled out. 

\begin{theorem}
Under Assumptions \ref{assumption:reward}-\ref{assumption:slater}, the pessimistic-optimistic algorithm  
achieves the following regret and constraint violations bounds for any $\tau \in [T]:$ 
\begin{align*}
\mathcal R(\tau) \leq& \frac{60K^{3}}{\delta^3}+ \frac{4\sqrt{6}K^{0.75}\sqrt{\tau}}{\delta} + 2 +\sqrt{8d\tau \beta_{\tau}(T^{-1}) \log \left(\frac{ d + \tau}{ d}\right)},\\
\mathcal V(\tau) \leq&  K^{1.5}\left(\frac{48K^2}{\delta}\log\left(\frac{16}{\delta}\right) + \frac{24K^{1.5}}{\delta^2}+\frac{30K^{1.5}}{\delta}+ 8K - \sqrt{\tau} \right)^{+}. 
\end{align*}
where $\sqrt{\beta_{\tau}(T^{-1})} = m+\sqrt{2\log T + d \log \left(1+\tau/d\right)}.$
\label{thm:formal-UCB}
\end{theorem}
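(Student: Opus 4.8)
The plan is to use a Lyapunov-drift argument coupled with the standard LinUCB regret analysis, treating the $Q^{(k)}(t)$ process as a queue whose stability controls constraint violation and whose magnitude enters the regret bound. First I would set up the drift of the quadratic Lyapunov function $L(t) = \tfrac12\sum_k (Q^{(k)}(t))^2$. From the dual update \eqref{eq:dual-update} and the bound $|W^{(k)}|\le 1$, one gets a per-round drift inequality of the form $L(t+1)-L(t) \le \sum_k Q^{(k)}(t)\big(\sum_j W^{(k)}(c(t),j)X_j(t) + \epsilon_t\big) + C K$ for a constant $C$. The MaxValue step is exactly the arg-max of the pseudo-action-value, so by optimality of $j^*$ against the randomized baseline policy $\mathbf x^*$ (or the $\epsilon$-tightened solution), I can add and subtract $V_t$ times the reward terms to relate the drift-plus-penalty $L(t+1)-L(t) - V_t\,\mathbb E[\sum_j \hat r(c(t),j)X_j(t)]$ to $-V_t$ times the optimal fluid value plus $-\delta$-type negative terms coming from Slater's condition (Assumption~\ref{assumption:slater}), plus LinUCB estimation error $\tilde r - r$. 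This is the heart of the argument.

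Next I would sum the drift-plus-penalty inequality over $t=1,\dots,\tau$. Telescoping $L$ gives $L(\tau+1) - L(1) \ge -L(1)$, and since $L(1)=0$ this yields, after rearranging, a bound of the shape $\sum_t V_t\big(\text{optimal value} - \mathbb E[\text{reward}(t)]\big) \le \sum_t V_t\cdot(\text{LinUCB width}) + \sum_t \epsilon_t \mathbb E[\sum_k Q^{(k)}(t)] + CK\tau - (\text{Slater negative drift term involving } \delta \sum_t \mathbb E[\sum_k Q^{(k)}(t)])$. The key cancellation is choosing $\epsilon_t$ small enough relative to $\delta$ so that $\epsilon_t - \delta < 0$ dominates and the $\sum_k Q^{(k)}(t)$ terms can be discarded (or bounded) rather than accumulated; with $\epsilon_t = K^{0.75}\sqrt{6/t}$ and $V_t = \delta K^{0.25}\sqrt{2t/3}$ this is arranged so the $V_t/\epsilon_t$-type ratios produce the $K^{0.75}\sqrt\tau/\delta$ scaling. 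Dividing through by the (time-varying) $V_t$ — which requires care because $V_t$ grows like $\sqrt t$, so I would use Abel summation or simply lower-bound $V_t \ge V_1$ on a tail and handle the first $\tau'$ rounds by the trivial $\mathcal O(1)$ bound — gives the regret bound, with the LinUCB term contributing $\sqrt{8d\tau\beta_\tau \log(1+\tau/d)}$ via the standard elliptical-potential lemma and the confidence-set guarantee $\theta_* \in \mathcal B_t$ for all $t$ with probability $1-T^{-1}$ (which needs the self-normalized martingale bound, hence the $\sqrt{2\log T}$ in $\beta_t$).

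For the constraint violation bound, I would establish a bound on $\mathbb E[(Q^{(k)}(\tau))^2]$ or on $\mathbb E[\sum_k Q^{(k)}(\tau)]$ directly from the same drift analysis: the negative Slater drift $-\delta \sum_k Q^{(k)}(t)$ forces $\sum_k Q^{(k)}(t)$ to stay $\mathcal O(K^{?}/\delta \cdot \text{polylog})$-bounded in expectation, in fact to decay until it hits that floor; combined with the recursion $Q^{(k)}(\tau+1) \ge \sum_{t=1}^\tau (\sum_j W^{(k)}(c(t),j)X_j(t) + \epsilon_t)$ (since the $[\cdot]^+$ only increases the value and $Q(1)=0$), this gives $\mathbb E[\sum_{t=1}^\tau \sum_j W^{(k)}X_j(t)] \le \mathbb E[Q^{(k)}(\tau+1)] - \sum_{t=1}^\tau \epsilon_t \le \mathbb E[Q^{(k)}(\tau+1)] - c K^{0.75}\sqrt\tau$. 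Since $\sum_t \epsilon_t = \Theta(K^{0.75}\sqrt\tau)$ grows like $\sqrt\tau$ while $\mathbb E[Q^{(k)}(\tau+1)]$ is bounded by a horizon-independent constant (times $\mathrm{poly}(K)/\delta$), the right side becomes negative once $\sqrt\tau$ exceeds that constant, which is exactly the $\big(\cdots - \sqrt\tau\big)^+$ form of the claimed bound and gives zero violation for $\tau > \tau'$.

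The main obstacle I expect is the drift-plus-penalty bookkeeping with a \emph{time-varying} $V_t$ and $\epsilon_t$: unlike the fixed-$V$ Lyapunov arguments in online convex optimization, here one must handle $\sum_t (V_t - V_{t-1})$ cross terms when telescoping the penalized Lyapunov function $V_t L(t)$, control the coupling between the growing queue and the growing $V_t$, and simultaneously extract both a $\sqrt\tau$ regret term and a constant constraint-violation bound from the \emph{same} inequality by exploiting that $\epsilon_t$ and $V_t$ are tuned so that $\delta - \epsilon_t > 0$ eventually. Getting the constants ($60K^3/\delta^3$, the $\log(16/\delta)$ factor, etc.) requires carefully separating a transient regime $t \le \tau' = \Theta(\mathrm{poly}(K)/\delta^2)$ where the queue may be large from the steady regime, and bounding the exponential moment $\mathbb E[e^{\epsilon \sum_k Q^{(k)}(t)}]$ (as the paper mentions) to convert expected-violation control into the high-probability pathwise statement — that exponential-moment drift lemma for the queue is the technically delicate piece that does not appear in the cited OCO or structured-bandit work.
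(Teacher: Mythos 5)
Your overall route is the paper's route: the quadratic Lyapunov function $L(t)=\tfrac12\sum_k (Q^{(k)}(t))^2$, the drift-plus-penalty comparison of the MaxValue step against the $\epsilon_t$-tightened fluid solution (with Slater's condition supplying both feasibility of the tightened problem and the negative drift), LinUCB's confidence-ellipsoid plus elliptical-potential bound for the two reward-mismatch terms, the inequality $\sum_{t\le\tau}\sum_j W^{(k)}(c(t),j)X_j(t)\le Q^{(k)}(\tau+1)-\sum_{t\le\tau}\epsilon_t$ for the violation, and a Hajek-type exponential-moment drift lemma for controlling the dual variables. The bookkeeping issues you flag (time-varying $V_t$, the transient regime where $\epsilon_t>\delta$ producing the $60K^3/\delta^3$ term) are handled in the paper by dividing the per-round drift by $V_t$ before telescoping and by switching the comparator to the Slater point when $\epsilon_t>\delta$, which is consistent with what you sketch.

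There is, however, one concrete error in your constraint-violation argument: you assert that $\mathbb E[Q^{(k)}(\tau+1)]$ is bounded by a horizon-independent constant, and deduce zero violation because $\sum_t\epsilon_t=\Theta(K^{0.75}\sqrt\tau)$ eventually exceeds that constant. Under this algorithm the queue is \emph{not} $O(1)$: the dual price fed into MaxValue is $Q^{(k)}(t)/V_t$, so the drift of $\|\mathbf Q(t)\|_2$ only becomes negative above the threshold $\varphi_t=\frac{4(V_t+K(1+\epsilon_t^2))}{\delta}$, which itself grows like $\sqrt t$; accordingly the paper's Lemma~\ref{lemma: Q bound} gives $\mathbb E[Q^{(k)}(\tau)]\lesssim \sqrt K\bigl(\frac{4V_\tau}{\delta}+\mathrm{const}+\tau'+\sqrt K\sum_{t\le\tau'}\epsilon_t\bigr)=\Theta(K^{0.75}\sqrt\tau)$, and any attempt to prove an $O(1)$ bound would fail. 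Zero violation for large $\tau$ instead comes from a constant-factor race between two $\sqrt\tau$ terms: $\sum_{t\le\tau}\epsilon_t\approx 2K^{0.75}\sqrt{6\tau}$ must dominate $\sqrt K\cdot\frac{4V_{\tau}}{\delta}\approx 4K^{0.75}\sqrt{2\tau/3}$, and the specific choices of $\epsilon_t$ and $V_t$ are tuned so that this margin is $\Theta(K^{0.75}\sqrt\tau)$ --- the $-\sqrt\tau$ inside the $(\cdot)^+$ of the theorem is exactly this margin, together with the transient contribution $\tau'=\frac{24K^{1.5}}{\delta^2}$. So this dominance must be verified explicitly; it is not automatic from ``$\sum_t\epsilon_t\to\infty$ versus a constant.'' Relatedly, the exponential-moment lemma is not only for the pathwise Corollary~\ref{cor: tail prob}: in the paper it is the device (via Jensen) that yields the bound on $\mathbb E[\|\mathbf Q(\tau)\|_1]$ used in the expected-violation bound itself, so you should plan to prove and apply it for Theorem~\ref{thm:formal-UCB}, not defer it to the high-probability extension.
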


We make a few important observations from our theoretical results. First, for the \textit{reward regret}, we observe
$${\cal R}(\tau)=\tilde{\cal O}\left(\left(\frac{K^{0.75}}{\delta}+d\right)\sqrt{\tau}\right).$$
So the regret is independent of the number of contexts, action space $[J]$ and the dimension of cost functions $W^{(k)}(\cdot,\cdot).$  It grows sub-linearly in $\tau$ and the number of constraints $K,$ and linearly in the dimension of reward feature $d$ and the inverse of Slater's constant $\delta.$ 

Second, for the \textit{constraint violation}, we observe 
\begin{equation}
{\cal V}(\tau)=\begin{cases}
\tilde{\cal O}\left( \frac{K^{3.5}}{\delta} + \frac{K^{3}}{\delta^2} \right)&\tau\leq \tau'=\tilde{\cal O}\left(\frac{K^4}{\delta^2}+\frac{K^3}{\delta^4}\right)\\
0&\hbox{otherwise}
\end{cases}.
\end{equation}
That is, the constraint violation is {\em independent} of horizon $T$ and becomes {\em zero} when $\tau > \tau'.$ The constraint violation, however, has a strong dependence on $K$ and $\delta$ when $\tau\leq \tau'.$ This is not surprising because $K$ defines the number of constraints, and $\delta$ represents the tightness of the constraints (or size of the feasible set).

\noindent\textbf{Dependence on Slater's constant. } Both the regret and constraint violation increase in $\delta$. To see the intuition, note that $\delta$ determines the size of the feasible set for the optimization problem. A larger $\delta$ implies a larger feasible set, so it is easier to find a feasible solution, vice versa. Therefore, both regret and constraint violation increase as $\delta$ decreases because the problem becomes harder and requires more accurate learning. 

\noindent\textbf{Sharpness of the bound.}
In terms of horizon $T,$ the bounds in Theorem \ref{thm:formal-UCB} are sharp because the regret bound $\mathcal R(T)$  matches the instance-independent regret $\Omega(\sqrt{T})$ in multi-armed bandit problems without constraints \cite{AueBiaFre_95, GerLat_16} up to logarithmic factors. Furthermore, zero constraint violation is the best possible. Therefore, the bounds are sharp up to logarithmic factors in terms of horizon $T$. It is not clear whether these bounds are sharp in terms of $K$, $d$, and $\delta$, which are interesting open questions.

\subsection{A High Probability Bound on Constraint Violation}
The constraint \eqref{eq:cons-intro} defined in the original problem and the constraint violation measure defined in \eqref{def:cv} are both in terms of expectation. An interesting, related question is what the probability is for a {\em sample-path version} of the constraints to be satisfied. It turns out that our algorithm provides a high probability guarantee on that as well. The proof can be found in Appendix \ref{app: tail prob}. 
\begin{corollary}\label{cor: tail prob}
The pessimistic-optimistic algorithm guarantees that for any $\tau \geq \frac{\kappa K^5}{\delta^2}\left(\log\left(\frac{K}{\delta}\right)\right)^2$, where $\kappa$ is a positive constant independent of $\tau,$ $K,$ $\delta$ and $d$,  
$$\mathbb P\left(\sum_{t=1}^{\tau}\sum_{j\in[J]}W^{(k)}(c(t),j)X_{j}(t)>0\right)=O\left(e^{-\frac{\delta\sqrt{\tau}}{50K^{2}}}\right).$$   
\end{corollary}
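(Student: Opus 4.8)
The plan is to reduce the sample‑path violation event to a large‑deviation event for the dual variable $Q^{(k)}$, and then to control that event by a Lyapunov‑drift argument that bounds an exponential moment of $Q^{(k)}(t)$.

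\emph{Step 1 (telescoping the dual recursion).} Since $[x]^{+}\ge x$ and $Q^{(k)}(1)=0$, unrolling \eqref{eq:dual-update} gives
$$Q^{(k)}(\tau+1)\ \ge\ \sum_{t=1}^{\tau}\sum_{j\in[J]}W^{(k)}(c(t),j)X_{j}(t)+\sum_{t=1}^{\tau}\epsilon_t,$$
so that $\sum_{t=1}^{\tau}\sum_{j\in[J]}W^{(k)}(c(t),j)X_{j}(t)\le Q^{(k)}(\tau+1)-\sum_{t=1}^{\tau}\epsilon_t$. Because $\epsilon_t=K^{0.75}\sqrt{6/t}$ and $\sum_{t=1}^{\tau}t^{-1/2}\ge\sqrt{\tau}$, the subtracted term is at least $\sqrt{6}\,K^{0.75}\sqrt{\tau}$. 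Hence the event in the statement is contained in $\{Q^{(k)}(\tau+1)>\sqrt{6}\,K^{0.75}\sqrt{\tau}\}$, and it suffices to bound $\mathbb P(Q^{(k)}(\tau+1)>\sqrt{6}\,K^{0.75}\sqrt{\tau})$.

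\emph{Step 2 (exponential moment of $Q^{(k)}$ via drift).} Writing $\mathbf Q(t)=(Q^{(1)}(t),\dots,Q^{(K)}(t))$, I would show that, past the burn‑in, there is a constant $\rho=\Theta(\delta/K^{2.75})$ and a reference level $g_t=O(K^{0.25}\sqrt{t}+K/\delta)$ such that $\mathbb E[e^{\rho Q^{(k)}(t)}]\le C\,e^{\rho g_t}$ with $C=\mathrm{poly}(K/\delta)$. The ingredients are: (i) when $\|\mathbf Q(t)\|$ exceeds $g_t$, the pseudo‑action‑value maximized in the MaxValue step is dominated by the term $-\tfrac1{V_t}\sum_kW^{(k)}(c(t),j)Q^{(k)}(t)$, so comparing the chosen action to the Slater‑feasible point of Assumption~\ref{assumption:slater} (whose every constraint is $\le-\delta$), using $V_t=\delta K^{0.25}\sqrt{2t/3}$, and using the smallness of $\epsilon_t$ after the burn‑in, yields a conditional one‑step drift of $\|\mathbf Q(t)\|$ bounded above by $-\delta/2$; (ii) each coordinate of $\mathbf Q$ moves by at most $1+\epsilon_1$ per step. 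Negative drift plus bounded increments then gives, by the standard exponential‑supermartingale (geometric‑tail) argument for reflected processes, the claimed bound on the exponential moment.

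\emph{Step 3 (Chernoff bound and the burn‑in).} By Markov's inequality,
$$\mathbb P\big(Q^{(k)}(\tau+1)>\sqrt{6}\,K^{0.75}\sqrt{\tau}\big)\ \le\ C\,e^{\rho g_{\tau+1}}\,e^{-\rho\sqrt{6}\,K^{0.75}\sqrt{\tau}}\ =\ C\,e^{-\rho(\sqrt{6}\,K^{0.75}\sqrt{\tau}-g_{\tau+1})}.$$
Since $g_{\tau+1}=O(K^{0.25}\sqrt{\tau}+K/\delta)$, for large $\tau$ the exponent is of order $\delta\sqrt{\tau}/K^{2}$, and the hypothesis $\tau\ge\kappa K^5\delta^{-2}(\log(K/\delta))^2$ is exactly what makes $\rho\,K^{0.75}\sqrt{\tau}$ dominate $\log C=\Theta(\log(K/\delta))$, so the polynomial prefactor $C$ is absorbed and the bound collapses to $O(e^{-\delta\sqrt{\tau}/(50K^2)})$. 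I expect Step~2 to be the main obstacle: because $V_t$ grows like $\sqrt t$, the level at which the drift turns negative drifts upward, so the exponential moment must be controlled relative to a moving reference $g_t$ rather than being uniformly bounded; moreover, actions are chosen using the \emph{realized} costs $W^{(k)}(c(t),j)$ rather than their means $w^{(k)}$, so the Slater comparison has to be taken in conditional expectation with the induced martingale difference absorbed, and it is in pinning down this drift that the precise $K$‑ and $\delta$‑dependence of $\rho$ (hence of the final exponent and the burn‑in threshold) is determined.
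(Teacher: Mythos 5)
Your proposal follows essentially the same route as the paper's proof: telescope the dual update to reduce the sample-path violation to the event $\{Q^{(k)}(\tau+1)\geq\sum_{t\le\tau}\epsilon_t\}$, bound an exponential moment of the dual vector via a Hajek-type drift lemma (negative drift of $\|\mathbf Q(t)\|$ of order $\delta$ beyond the moving threshold $\varphi_t=\Theta(V_t/\delta)$, bounded increments), and finish with Markov's inequality, using the burn-in condition on $\tau$ to absorb the polynomial prefactor — which is exactly what the paper does in Appendices F and G. The only differences are bookkeeping-level (you tilt $Q^{(k)}$ directly rather than $\|\mathbf Q\|_1$, and your claimed tilt $\Theta(\delta/K^{2.75})$ versus the paper's effective $\Theta(\delta/K^{2.5})$), so the approach and key ideas coincide.
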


\section{ Proof of Theorem \ref{thm:formal-UCB}}\label{sec:main proof}

We first explain the intuition behind the main result. Recall that the algorithm selects action $j^*$ such that 
\begin{align*}
j^* \in \arg\max_j \left(\hat r(c(\tau),j) - \frac{1}{V_\tau}\sum_k W^{(k)}(c(\tau),j)Q^{(k)}(\tau)\right),
\end{align*} and $V_\tau=O(\sqrt{\tau}).$ Therefore, when $Q^{(k)}(\tau)=o(\sqrt{\tau}),$ the reward term dominates the cost term, and our algorithm uses LinUCB to maximize the reward.  When $Q^{(k)}(\tau)=\omega(\sqrt{\tau}),$  the cost term dominates the reward term and our algorithm focuses on reducing $Q^{(k)}$. Slater's condition implies that there exists a policy that can reduce $Q^{(k)}$ by a constant (related to $\delta$) in each round. Therefore, the algorithm takes $\tilde{\cal O}(\sqrt{\tau})$ rounds to reduce $Q^{(k)}$ to $o(\sqrt{\tau}),$ which may add $\tilde{\cal O}(\sqrt{\tau})$ to the regret during this period. The argument above also implies that $Q^{(k)}(\tau)=O(\sqrt{\tau}).$ Then, because 
\begin{align}
\mathbb E\left[\sum_{t=1}^\tau \sum_{j\in [J]}W^{(k)}(c(t),j) X_{j}(t)\right]\leq \mathbb E\left[Q^{(k)}(\tau+1)\right]-\sum_{t=1}^\tau \epsilon_t.\nonumber
\end{align}
we can further bound the constraint violation at time $\tau$ to be a constant or even zero via the bound on $\mathbb E[Q^{(k)}(\tau+1)]$ and a proper choice of $\epsilon_t.$ 

\subsection{Regret Bound}\label{sec: regret}
Now consider the regret defined in \eqref{def:regret} and define $\mathbf x^{\epsilon_t,*}$ to be the optimal solution to the tightened problem  \eqref{obj-fluid-tightened}-\eqref{resource limit-fluid-tightened} with  $\epsilon=\epsilon_t.$ We obtain the following decomposition by adding and subtracting corresponding terms: 
\begin{align} 
{\cal R}(\tau)=&\tau \sum_{c,j}p_cr(c,j)x_{c,j}^* - \mathbb E\left[\sum_{t=1}^{\tau} \sum_{j}R(c(t), j)X_{j}(t)\right] \nonumber\\
\stackrel{(a)}{=}&\tau \sum_{c,j}p_cr(c,j)x_{c,j}^* - \mathbb E\left[\sum_{t=1}^{\tau} \sum_{j}r(c(t), j)X_{j}(t)\right] \nonumber\\
=&   
\underbrace{\sum_{t=1}^\tau \sum_{c,j}p_cr(c,j)\left(x_{c,j}^*-x_{c,j}^{\epsilon_t,*}\right)}_{\text{$\epsilon_t$-tight}} 
+ \underbrace{\mathbb E\left[\sum_{t=1}^{\tau} \sum_{j} \left(r(c(t),j)-\hat r(c(t),j)\right)x_{c(t),j}^{\epsilon_t,*}\right]}_{\text{reward mismatch}}
\nonumber\\
&+\underbrace{\mathbb E\left[ \sum_{t=1}^{\tau} \sum_{j}\left(\hat r(c(t),j)x_{c(t),j}^{\epsilon_t,*} - \hat r(c(t),j)X_{j}(t)\right)\right] - \sum_{t=1}^{\tau} \frac{K(1+\epsilon_t^2)}{V_t}}_{\text{Lyapunov drift}} \label{eq:driftterm}\\
&+\underbrace{\sum_{t=1}^{\tau} \frac{K(1+\epsilon_t^2)}{V_t}}_{\text{accumulated tightness}}+ \underbrace{ \mathbb E\left[\sum_{t=1}^{\tau}\sum_{j} \left(\hat r(c(t),j)-r(c(t),j)\right)X_{j}(t)\right]}_{\text{reward mismatch}},  \label{intuition}
\end{align} where $(a)$ holds because the random reward is revealed after action $A(t)$ is taken so the noise is independent of the action. 

We next present a sequence of lemmas that bounds the terms above. The proofs of these lemmas are presented in Appendices \ref{app: drif in regret}, \ref{app: epsilon gap}, and \ref{app: linucb-r}. The key to the proof, particularly the proof of Lemma~\ref{lemma: drift in regret}, relies on the Lyapunov-drift analysis; a comprehensive introduction of the method can be found in \cite{Nee_10,SriYin_14}. 

\begin{lemma}\label{lemma: drift in regret}
Under the Pessimistic-Optimistic Algorithm, we have 
\begin{align}
\mathbb E\left[\sum_{t=1}^{\tau}\sum_{j\in [J]} \hat r(c(t),j)  \left(x^{\epsilon_t}_{c(t),j}-X_{j}(t)\right) \Big| \mathbf H(t)=\mathbf h\right] - \sum_{t=1}^{\tau}\frac{K(1+\epsilon_t^2)}{V_t} \leq \sum_{t=1}^{\tau}\frac{ Kt(\epsilon_t+\epsilon_t^2) \mathbb I(\epsilon_t > \delta)}{V_t}. \nonumber
\end{align}  
\end{lemma}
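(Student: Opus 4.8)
The plan is to establish this Lyapunov-drift bound by analyzing the one-step drift of the quadratic Lyapunov function $L(t) = \frac{1}{2}\sum_k \left(Q^{(k)}(t)\right)^2$ and then summing over $t$. First I would compute the drift $L(t+1) - L(t)$ using the dual update \eqref{eq:dual-update}. Since $Q^{(k)}(t+1) = \left[Q^{(k)}(t) + g^{(k)}(t)\right]^+$ with $g^{(k)}(t) = \sum_j W^{(k)}(c(t),j) X_j(t) + \epsilon_t$, the standard inequality $\left([a+b]^+\right)^2 \leq a^2 + 2ab + b^2$ gives the per-step bound
\begin{align*}
L(t+1) - L(t) \leq \sum_k Q^{(k)}(t)\, g^{(k)}(t) + \frac{1}{2}\sum_k \left(g^{(k)}(t)\right)^2.
\end{align*}
Because $|W^{(k)}|\leq 1$ and $\sum_j X_j(t) = 1$, the second term is bounded by $\frac{K(1+\epsilon_t)^2}{2} \leq K(1+\epsilon_t^2)$ (absorbing cross terms), which produces exactly the $\sum_t \frac{K(1+\epsilon_t^2)}{V_t}$ correction once we divide through by $V_t$ — so the natural object to track is actually $L(t)/V_t$ or, more carefully, a telescoping sum where the drift at step $t$ is weighted by $1/V_t$.

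Next I would connect the first-order drift term $\sum_k Q^{(k)}(t) g^{(k)}(t)$ to the reward comparison. The key is that the algorithm's MaxValue step chooses $X(t)$ to maximize the pseudo-action-value $\hat r(c(t),j) - \frac{1}{V_t}\sum_k W^{(k)}(c(t),j)Q^{(k)}(t)$ over $j$, hence over all distributions, and in particular it does at least as well as the fixed (randomized) choice $x^{\epsilon_t}_{c(t),\cdot}$. This yields
\begin{align*}
\sum_j \hat r(c(t),j) X_j(t) - \frac{1}{V_t}\sum_k Q^{(k)}(t)\sum_j W^{(k)}(c(t),j) X_j(t) \;\geq\; \sum_j \hat r(c(t),j) x^{\epsilon_t}_{c(t),j} - \frac{1}{V_t}\sum_k Q^{(k)}(t)\sum_j W^{(k)}(c(t),j) x^{\epsilon_t}_{c(t),j}.
\end{align*}
Rearranging, $\sum_j \hat r(c(t),j)\left(x^{\epsilon_t}_{c(t),j} - X_j(t)\right) \leq \frac{1}{V_t}\sum_k Q^{(k)}(t)\left(\sum_j W^{(k)}(c(t),j) x^{\epsilon_t}_{c(t),j} - \sum_j W^{(k)}(c(t),j) X_j(t)\right)$. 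Adding $\frac{\epsilon_t}{V_t}\sum_k Q^{(k)}(t)$ to both sides makes the right-hand side equal to $-\frac{1}{V_t}\sum_k Q^{(k)}(t) g^{(k)}(t) + \frac{2\epsilon_t}{V_t}\sum_k Q^{(k)}(t)$, at which point the drift inequality above lets me replace $-\sum_k Q^{(k)}(t) g^{(k)}(t)$ by $\frac{1}{2}\sum_k(g^{(k)}(t))^2 - (L(t+1)-L(t)) \leq K(1+\epsilon_t^2) - (L(t+1)-L(t))$.

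Then I would take conditional expectations, sum $t$ from $1$ to $\tau$, and telescope the $-(L(t+1)-L(t))/V_t$ contributions (handling the non-constant weights $1/V_t$ by Abel summation and $L\geq 0$, $L(1)=0$), which kills the Lyapunov terms and leaves $\sum_t \frac{K(1+\epsilon_t^2)}{V_t}$ on the right. The one genuinely delicate piece — and the main obstacle — is controlling the residual term $\sum_t \frac{2\epsilon_t}{V_t}\mathbb E[\sum_k Q^{(k)}(t)]$: we must argue it is at most $\sum_t \frac{Kt(\epsilon_t+\epsilon_t^2)\mathbb I(\epsilon_t>\delta)}{V_t}$. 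When $\epsilon_t \leq \delta$, Slater's condition (Assumption~\ref{assumption:slater}) provides a feasible $\mathbf x$ with expected cost $\leq -\delta \leq -\epsilon_t$, so $g^{(k)}$ has negative drift whenever $Q^{(k)}$ is large; a supermartingale/Lyapunov argument then forces $\mathbb E[Q^{(k)}(t)]$ to stay $O(K^{1.5}/\delta)$-bounded (independent of $t$), and one checks that $\sum_t \frac{\epsilon_t}{V_t}\cdot O(K^{1.5}/\delta)$ is dominated by the other regret terms or can be folded into the stated bound's zero contribution on that regime. When $\epsilon_t > \delta$ (only finitely many early rounds), one uses the crude bound $Q^{(k)}(t) \leq \sum_{s<t}(1+\epsilon_s) = O(t\epsilon_t/ \text{stuff})$ — more precisely $Q^{(k)}(t)\le \sum_{s=1}^{t-1}(1+\epsilon_s)\le t(1+\epsilon_t)$ up to constants — so $\frac{2\epsilon_t}{V_t}\sum_k Q^{(k)}(t) \leq \frac{Kt(\epsilon_t+\epsilon_t^2)}{V_t}$, matching the indicator term. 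I expect the bookkeeping around which constant is absorbed where, and making the two-regime split on $\epsilon_t$ versus $\delta$ perfectly clean, to be the fussiest part; the conceptual core (drift + greedy-beats-reference) is routine Lyapunov analysis.
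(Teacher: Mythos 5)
Your overall architecture is the paper's: the quadratic Lyapunov function $L(t)=\tfrac12\sum_k (Q^{(k)}(t))^2$, the one-step drift bound with the $K(1+\epsilon_t^2)$ term, the observation that the MaxValue step beats the reference distribution $\mathbf x^{\epsilon_t}$ in pseudo-value, the weighted telescoping with $1/V_t$, and the crude bound $Q^{(k)}(t)\le\sum_{s<t}(1+\epsilon_s)$ in the early rounds. However, there is a genuine gap in the middle of your argument: after the comparison step you claim that adding $\tfrac{\epsilon_t}{V_t}\sum_k Q^{(k)}(t)$ makes the right-hand side \emph{equal} to $-\tfrac{1}{V_t}\sum_k Q^{(k)}(t)g^{(k)}(t)+\tfrac{2\epsilon_t}{V_t}\sum_k Q^{(k)}(t)$. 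That silently discards the term $\tfrac{1}{V_t}\sum_k Q^{(k)}(t)\sum_j W^{(k)}(c(t),j)x^{\epsilon_t}_{c(t),j}$, and it leaves you with a residual $\tfrac{2\epsilon_t}{V_t}\sum_k Q^{(k)}(t)$ in \emph{every} round. The lemma you are proving has zero slack in rounds with $\epsilon_t\le\delta$ (the only allowance on the right-hand side carries the indicator $\mathbb I(\epsilon_t>\delta)$), so this residual cannot be ``folded into the other regret terms'' --- the lemma is an exact inequality, not an order-of-magnitude statement about the final regret. Moreover, your proposed rescue, that a supermartingale argument forces $\mathbb E[Q^{(k)}(t)]=O(K^{1.5}/\delta)$ uniformly in $t$, is false for this algorithm: the cost term dominates the reward term only when $\|\mathbf Q(t)\|\gtrsim V_t/\delta$, so the negative drift kicks in only above a threshold of order $V_t/\delta=\Theta(\sqrt t)$ and $\mathbb E[Q^{(k)}(t)]$ grows like $\sqrt t$ (this is exactly what Lemma~\ref{lemma: Q bound} says); the resulting sum $\sum_t\tfrac{\epsilon_t}{V_t}\mathbb E[\sum_kQ^{(k)}(t)]$ is then of the same order as the main regret term, not negligible.

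The repair is precisely the step the paper takes and that your bookkeeping skipped: keep the reference-cost term, take the conditional expectation given $\mathbf H(t)=\mathbf h$, and use that the context and costs at round $t$ are independent of $\mathbf Q(t)$, so $\mathbb E[\sum_jW^{(k)}(c(t),j)x^{\epsilon_t}_{c(t),j}\mid\mathbf H(t)=\mathbf h]=\sum_{c,j}p_cw^{(k)}(c,j)x^{\epsilon_t}_{c,j}$. With $\mathbf x^{\epsilon_t}$ chosen feasible for the tightened problem with parameter $\epsilon_t$ when $\epsilon_t\le\delta$ (and with parameter $\delta$ when $\epsilon_t>\delta$ --- note an $\epsilon_t$-tightened feasible point need not exist in that regime, a choice your write-up leaves unspecified), the grouped quantity $\sum_{c,j}p_cw^{(k)}(c,j)x^{\epsilon_t}_{c,j}+\epsilon_t$ is nonpositive when $\epsilon_t\le\delta$ and at most $\epsilon_t-\delta\le\epsilon_t$ otherwise. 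This makes the residual appear only with the indicator $\mathbb I(\epsilon_t>\delta)$, and then your crude bound on $Q^{(k)}(t)$ yields the $Kt(\epsilon_t+\epsilon_t^2)$ factor and completes the proof; the rest of your plan (drift bound, Abel summation with $L(1)=0$, $L\ge 0$, increasing $V_t$) goes through as written.
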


\begin{lemma}\label{lemma: elsilon gap}
Under Assumptions \ref{assumption:reward}-\ref{assumption:slater}, we can bound the difference between the baseline optimization problem and its tightened version: 
$$\sum_{t=1}^\tau \sum_{c,j}p_cr(c,j)\left(x_{c,j}^*-x_{c,j}^{\epsilon_t,*}\right) \leq \sum_{t=1}^{\tau}\frac{\epsilon_t}{\delta}.$$
\end{lemma}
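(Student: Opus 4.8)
\textbf{Proof proposal for Lemma~\ref{lemma: elsilon gap}.}

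The plan is to exhibit, for each round $t$, an explicit feasible point of the tightened problem \eqref{obj-fluid-tightened}--\eqref{resource limit-fluid-tightened} with $\epsilon=\epsilon_t$ that is close to the baseline optimum $\mathbf x^*$, and then compare objective values. The natural construction is a convex combination $\mathbf x^{(t)} = (1-\gamma_t)\mathbf x^* + \gamma_t \mathbf x^\dagger$, where $\mathbf x^\dagger$ is the Slater point from Assumption~\ref{assumption:slater} satisfying $\sum_{c,j}p_c w^{(k)}(c,j)x^\dagger_{c,j}\le -\delta$ for all $k$, and $\gamma_t\in[0,1]$ is a mixing weight to be chosen. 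Since both $\mathbf x^*$ and $\mathbf x^\dagger$ satisfy the simplex constraints \eqref{arrival-fluid} (which are affine and the same in both problems), so does $\mathbf x^{(t)}$. For the cost constraints, linearity in $\mathbf x$ gives, for each $k$,
\begin{align*}
\sum_{c,j}p_c w^{(k)}(c,j)x^{(t)}_{c,j} + \epsilon_t \le (1-\gamma_t)\cdot 0 + \gamma_t\cdot(-\delta) + \epsilon_t = -\gamma_t\delta + \epsilon_t,
\end{align*}
so $\mathbf x^{(t)}$ is feasible for the $\epsilon_t$-tightened problem as soon as $\gamma_t \ge \epsilon_t/\delta$. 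I would take the minimal such weight, $\gamma_t = \epsilon_t/\delta$; note that when $\epsilon_t>\delta$ this clips to $\gamma_t=1$, but the bound we are proving is only interesting (indeed only needed, cf.\ the indicator $\mathbb I(\epsilon_t>\delta)$ appearing in Lemma~\ref{lemma: drift in regret}) in the regime $\epsilon_t\le\delta$, so I will treat that case and remark that the inequality $x^*_{c,j}-x^{\epsilon_t,*}_{c,j}$ contribution is otherwise handled trivially (rewards lie in $[0,1]$).

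Next I would bound the objective gap. Since $\mathbf x^{\epsilon_t,*}$ is \emph{optimal} for the tightened problem and $\mathbf x^{(t)}$ is merely feasible for it, we have $\sum_{c,j}p_c r(c,j) x^{\epsilon_t,*}_{c,j} \ge \sum_{c,j}p_c r(c,j) x^{(t)}_{c,j}$. Therefore
\begin{align*}
\sum_{c,j}p_c r(c,j)\bigl(x^*_{c,j}-x^{\epsilon_t,*}_{c,j}\bigr) &\le \sum_{c,j}p_c r(c,j)\bigl(x^*_{c,j}-x^{(t)}_{c,j}\bigr) = \gamma_t\sum_{c,j}p_c r(c,j)\bigl(x^*_{c,j}-x^\dagger_{c,j}\bigr).
\end{align*}
Because $r(c,j)\in[0,1]$ and $\sum_j x^*_{c,j} = \sum_j x^\dagger_{c,j} = 1$ with $\sum_c p_c = 1$, the bracketed sum is at most $\sum_{c,j}p_c r(c,j)x^*_{c,j}\le 1$ (dropping the nonnegative-times-nonnegative subtracted term), hence the per-round gap is at most $\gamma_t = \epsilon_t/\delta$. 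Summing over $t=1,\dots,\tau$ gives the claimed $\sum_{t=1}^\tau \epsilon_t/\delta$.

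I do not expect a genuine obstacle here; the argument is a standard sensitivity/Slater-interpolation estimate. The only points requiring a little care are: (i) confirming that the simplex constraints are identical in both problems so the convex combination stays feasible without any adjustment — this is immediate from \eqref{arrival-fluid-tightened} vs.\ \eqref{arrival-fluid}; (ii) the clipping issue when $\epsilon_t>\delta$, which I would dispose of by noting $\epsilon_t=K^{0.75}\sqrt{6/t}$ is decreasing so this only affects finitely many early rounds and, more cleanly, that the statement as used downstream carries the indicator — but since the lemma is stated without the indicator, I would instead observe that if $\epsilon_t>\delta$ then $\epsilon_t/\delta>1\ge \sum_{c,j}p_cr(c,j)(x^*_{c,j}-x^{\epsilon_t,*}_{c,j})$ still holds trivially because the left side of that per-round inequality is at most $1$ (rewards in $[0,1]$, probabilities summing to one). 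Either way the per-round bound $\epsilon_t/\delta$ survives, and summation completes the proof.
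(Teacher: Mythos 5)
Your proof is correct and follows essentially the same route as the paper's: interpolate between $\mathbf x^*$ and the Slater point with weight $\epsilon_t/\delta$, check feasibility for the $\epsilon_t$-tightened problem, and compare objectives using optimality of $\mathbf x^{\epsilon_t,*}$ and $r(c,j)\in[0,1]$. Your extra remark on the regime $\epsilon_t>\delta$ is a minor refinement the paper leaves implicit, but otherwise the arguments coincide.
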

\begin{lemma}\label{lemma: bandits}
Under the Pessimistic-Optimistic Algorithm, LinUCB guarantees that 
\begin{align*}
\mathbb E\left[\sum_{t=1}^{\tau}\sum_{j}(\hat r(c(t),j)-r(c(t),j))X_{j}(t)\right] \leq&  1 + \sqrt{8d\tau \beta_{\tau}(T^{-1}) \log \left(\frac{ d + \tau}{d}\right)},\\
\mathbb E\left[ \sum_{t=1}^{\tau} \sum_{j}(r(c(t),j) - \hat r(c(t),j))x^{\epsilon_t,*}_{c(t),j}\right] \leq& 1.
\end{align*}
\end{lemma}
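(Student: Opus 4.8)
The plan is to run the textbook optimism-based analysis of LinUCB, paying attention only to two algorithm-specific wrinkles: the clipping $\hat r=\min\{1,\tilde r\}$, and the low-probability event on which the confidence sets fail to contain $\theta_*$. Since $\Sigma_0=\mathbf I$ (unit regularization), $\|\phi(c,j)\|\le 1$, $\|\theta_*\|\le m$, and $\eta(t)$ is $1$-subGaussian given $\{\mathcal F_{t-1},A(t)\}$ (Assumption~\ref{assumption:reward}), the self-normalized confidence bound of \cite{AbbPalSze_11} says exactly that the radius $\sqrt{\beta_{t+1}}=m+\sqrt{2\log T+d\log((d+t)/d)}$ used by the algorithm makes the event $\mathcal E:=\{\theta_*\in\mathcal B_t\ \forall\, t\in[T]\}$ hold with $\mathbb P(\mathcal E)\ge 1-T^{-1}$. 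I would establish both inequalities first on $\mathcal E$ and then absorb $\mathcal E^{c}$ into the additive constant.

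On $\mathcal E$, optimism is immediate: $\tilde r(c,j)=\max_{\theta\in\mathcal B_t}\langle\theta,\phi(c,j)\rangle\ge\langle\theta_*,\phi(c,j)\rangle=r(c,j)$, and since $r(c,j)\le 1$ the clipping does not destroy this, i.e. $\hat r(c,j)\ge\min\{1,r(c,j)\}=r(c,j)$. This already gives $\sum_{t=1}^{\tau}\sum_j(r(c(t),j)-\hat r(c(t),j))x^{\epsilon_t,*}_{c(t),j}\le 0$ on $\mathcal E$, which is the second inequality up to the bad event. For the first inequality, let $\theta^{t}\in\mathcal B_t$ attain $\tilde r(c(t),j^*)$; since $\theta^{t}$ and (on $\mathcal E$) $\theta_*$ both lie in $\mathcal B_t$, we have $\|\theta^{t}-\theta_*\|_{\Sigma_{t-1}}\le 2\sqrt{\beta_t}$, so Cauchy--Schwarz in the $\Sigma_{t-1}$-norm yields $\hat r(c(t),j^*)-r(c(t),j^*)\le\langle\theta^{t}-\theta_*,\phi(c(t),j^*)\rangle\le 2\sqrt{\beta_t}\,\|\phi(c(t),j^*)\|_{\Sigma_{t-1}^{-1}}$; combined with the trivial bound $\hat r-r\le 1$, the per-step overestimate is at most $\min\{1,\,2\sqrt{\beta_t}\,\|\phi(c(t),j^*)\|_{\Sigma_{t-1}^{-1}}\}$.

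Summing over $t\le\tau$ is the standard computation: Cauchy--Schwarz over $t$, monotonicity $\beta_t\le\beta_\tau(T^{-1})$, and the elliptical-potential (determinant) lemma $\sum_{t=1}^{\tau}\min\{1,\|\phi(c(t),j^*)\|_{\Sigma_{t-1}^{-1}}^2\}\le 2\log(\det\Sigma_\tau/\det\Sigma_0)\le 2d\log((d+\tau)/d)$ (using $\mathrm{tr}\,\Sigma_\tau\le d+\tau$ and AM--GM on its eigenvalues) together give $\sum_{t=1}^{\tau}\sum_j(\hat r(c(t),j)-r(c(t),j))X_j(t)\le\sqrt{8d\tau\beta_\tau(T^{-1})\log((d+\tau)/d)}$ on $\mathcal E$. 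On $\mathcal E^{c}$ both running sums are deterministically bounded in absolute value by $\tau\le T$, because each summand lies in $[-1,1]$ ($\hat r,r\in[0,1]$ and $\sum_j X_j(t)=\sum_j x^{\epsilon_t,*}_{c(t),j}=1$), and $\mathbb P(\mathcal E^{c})\le T^{-1}$, so taking expectations adds at most $T\cdot T^{-1}=1$ to each bound --- which is exactly the additive $1$ in the statement.

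I do not anticipate a genuine obstacle here; the only points that need care are (i) invoking the concentration inequality with the exact $\beta$-schedule so that the failure probability is $\le T^{-1}$, hence the $\mathcal E^{c}$ contribution is an absolute constant rather than something growing with $T$, and (ii) checking that clipping $\tilde r$ at $1$ preserves both $\hat r\ge r$ on $\mathcal E$ and the per-step error bound $\min\{1,\cdot\}$, so that the elliptical-potential step applies verbatim.
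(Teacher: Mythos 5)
Your proposal is correct and follows essentially the same route as the paper's proof: the confidence event from \cite{AbbPalSze_11} with the algorithm's $\beta$-schedule and $p=T^{-1}$, optimism plus clipping for the second bound, the per-step estimate $2\sqrt{\beta_t}\min\bigl\{1,\|\phi(c(t),A(t))\|_{\Sigma_{t-1}^{-1}}\bigr\}$, the elliptical potential lemma with Cauchy--Schwarz over $t$, and absorbing the failure event as $\tau\cdot T^{-1}\le 1$. The only cosmetic difference is that you bound $\|\theta^{t}-\theta_*\|_{\Sigma_{t-1}}\le 2\sqrt{\beta_t}$ directly from both points lying in $\mathcal B_t$, while the paper writes out the triangle inequality through $\hat\theta_{t-1}$ --- the same estimate.
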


Based on Lemmas \ref{lemma: drift in regret}, \ref{lemma: elsilon gap}, and \ref{lemma: bandits}, we conclude that
\begin{align*}
{\cal R}(\tau)\leq &  \sum_{t=1}^{\tau} \frac{Kt(\epsilon_t+\epsilon_t^2)\mathbb I(\epsilon_t > \delta)}{V_t} + \sum_{t=1}^{\tau}\frac{\epsilon_t}{\delta} + \sum_{t=1}^{\tau}\frac{K(1+\epsilon_t^2)}{V_t} 
+ 2 + \sqrt{8d\tau \beta_{\tau}(T^{-1}) \log \left(\frac{ d + \tau}{ d}\right)}.
\end{align*}
By choosing $\epsilon_t=K^{0.75}\sqrt{\frac{6}{t}}$ and $V_t=\delta K^{0.25}\sqrt{\frac{2t}{3}},$ we have
$$\sum_{t=1}^\tau \epsilon_t \leq 2K^{0.75}\sqrt{6\tau},~ \sum_{t=1}^\tau1/V_t\leq \frac{\sqrt{6\tau}}{K^{0.25}},~\text{and}~\sum_{t=1}^{\tau} \frac{Kt(\epsilon_t+\epsilon_t^2) \mathbb I(\epsilon_t > \delta)}{V_t}\leq \frac{60K^{3}}{\delta^3},$$ which yields the regret bound
\begin{align*}
\mathcal R(\tau) \leq& \frac{60K^{3}}{\delta^3}+ \frac{4\sqrt{6}K^{0.75}\sqrt{\tau}}{\delta} + 2 +\sqrt{8d\tau \beta_{\tau}(T^{-1}) \log \left(\frac{ d + \tau}{ d}\right)}.
\end{align*}

\subsection{Constraint Violation Bound}\label{sec: cv}
According to the dynamic defined in \eqref{eq:dual-update}, we have
\begin{align*}
Q^{(k)}(\tau+1) \geq \sum_{t=1}^\tau\sum_jW^{(k)}(c(t),j) X_{j}(t) + \sum_{t=1}^\tau  \epsilon_t,
\end{align*} where we used the fact $Q^{(k)}(0)=0.$ This it implies the constraint violation can be bounded as follows: 
\begin{align}
\mathcal V(\tau) \leq& \sum_{k}\left(\mathbb E[Q^{(k)}(\tau+1)] - \sum_{t=1}^{\tau} \epsilon_t\right)^{+}. \label{eq:cv-q} 
\end{align} 

Next, we introduce a lemma on the upper bound of $Q^{k}(\tau).$ Define $\tau'$ the first time such that $\epsilon_{\tau'} \leq \delta/2,$ that is, $\epsilon_\tau > \delta/2, \forall 1\leq \tau < \tau'.$ Note that $Q^{(k)}(\tau') \leq \sum_{t=1}^{\tau'}(1+\epsilon_t)$ because $Q^{(k)}$ can increase by at most $(1+\epsilon_t)$ during each round. 
\begin{lemma}\label{lemma: Q bound} 
For any time $\tau\in [T]$ such that $\tau \geq \tau',$ i.e., $\epsilon_\tau \leq \delta/2,$ we have
\begin{align*}
\mathbb E\left[Q^{(k)}(\tau)\right]\leq \sqrt{K}\left(\frac{48K^2}{\delta}\log\left(\frac{16K}{\delta}\right)+2K+\frac{4(V_\tau + K(1+\epsilon_\tau^2))}{\delta} + \tau' +  \sqrt{K}\sum_{t=1}^{\tau'}\epsilon_t\right).
\end{align*}
\end{lemma}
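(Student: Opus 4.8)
The plan is a Lyapunov-drift argument on the quadratic potential $L(t)=\tfrac12\sum_{k}(Q^{(k)}(t))^2$, whose output I would then upgrade from second-moment control to a bound on the first moment $\mathbb E[Q^{(k)}(\tau)]$ by means of an exponential-moment (Hajek-type) estimate on $\|\mathbf Q(t)\|$. Expanding the square in the dual update \eqref{eq:dual-update} and using $|\sum_j W^{(k)}(c(t),j)X_j(t)|\le 1$ gives $L(t+1)-L(t)\le \sum_k Q^{(k)}(t)\big(\sum_j W^{(k)}(c(t),j)X_j(t)+\epsilon_t\big)+\tfrac{K(1+\epsilon_t)^2}{2}$. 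The crux is that the cost term has a strictly negative conditional mean. Since the \textbf{MaxValue} step picks the distribution $X(t)$ over $[J]$ maximizing $\hat r(c(t),j)-\tfrac1{V_t}\sum_k W^{(k)}(c(t),j)Q^{(k)}(t)$, comparing against the deterministic Slater policy $\mathbf x^{\mathrm{Sl}}$ of Assumption~\ref{assumption:slater} (which is measurable w.r.t.\ the pre-round history and independent of the round-$t$ randomness) yields pointwise $\tfrac1{V_t}\sum_k Q^{(k)}(t)\sum_j W^{(k)}(c(t),j)X_j(t)\le \tfrac1{V_t}\sum_k Q^{(k)}(t)\sum_j W^{(k)}(c(t),j)x^{\mathrm{Sl}}_{c(t),j}+\sum_j \hat r(c(t),j)\big(X_j(t)-x^{\mathrm{Sl}}_{c(t),j}\big)$, and the last term is at most $1$ because $\hat r(c,j)\in[0,1]$. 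Multiplying by $V_t$, taking $\mathbb E[\cdot\mid\mathbf H(t)]$ over $c(t)$ and $\{W^{(k)}(c(t),j)\}$, using $\mathbb E[W^{(k)}(c(t),j)\mid c(t)=c]=w^{(k)}(c,j)$ together with Slater's condition, the comparison term contributes $-\delta\sum_k Q^{(k)}(t)$, and recombining with the $\epsilon_t$ and quadratic terms gives $\mathbb E[L(t+1)-L(t)\mid\mathbf H(t)]\le -(\delta-\epsilon_t)\sum_k Q^{(k)}(t)+V_t+\tfrac{K(1+\epsilon_t)^2}{2}$. For $t\ge\tau'$ one has $\epsilon_t\le\delta/2$, hence $\mathbb E[L(t+1)-L(t)\mid\mathbf H(t)]\le -\tfrac\delta2\sum_k Q^{(k)}(t)+\bar V_t$ with $\bar V_t:=V_t+\tfrac{K(1+\epsilon_t)^2}{2}$.

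Next I would convert this into a drift statement for $Z(t):=\|\mathbf Q(t)\|_2$. Using $\sum_k Q^{(k)}(t)\ge Z(t)$ together with the elementary inequality $a-b\le\tfrac{a^2-b^2}{2b}$ (valid for $a,b>0$), Step~1 gives $\mathbb E[Z(t+1)-Z(t)\mid\mathbf H(t)]\le -\tfrac\delta2+\tfrac{\bar V_t}{Z(t)}$ for $t\ge\tau'$; because $V_t$ increases and $\epsilon_t$ decreases, $\bar V_t\le\bar V_\tau$ on $[\tau',\tau]$, so on that window $Z(t)\ge\gamma:=\tfrac{4\bar V_\tau}{\delta}$ forces the drift below $-\tfrac\delta4$, while $|Z(t+1)-Z(t)|\le\sqrt K(1+\epsilon_t)$ is bounded. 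I would then invoke a standard drift lemma (in the style of \cite{Nee_10,SriYin_14}): for a nonnegative process with bounded increments whose conditional drift is $\le-\eta$ above a level $\gamma$ (here $\eta=\delta/4$), for a suitable $r$ of order $\delta/K^2$ one gets $\mathbb E[e^{rZ(t+1)}\mid\mathbf H(t)]\le e^{-r\eta/2}e^{rZ(t)}+e^{r(\gamma+\nu)}$, which telescopes from $\tau'$ to $\tau$ into $\mathbb E[e^{rZ(\tau)}]\le \mathbb E[e^{rZ(\tau')}]+\tfrac{4e^{r(\gamma+\nu)}}{r\eta}$. The initial term is deterministic by the observation $Q^{(k)}(\tau')\le\sum_{t=1}^{\tau'}(1+\epsilon_t)$ noted before the lemma, so $Z(\tau')\le\sqrt K\big(\tau'+\sum_{t=1}^{\tau'}\epsilon_t\big)$. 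Finally, Jensen's inequality $e^{r\mathbb E[Z(\tau)]}\le\mathbb E[e^{rZ(\tau)}]$ and $\log(a+b)\le\log 2+\log a+\log b$ turn this into $\mathbb E[Z(\tau)]\le \sqrt K\big(\tau'+\sum_{t=1}^{\tau'}\epsilon_t\big)+\tfrac{4\bar V_\tau}{\delta}+2\sqrt K+O\!\big(\tfrac1r\log\tfrac1{r\eta}\big)$, where (using $\bar V_\tau\le V_\tau+K(1+\epsilon_\tau^2)$ and $r=\Theta(\delta/K^2)$) the last term is $O\!\big(\tfrac{K^2}{\delta}\log\tfrac{K}{\delta}\big)$; then $\mathbb E[Q^{(k)}(\tau)]\le\mathbb E[\|\mathbf Q(\tau)\|_1]\le\sqrt K\,\mathbb E[Z(\tau)]$ delivers the asserted $\sqrt K(\cdots)$ bound.

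I expect the main obstacle to be precisely this passage from the second-moment drift of Step~1 to a first-moment bound on $\mathbb E[Q^{(k)}(\tau)]$: the quadratic Lyapunov inequality only controls $\mathbb E[\|\mathbf Q\|_2^2]$, but $\mathbb E[\sum_k Q^{(k)}]$ cannot be lower-bounded by $\sqrt{\mathbb E[\|\mathbf Q\|_2^2]}$, so a naive "sum the drift over rounds'' argument fails and the exponential-moment estimate genuinely becomes necessary --- this is the extra ingredient the paper emphasizes is absent in related primal-dual bandit analyses. A secondary, purely technical hurdle is that the negative-drift region ($\gamma$ scales like $\sqrt\tau$ through $V_\tau$) and the increment size ($\epsilon_t\sim 1/\sqrt t$) are time-varying; this is dispatched by replacing them with their uniform bounds $\bar V_\tau$ and $O(\sqrt K)$ on the window $[\tau',\tau]$ before applying the drift lemma. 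One should also check that $\hat r(c(t),j)\in[0,1]$, which is needed for the ``$\le 1$'' step in Step~1; this holds on the LinUCB good event $\{\theta_*\in\mathcal B_t\}$ (where $\hat r\ge r\ge 0$), and the complement is negligible since each $Q^{(k)}$ moves by $O(1)$ per round.
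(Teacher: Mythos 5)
Your proposal is correct and follows essentially the same route as the paper: a quadratic Lyapunov drift with a pointwise comparison against the Slater point to get drift $\leq -\tfrac{\delta}{2}\sum_k Q^{(k)}(t)+V_t+O(K)$, conversion to a drift bound on $\|\mathbf Q(t)\|_2$ via concavity, a Hajek-type exponential-moment lemma (the paper's Lemma in Appendix F, applied from $\tau'$ with the deterministic bound on $\|\mathbf Q(\tau')\|_2$), and then Jensen plus $\|\mathbf Q\|_1\leq\sqrt{K}\|\mathbf Q\|_2$. The only differences are constant-level bookkeeping (e.g.\ your uniform threshold $\bar V_\tau$ versus the paper's time-varying $\varphi_t$, and a $\sqrt{K}$ factor on the $\tau'$ term), which do not affect the structure or validity of the argument.
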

Based on our choices of $\epsilon_t=K^{0.75}\sqrt{\frac{6}{t}}$ and $V_t=\delta K^{0.25}\sqrt{\frac{2t}{3}},$ we obtain 
$$\tau' = \frac{24K^{1.5}}{\delta^2} ~\text{and}~\sum_{t=1}^\tau \frac{\epsilon_t}{\sqrt{K}} - \frac{4V_{\tau+1}}{\delta}\geq \sqrt{\tau} - 6K.$$ 
Note $\mathcal V(\tau) \leq K\tau'$ for any $\tau \leq \tau'.$ Combine Lemma \ref{lemma: Q bound} into \eqref{eq:cv-q}, we conclude
\begin{align*}
\mathcal V(\tau) 
\leq& K^{1.5}\left(\frac{48K^2}{\delta}\log\left(\frac{16}{\delta}\right) + \frac{24K^{1.5}}{\delta^2}+\frac{30K^{1.5}}{\delta}+ 8K - \sqrt{\tau} \right)^{+}. 
\end{align*} 

\section{Numerical Evaluations}
In this section, we present numerical evaluations of the proposed algorithm, including 1) the constrained multi-armed bandit (MAB) example studied in \cite{PacGhaBar_20}; 2) a constrained linear bandit example based on a healthcare dataset on inpatient flow routing. 

\noindent{\bf The Constrained MAB Example in \cite{PacGhaBar_20}:} As acknowledged in \cite{PacGhaBar_20}, the proposed algorithm OPA in \cite{PacGhaBar_20} suffers from high computational complexity for linear bandits so they evaluated the performance of their algorithm with classical multi-armed bandits (MAB), for which a computationally efficient algorithm, called OPA, is proposed (Lemma 5 in \cite{PacGhaBar_20}). Therefore, we compared our algorithm with OPA by considering the MAB example in \cite{PacGhaBar_20} with $K = 4$-arms where the reward and cost distributions are Bernoulli with means $\bar r = (0.1, 0.2, 0.4, 0.7)$ and $\bar c = (0, 0.4, 0.5, 0.2)$ and the total cost in each round should not exceed $0.5.$ In particular, we set parameters of our algorithms with $V_t=\sqrt{t}, \epsilon_t = 6.0/\sqrt{t}$ and UCB bonus terms $\sqrt{\log t/N(k)}$ for arm-$k$ at time $t.$   
The results are presented in Figure \ref{fig:com with OPA}, where we can see that our algorithm has significant lower regret than that under OPA while the cost constraints are satisfied under both algorithms. 
\begin{figure}[H]
\centering
\begin{subfigure}{0.32\textwidth}
    \includegraphics[width=\textwidth]{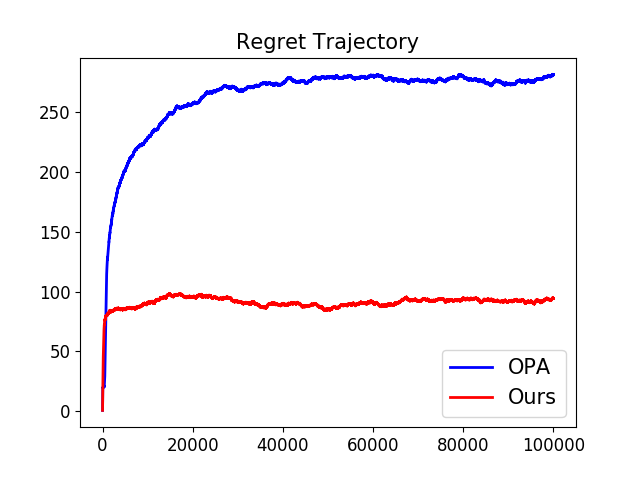}
    \caption{Regret}
    \label{fig:first}
\end{subfigure}
\hfill
\begin{subfigure}{0.32\textwidth}
    \includegraphics[width=\textwidth]{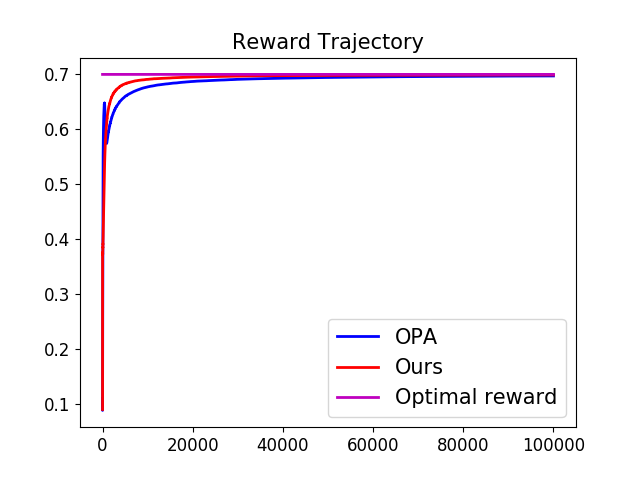}
    \caption{Reward}
    \label{fig:second}
\end{subfigure}
\hfill
\begin{subfigure}{0.32\textwidth}
    \includegraphics[width=\textwidth]{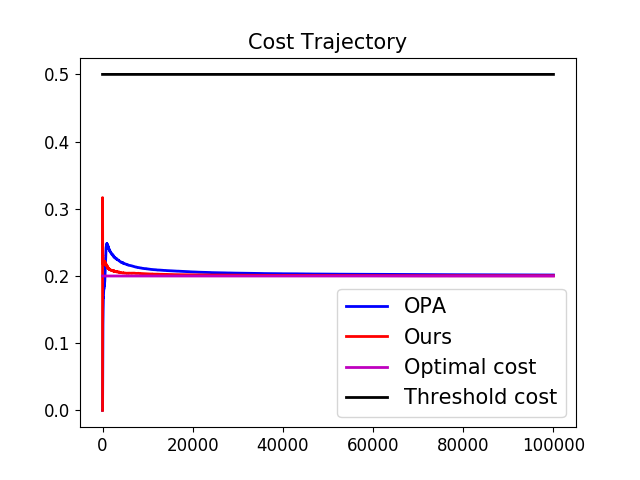}
    \caption{Cost}
    \label{fig:third}
\end{subfigure}
\caption{Our Algorithm versus OPA in \cite{PacGhaBar_20}.}
\label{fig:com with OPA}
\end{figure}

\noindent{\bf Constrained Linear Bandits for Inpatient Flow Routing:} We also evaluated our algorithm  for inpatient flow routing on a real-world dataset, where incoming patients have different features (context) such as age, gender, medical history, etc, and incur different amounts of ``rewards'' when being assigned to different wards (actions). The rewards are different because the levels of care provided by different wards match with the patients' needs differently. In this evaluation, we measured the reward via the avoided 30-day readmission penalty, i.e., a reward is collected if the patient is not readmitted to the hospital within 30 days since being discharged. Calibrating from the data we have, 
we considered three types of constraints: capacity, fairness, and resource. 
After normalizing, the capacity constraint for each ward is $[0.2, 0.2, 0.175, 0.175, 0.175, 0.175],$ the fairness requirement is $[0.175,0.175,0.15,0.15,0.125,0.125],$ and the nursing resource constraint is $[0.1875,0.1875,0.1875,0.1875,0.1875,0.1875]$, where each patient consumes ``one" unit of resources after being assigned to a ward. We note that these constraints are strict since the hospital capacity is highly constrained. 
In the experiment, we set scaling parameter $V_t = 4\sqrt{t}, \epsilon_t = 1/\sqrt{t}$ in our algorithm, and we ran various learning horizons $T = [50^2, 100^2, 150^2, 200^2, 250^2].$ 
The regrets and constraint violations at the end of the horizon are plotted in Figure \ref{fig:healthcare-regret} and \ref{fig:healthcare-violation}, which show that our algorithm achieves a low regret and zero violation. 
To further evaluate the anytime constraint violation, we plotted a representative trajectory with $T=10,000$ in Figure \ref{fig:healthcare-trace} to see how the violations evolve and if zero constraint violation can be achieved after a constant number of steps. The results show that the constraint violations decrease to zero quickly and $\tau'$ for capacity, fairness and resource constraints are $(32,50,54),$ respectively, which confirms our theoretical results on zero constraint violation. 
\begin{figure}[H]
\centering
\begin{subfigure}{0.32\textwidth}
    \includegraphics[width=\textwidth]{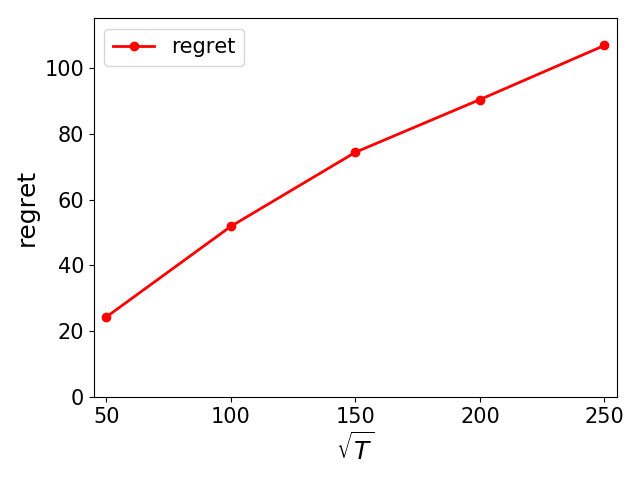}
    \caption{Regret}
    \label{fig:healthcare-regret}
\end{subfigure}
\hfill
\begin{subfigure}{0.32\textwidth}
    \includegraphics[width=\textwidth]{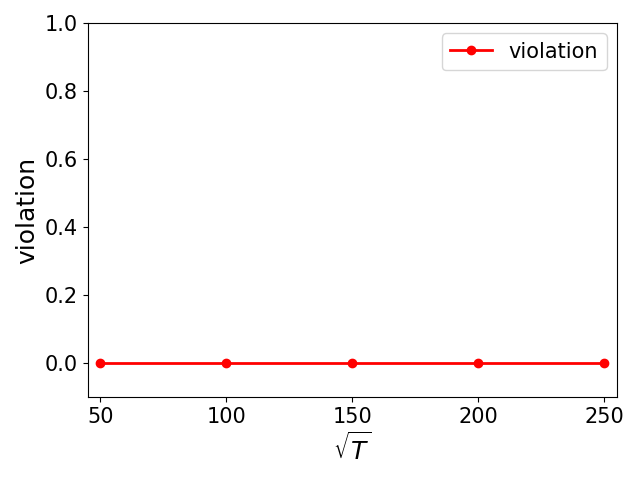}
    \caption{Violation}
    \label{fig:healthcare-violation}
\end{subfigure}
\hfill
\begin{subfigure}{0.32\textwidth}
    \includegraphics[width=\textwidth]{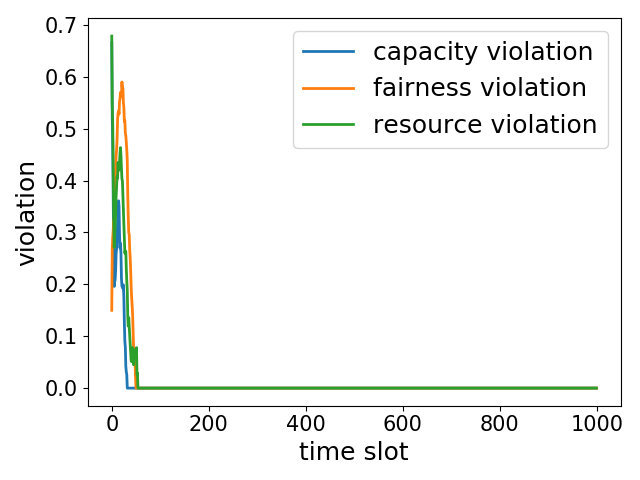}
    \caption{Trajectory of Violation}
    \label{fig:healthcare-trace}
\end{subfigure}
\caption{Regret and Violation in Inpatient  Flow  Routing.}
\label{fig:healthcare}
\end{figure}

\section{Conclusions and Extensions}
\label{sec:conclusion}
In this paper, we study stochastic linear bandits with general anytime cumulative constraints. We develop a pessimistic-optimistic algorithm that is computationally efficient and has strong guarantees on both regret and  constraint violations. We conclude this paper by mentioning an extension on the case where the cost signals $W^{(k)}(c(t),A(t))$ are revealed after action $A(t)$ is taken. However, we assume the costs can be linearly parameterized as in \cite{PacGhaBar_20}. 

\noindent{\bf Linear Cost Functions:} In this case, we assume that the costs $,W^{(k)}(c(t),A(t)),$ are not available before the action is taken but the costs are linear as in \cite{PacGhaBar_20}.  The learner observes cost \emph{after} taking action $A(t),$  $$W(c(t), A(t))= \langle \mu_{*}, \psi(c, j) \rangle + \xi(t),$$ where $\psi(c,j)\in \mathbb R^d$ is a $d$-dimensional feature vector for (context, action) pair $(c,j),$ $\mu_{*} \in \mathbb R^d$ is an unknown underlying vector to be learned, and $\xi(t)$ is a zero-mean random variable. In this case, we also obtain an estimate $\check W(c(t), j)$ of $W(c(t), j)$ with LinUCB and replace $W(c(t), j)$ with $\check W(c(t), j)$ in the steps of MaxValue and Dual Update in the Pessimisitic-Optimistic Algorithm. This variation of Pessimisitic-Optimistic Algorithm has a similar computational complexity as our main algorithm, and it can provide similar regret and constraint violation guarantees:

\begin{theorem}[Informal]
With linear costs as in \cite{PacGhaBar_20}, a variation of our algorithm
achieves ${\cal R}(\tau)=\tilde{\cal O}\left(\frac{d}{\delta}\sqrt{\tau}+\frac{d^4}{\delta^3}\right)$ for any $\tau\in [T]$ and ${\cal V}(\tau)=0$ for $\tau\geq \tau''=O\left(\frac{d^2}{\delta^4}\log^2 T\right).$\label{thm:linearcost}
\end{theorem}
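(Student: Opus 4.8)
The plan is to keep the pessimistic--optimistic template intact but add a \emph{second} LinUCB estimator for the cost. Concretely, maintain a cost design matrix $\Psi_t=\mathbf I+\sum_{s\le t}\psi(c(s),A(s))\psi^{\dag}(c(s),A(s))$ and least-squares estimate $\hat\mu_t$, and set $\check W^{(k)}(c,j)=\min\{1,\ \langle\hat\mu_t,\psi(c,j)\rangle+\beta_t^{1/2}\|\psi(c,j)\|_{\Psi_t^{-1}}\}$, an upper confidence bound on $w^{(k)}(c,j)=\langle\mu_*,\psi(c,j)\rangle$; this $\check W^{(k)}$ replaces $W^{(k)}(c(t),j)$ in both \textbf{MaxValue} and \textbf{Dual Update} (so the algorithm is optimistic about reward and pessimistic about cost, consistent with the name). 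The first step is a standard self-normalized/elliptical-potential argument, exactly as in Lemma~\ref{lemma: bandits}: on a good event $\mathcal E$ of probability $1-O(1/T)$, for every round $t$ and every $(c,j,k)$ we have $w^{(k)}(c,j)\le \check W^{(k)}(c,j)\le w^{(k)}(c,j)+2\beta_t^{1/2}\|\psi(c,j)\|_{\Psi_t^{-1}}$, and the realized widths telescope, $\sum_{t\le\tau}\beta_t^{1/2}\|\psi(c(t),A(t))\|_{\Psi_t^{-1}}=\tilde{\mathcal O}(d\sqrt\tau)$. Since $|\check W^{(k)}|\le1$ always, the complement $\mathcal E^c$ contributes only $O(1)$ to every expectation below.

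Second, handle the constraint violation. The dual update still gives $Q^{(k)}(\tau+1)\ge\sum_{t\le\tau}\sum_j\check W^{(k)}(c(t),j)X_j(t)+\sum_{t\le\tau}\epsilon_t$, and because $X_j(t)$ is $\{\mathcal F_{t-1},c(t)\}$-measurable while $W^{(k)}(c(t),j)$ is a fresh sample with mean $w^{(k)}(c(t),j)$ (Assumption~\ref{assumption:constraints}), $\mathbb E[\sum_t\sum_j W^{(k)}(c(t),j)X_j(t)]=\mathbb E[\sum_t\sum_j w^{(k)}(c(t),j)X_j(t)]\le\mathbb E[\sum_t\sum_j\check W^{(k)}(c(t),j)X_j(t)]+O(1)$ on $\mathcal E$. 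So, as in Section~\ref{sec: cv}, everything reduces to bounding $\mathbb E[Q^{(k)}(\tau)]$ by redoing the Lyapunov drift of Lemma~\ref{lemma: Q bound}: the greedy step now compares the selected action against the $\epsilon_t$-tight optimum $\mathbf x^{\epsilon_t}$ using $\check W^{(k)}$, so the effective Slater slack seen by the algorithm is $\delta-\sum_{c,j}p_c(\check W^{(k)}(c,j)-w^{(k)}(c,j))x^{\epsilon_t}_{c,j}\ge\delta-\beta_t^{1/2}\max_{c,j}\|\psi(c,j)\|_{\Psi_t^{-1}}$. Hence one first produces a round $\tau''$ after which this quantity exceeds $\delta/2$; for $t<\tau''$ the queue grows by at most $1+\epsilon_t$, and for $t\ge\tau''$ the drift is negative at rate $\Theta(\delta)$, yielding $\mathbb E[Q^{(k)}(\tau)]=\tilde{\mathcal O}(V_\tau/\delta+\tau''+d^4/\delta^3)$ and therefore ${\mathcal V}(\tau)=0$ once $\sum_{t\le\tau}\epsilon_t$ overtakes this bound, i.e.\ for $\tau\ge\tau''=O(d^2\delta^{-4}\log^2 T)$.

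Third, the regret. Use the decomposition of Section~\ref{sec: regret}; the only change is that the greedy inequality produces an extra \emph{cost-mismatch} contribution, splitting into a realized part $\sum_{t}\tfrac1{V_t}\sum_k Q^{(k)}(t)\sum_j(\check W^{(k)}(c(t),j)-w^{(k)}(c(t),j))X_j(t)$ and a ``baseline'' part $\sum_{t}\tfrac1{V_t}\sum_k Q^{(k)}(t)\sum_{c,j}p_c(\check W^{(k)}(c,j)-w^{(k)}(c,j))x^{\epsilon_t}_{c,j}$. For the realized part, combine the elliptical-potential estimate with Cauchy--Schwarz and the exponential-moment control on the dual variable that already underlies the proof of Theorem~\ref{thm:formal-UCB} (giving $\mathbb E[\sum_t(Q^{(k)}(t)/V_t)^2]=\tilde{\mathcal O}(\tau/\delta^2)$), to get $\tilde{\mathcal O}(\tfrac d\delta\sqrt\tau)$. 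For the baseline part, a uniform width bound $\max_{c,j}\beta_t^{1/2}\|\psi(c,j)\|_{\Psi_t^{-1}}=\tilde{\mathcal O}(\sqrt{d/t}/\delta)$ (valid after the warm-up, from a lower bound on $\lambda_{\min}(\Psi_t)$) gives another $\tilde{\mathcal O}(\tfrac d\delta\sqrt\tau)$, while the $t<\tau''$ rounds are absorbed into an initial-phase cost of order $d^4/\delta^3$, mirroring the $\tfrac{60K^3}{\delta^3}$ term. Together with the reward part $\tilde{\mathcal O}((\tfrac1\delta+d)\sqrt\tau)$ from the unchanged analysis, this gives ${\mathcal R}(\tau)=\tilde{\mathcal O}(\tfrac d\delta\sqrt\tau+d^4/\delta^3)$.

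The main obstacle is the baseline cost-mismatch term and, equivalently, the definition of $\tau''$: bounding $\sum_{c,j}p_c(\check W^{(k)}(c,j)-w^{(k)}(c,j))x^{\epsilon_t}_{c,j}$ requires a width bound \emph{uniform over all contexts and actions}, not just the one realized at round $t$, which the elliptical-potential lemma cannot supply; it needs a lower bound $\lambda_{\min}(\Psi_t)=\Omega(\delta^2 t/d)$ (up to logs) that holds on $\mathcal E$ after a warm-up, obtained either by a short forced-exploration phase or by a matrix-concentration argument exploiting that contexts are i.i.d.\ and the feature map is, after the warm-up, sufficiently rich. This is exactly what injects the extra powers of $d$ and the $\delta^{-4}\log^2 T$ into $\tau''$ (and into the $d^4/\delta^3$ regret term); getting those exponents to match the claimed form is delicate bookkeeping that also requires re-tuning $V_t$ and $\epsilon_t$, but the remainder of the argument is a routine variation on the proof of Theorem~\ref{thm:formal-UCB}.
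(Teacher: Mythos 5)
There is a genuine gap, and it is exactly the one you flag as your ``main obstacle.'' You estimate the cost with an \emph{upper} confidence bound ($\check W \geq w$ w.h.p.) and use it in MaxValue and the dual update. The violation side then becomes easy, but the Lyapunov-drift comparison against the $\epsilon_t$-tight (Slater) point now produces the baseline mismatch term $\sum_t \frac{Q(t)}{V_t}\sum_{c,j}p_c\bigl(\check W(c,j)-w(c,j)\bigr)x^{\epsilon_t}_{c,j}$, which involves confidence widths at context--action pairs the algorithm never plays; the elliptical-potential lemma cannot touch it. Your proposed repair --- a lower bound $\lambda_{\min}(\Psi_t)=\Omega(\delta^2 t/d)$ from i.i.d.\ contexts and a ``sufficiently rich'' feature map, or a forced-exploration warm-up --- is not available under the paper's assumptions: nothing guarantees the cost features excite all directions (the greedy policy may keep its actions in a low-dimensional subspace), and forced exploration changes the algorithm and itself incurs uncontrolled constraint violation during the warm-up. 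So the step that defines $\tau''$ and absorbs the baseline mismatch into a $d^4/\delta^3$ initial-phase cost does not go through as written; the matching of exponents you defer to ``delicate bookkeeping'' is resting on an unproven eigenvalue condition.

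The paper avoids this entirely by flipping the direction of the cost confidence bound: it uses the \emph{lower} confidence bound $\widecheck W(c,j)=\bigl(\min_{\mu\in\mathcal U_t}\langle\mu,\psi(c,j)\rangle\bigr)\big|^{1}_{-1}$, so that $\widecheck W(c,j)\leq w(c,j)$ for \emph{all} $(c,j)$ on the good event (probability $1-1/T$). In the drift (Lemma~\ref{lemma: drift analysis-W}) the troublesome term $Q\,\mathbb E\bigl[\sum_j(\widecheck W(c(t),j)-w(c(t),j))x^{\epsilon_t}_{c(t),j}\bigr]$ is then nonpositive on the good event and at most $Q/T\leq 1+\epsilon_t$ overall (since $Q(t)\leq T(1+\epsilon_t)$), so no uniform width or $\lambda_{\min}$ control is ever needed and the feasibility of $\mathbf x^{\epsilon_t}$ under the estimated costs is automatic. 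The price is paid where your approach found things easy: the queue under-counts the true cost, and the correction $\sum_t\bigl(W-\widecheck W\bigr)X_j(t)$ appears in the violation bound --- but this is evaluated only at realized actions, so the elliptical-potential lemma bounds it by $\tilde{\mathcal O}(d\sqrt\tau)$, and it is absorbed by enlarging the tightness to $\epsilon_t=4d\log(1+T)/\sqrt t$ (with $V_t=\delta d\sqrt t\log(1+T)/4$). That enlarged $\epsilon_t$ is precisely what produces the $\tilde{\mathcal O}(d\sqrt\tau/\delta)$ regret via $\sum_t\epsilon_t/\delta$ and the $\tau''=O(d^2\delta^{-4}\log^2T)$ threshold, without any exploration or feature-richness assumption. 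Your overall template (dual update on estimated costs, drift analysis, elliptical potential, retuned $\epsilon_t,V_t$) is the right one; the missing idea is choosing the optimistic (lower) cost estimate so that the only estimation error you must pay for is at the actions actually played.
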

The formal statement and the detailed analysis are in Appendix \ref{app:linearcost}.

\newpage
\bibliographystyle{plain}
\bibliography{inlab-refs}

\begin{thebibliography}{10}

\bibitem{AbbPalSze_11}
Yasin Abbasi-yadkori, D\'{a}vid P\'{a}l, and Csaba Szepesv\'{a}ri.
\newblock Improved algorithms for linear stochastic bandits.
\newblock In {\em Advances in Neural Information Processing Systems 24}. 2011.

\bibitem{AbeLon_99}
Naoki Abe and Philip~M. Long.
\newblock Associative reinforcement learning using linear probabilistic
  concepts.
\newblock In {\em Int. Conf. Machine Learning (ICML)}, 1999.

\bibitem{AgrDev_16}
Shipra Agrawal and Nikhil Devanur.
\newblock Linear contextual bandits with knapsacks.
\newblock In {\em Advances Neural Information Processing Systems (NeurIPS)},
  2016.

\bibitem{AgrDev_14}
Shipra Agrawal and Nikhil~R. Devanur.
\newblock Bandits with concave rewards and convex knapsacks.
\newblock In {\em Proceedings of the Fifteenth ACM Conference on Economics and
  Computation}. Association for Computing Machinery, 2014.

\bibitem{AgrDevLi_16}
Shipra Agrawal, Nikhil~R. Devanur, and Lihong Li.
\newblock An efficient algorithm for contextual bandits with knapsacks, and an
  extension to concave objectives.
\newblock In {\em Proc. Conf. Learning Theory (COLT)}, 2016.

\bibitem{AmaAliThr_19}
Sanae Amani, Mahnoosh Alizadeh, and Christos Thrampoulidis.
\newblock Linear stochastic bandits under safety constraints.
\newblock In {\em Advances Neural Information Processing Systems (NeurIPS)},
  pages 9256--9266, 2019.

\bibitem{AueBiaFre_95}
P.~{Auer}, N.~{Cesa-Bianchi}, Y.~{Freund}, and R.~E. {Schapire}.
\newblock Gambling in a rigged casino: The adversarial multi-armed bandit
  problem.
\newblock In {\em Proceedings of IEEE 36th Annual Foundations of Computer
  Science}, 1995.

\bibitem{Auer_03}
Peter Auer.
\newblock Using confidence bounds for exploitation-exploration trade-offs.
\newblock {\em Journal of Machine Learning Research}, 3:397–422, March 2003.

\bibitem{BadAshKle_18}
Ashwinkumar Badanidiyuru, Robert Kleinberg, and Aleksandrs Slivkins.
\newblock Bandits with knapsacks.
\newblock {\em J. ACM}, 2018.

\bibitem{BadLanSli_14}
Ashwinkumar Badanidiyuru, John Langford, and Aleksandrs Slivkins.
\newblock Resourceful contextual bandits.
\newblock In {\em Proc. Conf. Learning Theory (COLT)}, 2014.

\bibitem{CayErySri_20}
Semih Cayci, Atilla Eryilmaz, and R~Srikant.
\newblock Budget-constrained bandits over general cost and reward
  distributions.
\newblock In {\em Proceedings of Machine Learning Research}, volume 108, pages
  4388--4398, 26--28 Aug 2020.

\bibitem{CesLug_06}
Nicolo Cesa-Bianchi and Gabor Lugosi.
\newblock {\em Prediction, Learning, and Games}.
\newblock Cambridge University Press, 2006.

\bibitem{CheCueLuo_20}
Yifang Chen, Alex Cuellar, Haipeng Luo, Jignesh Modi, Heramb Nemlekar, and
  Stefanos Nikolaidis.
\newblock Fair contextual multi-armed bandits: Theory and experiments.
\newblock In {\em Proceedings of Machine Learning Research}, 2020.

\bibitem{ChuLiRey_11}
Wei Chu, Lihong Li, Lev Reyzin, and Robert Schapire.
\newblock Contextual bandits with linear payoff functions.
\newblock In {\em Proceedings of the Fourteenth International Conference on
  Artificial Intelligence and Statistics}, 2011.

\bibitem{ChuLu_06}
Fan Chung and Linyuan Lu.
\newblock Concentration inequalities and martingale inequalities: A survey.
\newblock {\em Internet Mathematics}, 3:127 -- 79, 2006.

\bibitem{ComJiaSri_15}
Richard Combes, Chong Jiang, and Rayadurgam Srikant.
\newblock Bandits with budgets: Regret lower bounds and optimal algorithms.
\newblock {\em Proc. Ann. ACM SIGMETRICS Conf.}, 43(1):245--257, 2015.

\bibitem{DanHayTho_08}
Varsha Dani, Thomas Hayes, and Sham Kakade.
\newblock Stochastic linear optimization under bandit feedback.
\newblock In {\em Proc. Conf. Learning Theory (COLT)}, 2008.

\bibitem{DegShaKoo_20}
R{\'e}my Degenne, Han Shao, and Wouter Koolen.
\newblock Structure adaptive algorithms for stochastic bandits.
\newblock In {\em Proceedings of the 37th International Conference on Machine
  Learning}, volume 119 of {\em Proceedings of Machine Learning Research},
  pages 2443--2452. PMLR, 13--18 Jul 2020.

\bibitem{ErySri_12}
Atilla Eryilmaz and R.~Srikant.
\newblock Asymptotically tight steady-state queue length bounds implied by
  drift conditions.
\newblock {\em Queueing Syst.}, 72(3-4):311--359, December 2012.

\bibitem{FerSimWan_18}
Kris~Johnson Ferreira, David Simchi-Levi, and He~Wang.
\newblock Online network revenue management using {Thompson} sampling.
\newblock {\em Operations Research}, 66(6):1586--1602, 2018.

\bibitem{GerLat_16}
S\'{e}bastien Gerchinovitz and Tor Lattimore.
\newblock Refined lower bounds for adversarial bandits.
\newblock In {\em Advances Neural Information Processing Systems (NeurIPS)},
  2016.

\bibitem{Haj_82}
B.~Hajek.
\newblock Hitting-time and occupation-time bounds implied by drift analysis
  with applications.
\newblock {\em Ann. Appl. Prob.}, pages 502--525, 1982.

\bibitem{AbbMohYad_17}
Abbas Kazerouni, Mohammad Ghavamzadeh, Yasin Abbasi~Yadkori, and Benjamin
  Van~Roy.
\newblock Conservative contextual linear bandits.
\newblock In I.~Guyon, U.~V. Luxburg, S.~Bengio, H.~Wallach, R.~Fergus,
  S.~Vishwanathan, and R.~Garnett, editors, {\em Advances in Neural Information
  Processing Systems}, volume~30, 2017.

\bibitem{KiaEil_20}
Kia Khezeli and Eilyan Bitar.
\newblock Safe linear stochastic bandits.
\newblock {\em Proceedings of the AAAI Conference on Artificial Intelligence},
  34:10202--10209, 04 2020.

\bibitem{KirLatVerSze_21}
Johannes Kirschner, Tor Lattimore, Claire Vernade, and Csaba Szepesvári.
\newblock Asymptotically optimal information-directed sampling.
\newblock {\em Arxiv preprint arXiv:2011.05944}, 2021.

\bibitem{LaiRob_85}
T.L Lai and Herbert Robbins.
\newblock Asymptotically efficient adaptive allocation rules.
\newblock {\em Advances in Applied Mathematics}, 1985.

\bibitem{LanZha_08}
John Langford and Tong Zhang.
\newblock The epoch-greedy algorithm for multi-armed bandits with side
  information.
\newblock In {\em Advances Neural Information Processing Systems (NeurIPS)},
  2008.

\bibitem{LatSze_20}
Tor Lattimore and Csaba Szepesvári.
\newblock {\em Bandit Algorithms}.
\newblock Cambridge University Press, 2020.

\bibitem{LiLiuJi_19}
F.~{Li}, J.~{Liu}, and B.~{Ji}.
\newblock Combinatorial sleeping bandits with fairness constraints.
\newblock In {\em Proc. IEEE Int. Conf. Computer Communications (INFOCOM)},
  pages 1702--1710, 2019.

\bibitem{LiChuLan_10}
Lihong Li, Wei Chu, John Langford, and Robert~E. Schapire.
\newblock A contextual-bandit approach to personalized news article
  recommendation.
\newblock In {\em Proc. Int. Conf. World Wide Web (WWW)}, page 661–670, 2010.

\bibitem{LiuLiShi_20}
Xin Liu, Bin Li, Pengyi Shi, and Lei Ying.
\newblock {POND: Pessimistic-Optimistic oNline Dispatching}.
\newblock {\em arXiv preprint arXiv:2010.09995}, 2020.

\bibitem{MahJinYan_12}
Mehrdad Mahdavi, Rong Jin, and Tianbao Yang.
\newblock Trading regret for efficiency: Online convex optimization with long
  term constraints.
\newblock {\em Journal of Machine Learning Research}, 13(81):2503--2528, 2012.

\bibitem{ShiJohYu_09}
Shie Mannor, John~N. Tsitsiklis, and Jia~Yuan Yu.
\newblock Online learning with sample path constraints.
\newblock {\em J. Mach. Learn. Res.}, 10:569–590, June 2009.

\bibitem{MorAmaAli_19}
Ahmadreza Moradipari, Sanae Amani, Mahnoosh Alizadeh, and Christos
  Thrampoulidis.
\newblock Safe linear thompson sampling.
\newblock {\em arXiv preprint arXiv:1911.02156}, 2019.

\bibitem{AhmChrMah_20}
Ahmadreza Moradipari, Christos Thrampoulidis, and Mahnoosh Alizadeh.
\newblock Stage-wise conservative linear bandits.
\newblock In H.~Larochelle, M.~Ranzato, R.~Hadsell, M.~F. Balcan, and H.~Lin,
  editors, {\em Advances in Neural Information Processing Systems}, volume~33,
  pages 11191--11201. Curran Associates, Inc., 2020.

\bibitem{Nee_16}
M.~J. {Neely}.
\newblock Energy-aware wireless scheduling with near-optimal backlog and
  convergence time tradeoffs.
\newblock {\em IEEE/ACM Transactions on Networking}, 24(4):2223--2236, 2016.

\bibitem{Nee_10}
Michael~J. Neely.
\newblock Stochastic network optimization with application to communication and
  queueing systems.
\newblock {\em Synthesis Lectures on Communication Networks}, 3(1):1--211,
  2010.

\bibitem{PacGhaBar_20}
Aldo Pacchiano, Mohammad Ghavamzadeh, Peter Bartlett, and Heinrich Jiang.
\newblock Stochastic bandits with linear constraints.
\newblock In {\em Proceedings of The 24th International Conference on
  Artificial Intelligence and Statistics}, Apr 2021.

\bibitem{RusTsi_10}
Paat Rusmevichientong and John~N. Tsitsiklis.
\newblock Linearly parameterized bandits.
\newblock {\em Mathematics of Operations Research}, 35(2):395--411, 2010.

\bibitem{SriYin_14}
R.~Srikant and Lei Ying.
\newblock {\em Communication Networks: {A}n Optimization, Control and
  Stochastic Networks Perspective}.
\newblock Cambridge University Press, 2014.

\bibitem{TirPirResLaz_20}
Andrea Tirinzoni, Matteo Pirotta, Marcello Restelli, and Alessandro Lazaric.
\newblock An asymptotically optimal primal-dual incremental algorithm for
  contextual linear bandits.
\newblock In {\em nips}, volume~33, pages 1417--1427, 2020.

\bibitem{UsmKraKam_19}
Ilnura Usmanova, Andreas Krause, and Maryam Kamgarpour.
\newblock Safe convex learning under uncertain constraints.
\newblock In {\em Proceedings of Machine Learning Research}. PMLR, 2019.

\bibitem{WatDay_92}
Christopher J. C.~H. Watkins and Peter Dayan.
\newblock Q-learning.
\newblock In {\em Machine Learning}, pages 279--292, 1992.

\bibitem{WeiYuNee_20}
Xiaohan Wei, Hao Yu, and Michael~J. Neely.
\newblock Online primal-dual mirror descent under stochastic constraints.
\newblock {\em Proc. Ann. ACM SIGMETRICS Conf.}, 4(2), June 2020.

\bibitem{Mic_79}
Michael Woodroofe.
\newblock A one-armed bandit problem with a concomitant variable.
\newblock {\em Journal of the American Statistical Association},
  74(368):799--806, 1979.

\bibitem{WuSriLiu_15}
Huasen Wu, R.~Srikant, Xin Liu, and Chong Jiang.
\newblock Algorithms with logarithmic or sublinear regret for constrained
  contextual bandits.
\newblock In {\em Advances Neural Information Processing Systems (NeurIPS)},
  2015.

\bibitem{YuNee_20}
Hao Yu and Michael~J. Neely.
\newblock A low complexity algorithm with $o(\sqrt{T})$ regret and $o(1)$
  constraint violations for online convex optimization with long term
  constraints.
\newblock {\em Journal of Machine Learning Research}, 21(1):1--24, 2020.

\bibitem{YuNeeWei_17}
Hao Yu, Michael~J. Neely, and Xiaohan Wei.
\newblock Online convex optimization with stochastic constraints.
\newblock In {\em Advances Neural Information Processing Systems (NeurIPS)},
  page 1427–1437, 2017.

\end{thebibliography}

\newpage
\appendix
\allowdisplaybreaks[0]

\section{Summary of Notation}\label{app:notation}
We summarize  notations used in the paper in Table \ref{tab}. 
\begin{table}[h]
\centering
\begin{tabular}{|c|c|}
\hline
Notation                       & Definition                                                            \\ \hline
$T$                            &  Horizon                                                \\ \hline
$K$                            & Total number of constraints                                                 \\ \hline
$\mathcal C$         & The context set                                                            \\ \hline
$c(t)$         & The context in round $t$                                                           \\ \hline
$[J]$         & The action set                                                            \\ \hline
$A(t)$         & The action taken in round $t$                                                            \\ \hline
$d$                            & The dimension of the feature space                           \\ \hline
$R(c, j)$                      & The reward received with context $c$ and action $j$                      \\ \hline
$r(c, j)$                      & $\mathbb E[R(c,j)]$                      \\ \hline
$W^{(k)}(c, j)$                & The type-$k$ cost associated with  context $c$ and action $j$   \\ \hline
$w^{(k)}(c, j)$                      & $\mathbb E[W^{(k)}(c,j)]$                      \\ \hline
$w^{(k)}(c, j)$                      & $\mathbb E[W^{(k)}(c,j)]$                      \\ \hline
$\delta$                & Slater's constant \\ \hline
$V_t$                          & A time varying weight                                \\ \hline
$Q^{(k)}(t)$                   & The dual estimation of the $k$th Lagrange multiplier in round $t$                \\ \hline
$\epsilon_t$                   & The time varying tightness parameter                    \\ \hline
$\theta_*$ and $\hat \theta_t$ & The true reward parameters and its estimation in round $t$ with $||\theta_*||\leq m$           \\ \hline
$\mathcal B_t$   & The confidence set of $\theta_*$ in round $t$         \\ \hline
$\beta_t(p)$   & The radius of confidence set in round $t$ with $\sqrt{\beta_t(p)} = m + \sqrt{2\log\frac{1}{p}+d\log(1+t/d)}$       \\ \hline
$\Sigma_t$ & The covariance  matrix in round $t$         \\ \hline
\end{tabular}
\caption{Notation Table}
\label{tab}
\end{table}

\section{Proof of Lemma \ref{lemma: fluid upper bound}}\label{app: baseline}
Recall $\{c(t)\}_{t=1}^{T}$ is i.i.d. across rounds and  $R(c, j)$ ($W^{(k)}(c, j)$) are i.i.d. samples when action $j$ is taken on context $c.$ Let $\hat \pi^*$ be the optimal policy to the following problem: 
\begin{align} 
    \max_{\pi } & ~  \mathbb E\left[\sum_{t=1}^{T}\sum_{j=1}^{J} R(c(t), j)X^\pi_{j} (t)\right] \label{obj-T-relax}
    \\
    \text{s.t.:}
    & ~ \mathbb E\left[\sum_{t=1}^{T} \sum_{j=1}^JW^{(k)}(c(t), j) X^\pi_{j}(t)\right] \leq 0,  ~ \forall  k \in [K].
    \label{resource limit-T-relax} 
\end{align}
This is a relaxed version of \eqref{obj-intro}-\eqref{eq:cons-intro} because we only impose the constraints at the end of horizon $T$ instead of in each round in \eqref{obj-intro}-\eqref{eq:cons-intro}.

We have 
\begin{align}
\mathbb E\left[\sum_{t=1}^{T}\sum_{j\in[J]} R(c(t), j)X^{\hat \pi^*}_{j} (t)\right] =& \sum_{t=1}^{T}\mathbb E\left[\mathbb E\left[\sum_{j\in[J]} R(c(t), j)X^{\hat \pi^*}_{j} (t) | \mathcal F_{t-1}\right] \right] \nonumber\\
=& \sum_{t=1}^{T}\mathbb E\left[\sum_{c\in \mathcal C, j\in[J]}p_c r(c,j) \mathbb E\left[\left.X^{\hat \pi^*}_{j} (t) \right| c(t)=c, \mathcal  F_{t-1}\right] \right] \nonumber\\
=& \mathbb E\left[\sum_{c\in \mathcal C, j\in [J]}p_c r(c, j)\sum_{t=1}^{T}\mathbb E\left[X^{\hat \pi^*}_{j} (t)|c(t)=c, \mathcal F_{t-1} \right]\right] \nonumber\\
=& \sum_{c\in \mathcal C, j\in [J]}p_c r(c, j)\sum_{t=1}^{T}\mathbb E\left[\left.X^{\hat \pi^*}_{j} (t)\right|c(t)=c\right] \nonumber
\end{align}
Similarly, we have
\begin{align}
\mathbb E\left[\sum_{t=1}^{T} \sum_{j\in[J]}W^{(k)}(c(t), j) X^{\hat \pi^{*}}_{j}(t)\right]
=&\sum_{c\in \mathcal C, j\in [J]} p_c w^{(k)}(c, j)  \sum_{t=1}^{T}\mathbb E\left[\left.X^{\hat \pi^*}_{j}(t)\right|c(t)=c\right]. \nonumber
\end{align}

Define $$x^{\hat \pi^*}_{c,j}=\frac{1}{T} \sum_{t=1}^{T}\mathbb E\left[\left.X^{\hat \pi^*}_{j} (t)\right|c(t)=c\right].$$ $x^{\hat \pi^*}_{c,j}$ is a feasible solution to \eqref{obj-fluid}-\eqref{resource limit-fluid} because $\hat{\pi}^*$ is a feasible solution to \eqref{obj-T-relax}-\eqref{resource limit-T-relax}. Therefore, we have 
\begin{align*}
\mathbb E\left[\sum_{t=1}^{T}\sum_{j=[J]} R(c(t), j)X^{\pi^*}_{j} (t)\right] \leq &\mathbb E\left[\sum_{t=1}^{T}\sum_{j\in[J]} R(c(t), j)X^{\hat \pi^*}_{j} (t)\right]  \\
= & T\sum_{c\in\mathcal C,j\in[J]} p_c r(c,j) x^{\hat \pi^*}_{c,j}\\
\leq & T\sum_{c\in\mathcal C,j\in[J]} p_c r(c,j) x^*_{c,j},
\end{align*} where the first inequality holds because \eqref{obj-T-relax}-\eqref{resource limit-T-relax} is a relaxed version of \eqref{obj-intro}-\eqref{eq:cons-intro} and the last inequality holds because ${\bf x}^{\hat \pi^*}$ is a feasible solution while ${\bf x}^{*}$ is the optimal solution. 

\section{Proof of Lemma \ref{lemma: drift in regret}}\label{app: drif in regret}
Define a Lyapunov function to be $$L(t) = \frac{1}{2}\sum_k \left(Q^{(k)}(t)\right)^2.$$ 
We first obtain the Lyapunov drift 
\begin{align*}
&L(t+1)-L(t) \\
\leq&  \sum_{k}Q^{k}(t)\left(\sum_jW^{(k)}(c(t),j) X_{j}(t)+\epsilon_t\right) + \frac{\sum_{k} \left(\sum_{j}W^{(k)}(c(t),j) X_{j}(t) +\epsilon_t\right)^2}{2}\\
\leq& \sum_{k}Q^{k}(t)\left(\sum_jW^{(k)}(c(t),j) X_{j}(t)+\epsilon_t\right) -  V_t \sum_{j} \hat r(c(t),j) X_{j}(t) \\
&+ V_t \sum_{j} \hat r(c(t),j) X_{j}(t) + K(1+\epsilon_t^2)
\end{align*}
where the first inequality holds because $$ Q^{(k)}(t+1) = \left(Q^{(k)}(t) + \sum_jW^{(k)}(c(t),j)X_{j}(t) + \epsilon_t \right)^{+}$$ and the second inequality holds under Assumption \ref{assumption:constraints}. 

We next obtain the expected Lyapunov drift conditioned on the current state $\mathbf H(t)=[\mathbf Q(t),  \hat{\mathbf r}(t)]=\mathbf h=[\mathbf Q, \hat{\mathbf r}],$ where $\{\bf Q\}(t)$ are the values of the dual estimates in round $t$ and $\hat{\mathbf r}(t)$ are the reward estimates in round $t,$ 
\begin{align}
& \mathbb E[L(t+1)-L(t)|\mathbf H(t)=\mathbf h] \nonumber\\
\leq& \mathbb E\left[\sum_{k}Q^{(k)}(t) \left(\sum_jW^{(k)}(c(t),j)X_{j}(t)+\epsilon_t\right)  -  V_t \sum_{j} \hat r(c(t),j)  X_{j}(t) \Big|\mathbf H(t)=\mathbf h\right] \nonumber\\
&+ V_t \mathbb E\left[\sum_{j} \hat r(c(t),j)X_{j}(t)\Big|\mathbf H(t)=\mathbf h\right] + K(1+\epsilon_t^2).\nonumber
\end{align}  

The following lemma bounds the first term. 
\begin{lemma}
\label{lemma: drift-inequality}
Given that $\mathbf x^{\epsilon}$ is a feasible solution to \eqref{obj-fluid-tightened}-\eqref{resource limit-fluid-tightened} and $\mathbf X(t)$ is the action under the Pessimistic-Optimistic Algorithm, we have
\begin{align}
&\mathbb E\left[\sum_{k}Q^{(k)}(t) 
\left(\sum_jW^{(k)}(c(t),j) X_{j}(t) +\epsilon\right)  -  V_t \sum_{j} \hat r(c(t),j) X_{j}(t)\Big|\mathbf H(t)=\mathbf h\right] 
\nonumber\\
\leq&\mathbb E\left[\sum_{k}Q^{(k)}(t)
\left(\sum_jW^{(k)}(c(t),j) x^{\epsilon}_{c(t),j}+\epsilon\right)  -  V_t \sum_{j}\hat r(c(t),j) x^{\epsilon}_{c(t),j}\Big|\mathbf H(t)=\mathbf h\right] \nonumber
\end{align}
\end{lemma}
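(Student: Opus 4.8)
The plan is to read the claimed inequality as the statement that the action chosen by the MaxValue step is the pointwise minimizer of a certain ``pseudo-cost'' over actions, so that any probability distribution over actions --- in particular the one given by $\mathbf x^{\epsilon}$ --- can only do worse. Conditioning on $\mathbf H(t)=\mathbf h$ freezes $\mathbf Q(t)$ and the reward estimates $\hat r(\cdot,\cdot)$ while leaving $c(t)$ and the realized costs $\{W^{(k)}(c(t),j)\}_{k,j}$ random. For a realized context $c(t)=c$ and realized costs, I would introduce the pseudo-cost $\tilde g_t(j):=\sum_k Q^{(k)}(t)\big(W^{(k)}(c,j)+\epsilon\big)-V_t\hat r(c,j)$; then the left- and right-hand sides of the lemma are exactly $\mathbb E\big[\sum_j X_j(t)\tilde g_t(j)\,\big|\,\mathbf h\big]$ and $\mathbb E\big[\sum_j x^{\epsilon}_{c(t),j}\tilde g_t(j)\,\big|\,\mathbf h\big]$ (using $\sum_j X_j(t)=\sum_j x^{\epsilon}_{c,j}=1$ to absorb the $+\epsilon$ terms), so it suffices to establish the inequality inside the expectations pointwise and then take conditional expectations.

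For the pointwise bound, I would first observe that the constant $\epsilon\sum_k Q^{(k)}(t)$ does not depend on $j$, so the MaxValue rule $j^*\in\arg\max_j\big(\hat r(c,j)-\tfrac1{V_t}\sum_k W^{(k)}(c,j)Q^{(k)}(t)\big)$ also satisfies $j^*\in\arg\min_j\tilde g_t(j)$; since $X_j(t)=\mathbb I(j=j^*)$, this gives $\sum_j X_j(t)\tilde g_t(j)=\min_j\tilde g_t(j)$. Next, feasibility of $\mathbf x^{\epsilon}$ to \eqref{obj-fluid-tightened}--\eqref{resource limit-fluid-tightened} --- specifically the simplex constraint \eqref{arrival-fluid-tightened} --- makes $(x^{\epsilon}_{c,j})_{j\in[J]}$ a probability distribution on $[J]$ for each $c$, hence $\min_j\tilde g_t(j)\le\sum_j x^{\epsilon}_{c,j}\tilde g_t(j)$. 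Chaining these two facts gives the desired pointwise inequality for every realization of $c(t)$ and the costs, and taking $\mathbb E[\,\cdot\mid\mathbf H(t)=\mathbf h]$ together with monotonicity of conditional expectation completes the argument.

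The hard part is essentially nonexistent: this is a one-line greedy argument. The only points requiring care are the conditioning bookkeeping --- keeping straight that $\mathbf Q(t)$ and $\hat r(\cdot,\cdot)$ are measurable with respect to $\mathbf h$ whereas $c(t)$ and the realized $W^{(k)}(c(t),j)$ are not --- and the observation that the resource/budget part \eqref{resource limit-fluid-tightened} of the feasibility of $\mathbf x^{\epsilon}$ plays no role in this lemma; it enters, together with Slater's condition, only when this drift inequality is later combined in the proof of Lemma~\ref{lemma: drift in regret}.
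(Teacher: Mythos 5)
Your proposal is correct and follows essentially the same argument as the paper: conditioned on $\mathbf H(t)=\mathbf h$, the MaxValue action pointwise maximizes $V_t\hat r(c,j)-\sum_k Q^{(k)}W^{(k)}(c,j)$ for each realized context and cost matrix, hence dominates the convex combination given by $x^{\epsilon}_{c,\cdot}$ (using only the simplex constraint \eqref{arrival-fluid-tightened}), and taking conditional expectation finishes the proof. The paper writes this out explicitly via the joint distribution $p_{c,W}$ and the per-realization maximizer, but that is only a notational difference from your pointwise-inequality-plus-monotonicity phrasing.
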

\begin{proof} Since $\mathbf x^{\epsilon}$ is a feasible solution to \eqref{obj-fluid-tightened}-\eqref{resource limit-fluid-tightened}, we have $\sum_{j} x^{\epsilon}_{c,j}=1, \forall c.$ Define $$\mathbb P\left(c(t)=c, \mathbf W(c(t))=W\right)=p_{c,W},$$ where $\mathbf W(c(t))$ is a $K\times J$ matrix such that $\mathbf W_{k,j}(c(t))=W^{(k)}(c(t),j),$ 
and
 $$j^*_c \in \argmax_j V_t\hat r(c,j) - \sum_k Q^{(k)} W^{(k)}(c,j).$$ 

Conditioned on $\mathbf H(t)=\mathbf h,$
we have
\begin{align}
&E\left[V_t \sum_{j}\hat r(c(t),j) x^{\epsilon}_{c(t),j}-\sum_{k}Q^{(k)}(t)
\left(\sum_jW^{(k)}(c(t),j) x^{\epsilon}_{c(t),j} +\epsilon\right)  \Big|\mathbf H(t)=\mathbf h\right]\nonumber\\
=&\sum_{c,W} p_{c,W} \sum_j\left(V_t\hat r(c,j)  -\sum_{k}Q^{(k)}W_{k,j}\right)x^{\epsilon}_{c,j} - \sum_{k} Q^{(k)} \epsilon \nonumber \\
\leq_{(a)}& \sum_{c,W} p_{c,W} \left(V_t\hat r(c,j^*_c)  -\sum_{k}Q^{(k)}W_{k,j^*_c}\right)- \sum_{k} Q^{(k)}\epsilon \nonumber \\
=_{(b)}& \mathbb E\left[ \sum_j\left(V_t \hat r(c(t),j)-\sum_{k}Q^{(k)}(t)W^{(k)}(c(t),j)\right)X_j(t) \Big| \mathbf H(t) = \mathbf h \right] - \sum_{k}Q^{(k)}\epsilon  \nonumber \\
=& \mathbb E\left[ \sum_{j} V_t \hat r(c(t),j) X_{j}(t)-\sum_{k}Q^{(k)}(t)\left(\sum_{j}W^{(k)}(c(t),j) X_{j}(t) + \epsilon\right) \Big| \mathbf H(t) = \mathbf h \right], \nonumber 
\end{align}
where inequality $(a)$ holds because $\sum_j  x^{\epsilon}_{c,j}=1, \forall c$ and the action $j^*_c$ maximizes $V_t\hat r(c,j) - \sum_k Q^{(k)} W^{(k)}(c,j);$ and equality $(b)$ holds because of the definition of $X_{j}(t)$ in the Pessimistic-Optimistic Algorithm.
\end{proof}

Let $\mathbf x^{\epsilon_t}$ be a feasible solution to the tightened problem  \eqref{obj-fluid-tightened}-\eqref{resource limit-fluid-tightened} with $\epsilon=\epsilon_t$ if  $\epsilon_t\leq\delta;$ otherwise let $\mathbf x^{\epsilon_t}$ be a feasible solution to the tightened problem \eqref{obj-fluid-tightened}-\eqref{resource limit-fluid-tightened} with $\epsilon=\delta$.
From the lemma above, we can conclude
\begin{align}
& \mathbb E[L(t+1)-L(t)|\mathbf H(t)=\mathbf h] \nonumber\\
\leq& \mathbb E\left[\sum_{k}Q^{(k)}(t)
\left(\sum_jW^{(k)}(c(t),j) x^{\epsilon_t}_{c(t),j}+\epsilon_t\right)\Big|\mathbf H(t)=\mathbf h\right] \mathbb I(\epsilon_t \leq \delta) \nonumber\\
&+ \mathbb E\left[\sum_{k}Q^{(k)}(t)
\left(\sum_jW^{(k)}(c(t),j) x^{\epsilon_t}_{c(t),j}+\delta\right)\Big|\mathbf H(t)=\mathbf h\right] \mathbb I(\epsilon_t > \delta) \nonumber\\
&+ V_t \mathbb E\left[\sum_{j}\hat r(c(t),j) (X_{j}(t)-x^{\epsilon_t}_{c(t),j})\Big|\mathbf H(t)=\mathbf h\right] + K(1+\epsilon_t^2) + \sum_kQ^{(k)}(\epsilon_t -\delta)\mathbb I(\epsilon_t > \delta) \nonumber\\
=& \sum_{k}Q^{(k)}
\left(\sum_{j,c}p_cw^{(k)}(c,j) x^{\epsilon_t}_{c,j}+\epsilon_t\right) \mathbb I(\epsilon_t \leq \delta) \nonumber\\
&+ \sum_{k}Q^{(k)}
\left(\sum_{j,c}p_cw^{(k)}(c,j) x^{\epsilon_t}_{c,j} +\delta\right) \mathbb I(\epsilon_t > \delta) \nonumber\\
&+ V_t \mathbb E\left[\sum_{j}\hat r(c(t),j) (X_{j}(t)-x^{\epsilon_t}_{c(t),j})\Big|\mathbf H(t)=\mathbf h\right] + K(1+\epsilon_t^2) + \sum_kQ^{(k)}(\epsilon_t -\delta)\mathbb I(\epsilon_t > \delta), \nonumber
\end{align}
where the first inequality holds from Lemma \ref{lemma: drift-inequality} and 
the second equality holds because $W^{(k)}(c(t),j)$ are independent. 

From the definition of $\mathbf x^{\epsilon_t},$ we know that $\mathbf x^{\epsilon_t}$ is a feasible solution to \eqref{obj-fluid-tightened}-\eqref{resource limit-fluid-tightened} so both $\sum_{j,c}p_cw^{(k)}(c,j) x^{\epsilon_t}_{c,j} +\epsilon_t$ and $\sum_{j,c}p_cw^{(k)}(c,j) x^{\epsilon_t}_{c,j} +\delta$ are negative for any $k\in[K].$ Therefore, we have the following lemma.
\begin{lemma}\label{lemma: drift analysis}
Let $\mathbf x^{\epsilon_t}$ be a feasible solution to the tightened problem  \eqref{obj-fluid-tightened}-\eqref{resource limit-fluid-tightened} with $\epsilon=\epsilon_t$ if  $\epsilon_t\leq\delta;$ otherwise let $\mathbf x^{\epsilon_t}$ be a feasible solution to the tightened problem  \eqref{obj-fluid-tightened}-\eqref{resource limit-fluid-tightened} with $\epsilon=\delta$. The expected Lyapunov drift satisfies
\begin{align}
\mathbb E[L(t+1)-L(t)|\mathbf H(t)=\mathbf h] \leq& -  V_t \mathbb E\left[\sum_{j\in [J]} \hat r(c(t),j)  \left(x^{\epsilon_t}_{c(t),j}-X_{j}(t)\right) \Big| \mathbf H(t)=\mathbf h\right] \nonumber\\
&+ K(1+\epsilon_t^2) +\sum_k Q^{(k)} (\epsilon_t-\delta)\mathbb I(\epsilon_t>\delta)  . \nonumber
\end{align}   \hfill{$\square$}
\end{lemma}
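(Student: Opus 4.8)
The plan is to assemble Lemma~\ref{lemma: drift analysis} from the one-step drift of the Lyapunov function $L(t)=\tfrac12\sum_k (Q^{(k)}(t))^2$ together with Lemma~\ref{lemma: drift-inequality}, after which only a sign argument remains. First I would start from the dual update \eqref{eq:dual-update} and the elementary bound $([a]^+)^2\le a^2$ to obtain the pointwise estimate
\[
L(t+1)-L(t)\le \sum_k Q^{(k)}(t)\Bigl(\sum_j W^{(k)}(c(t),j)X_j(t)+\epsilon_t\Bigr)+\tfrac12\sum_k\Bigl(\sum_j W^{(k)}(c(t),j)X_j(t)+\epsilon_t\Bigr)^2 .
\]
Under Assumption~\ref{assumption:constraints} we have $|W^{(k)}(c,j)|\le1$ and $\mathbf X(t)$ is a one-hot vector, so each squared summand is at most $(1+\epsilon_t)^2\le 2(1+\epsilon_t^2)$, bounding the quadratic term by $K(1+\epsilon_t^2)$. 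Adding and subtracting $V_t\sum_j\hat r(c(t),j)X_j(t)$ then puts the drift in exactly the shape to which Lemma~\ref{lemma: drift-inequality} applies.

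Next I would take the conditional expectation given $\mathbf H(t)=\mathbf h$ and invoke Lemma~\ref{lemma: drift-inequality}. When $\epsilon_t\le\delta$ I apply it with $\epsilon=\epsilon_t$ and a feasible solution $\mathbf x^{\epsilon_t}$ of the $\epsilon_t$-tightened LP \eqref{obj-fluid-tightened}--\eqref{resource limit-fluid-tightened}, which exists since $\epsilon_t<\delta$ and Slater's condition (Assumption~\ref{assumption:slater}) holds. When $\epsilon_t>\delta$ the $\epsilon_t$-tightened LP may be infeasible, so I first rewrite $\epsilon_t=\delta+(\epsilon_t-\delta)$, peeling off the term $\sum_k Q^{(k)}(t)(\epsilon_t-\delta)$, and then apply Lemma~\ref{lemma: drift-inequality} with $\epsilon=\delta$ against a $\delta$-tightened feasible point $\mathbf x^{\epsilon_t}$. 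Either way $X_j(t)$ gets replaced by the deterministic $x^{\epsilon_t}_{c(t),j}$ inside the expectation. Because, conditionally on $\mathbf H(t)=\mathbf h$, the pair $(c(t),\mathbf W(c(t)))$ still carries its unconditional law $p_{c,W}$ (as used in the proof of Lemma~\ref{lemma: drift-inequality}) and $\mathbf x^{\epsilon_t}$ is nonrandom, the conditional expectation of $\sum_j W^{(k)}(c(t),j)x^{\epsilon_t}_{c(t),j}$ collapses to $\sum_{c,j}p_c w^{(k)}(c,j)x^{\epsilon_t}_{c,j}$, the left side of the tightened resource constraint.

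The finishing step is the sign argument: feasibility of $\mathbf x^{\epsilon_t}$ for the relevant tightened LP gives $\sum_{c,j}p_c w^{(k)}(c,j)x^{\epsilon_t}_{c,j}+\epsilon_t\le0$ (resp.\ $+\delta\le0$) for every $k$, and since $Q^{(k)}(t)\ge0$ the corresponding terms $Q^{(k)}(t)\bigl(\sum_{c,j}p_c w^{(k)}(c,j)x^{\epsilon_t}_{c,j}+\epsilon_t\bigr)$ are nonpositive and can be discarded. What survives is $-V_t\,\mathbb E\bigl[\sum_j\hat r(c(t),j)(x^{\epsilon_t}_{c(t),j}-X_j(t))\mid\mathbf H(t)=\mathbf h\bigr]+K(1+\epsilon_t^2)+\sum_k Q^{(k)}(t)(\epsilon_t-\delta)\mathbb I(\epsilon_t>\delta)$, which is exactly the asserted drift bound.

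I expect the main obstacle to be the bookkeeping in the regime $\epsilon_t>\delta$: there one cannot test the drift directly against an $\epsilon_t$-feasible point, so the split $\epsilon_t=\delta+(\epsilon_t-\delta)$ is forced, and one must check that this produces precisely the residual $\sum_k Q^{(k)}(t)(\epsilon_t-\delta)$ — no extra slack — while still being able to drop the remaining $Q^{(k)}$-weighted term via $\delta$-feasibility. Everything else is the standard Lyapunov-drift manipulation plus the routine observation that the round-$t$ context and costs are independent of the current state, so conditioning on $\mathbf H(t)$ does not distort the distribution of $(c(t),\mathbf W(c(t)))$.
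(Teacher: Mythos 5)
Your proposal is correct and follows essentially the same route as the paper: the same one-step bound on $L(t+1)-L(t)$ with the quadratic term controlled by $K(1+\epsilon_t^2)$, the same invocation of Lemma~\ref{lemma: drift-inequality} with $\epsilon=\epsilon_t$ or $\epsilon=\delta$ depending on whether $\epsilon_t\le\delta$, the same use of independence of $(c(t),\mathbf W(c(t)))$ from $\mathbf H(t)$ to collapse the conditional expectation, and the same sign argument dropping the nonpositive $Q^{(k)}$-weighted feasibility terms while retaining the residual $\sum_k Q^{(k)}(\epsilon_t-\delta)\mathbb I(\epsilon_t>\delta)$. No gaps beyond those already present in the paper's own treatment.
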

Taking expectation with respect to $\mathbf H(t),$ dividing $V_t$ on both sides, and doing the telescope summation across rounds up to $\tau$ lead to 
\begin{align}
&\mathbb E\left[\sum_{t=1}^\tau \sum_{j} \hat r(c(t),j)  \left(x^{\epsilon_t}_{c(t),j}-X_{j}(t)\right) \right] \nonumber\\
&\leq \frac{\mathbb E[L(1)]}{V_1}-\frac{\mathbb E[L(\tau+1)]}{V_\tau} + \sum_{t=1}^\tau \frac{K(1+\epsilon^2_t)}{V_t} + \sum_{t=1}^\tau\frac{\sum_k Q^{(k)} (\epsilon_t-\delta)\mathbb I(\epsilon_t>\delta)}{V_t} \nonumber\\
&\leq \sum_{t=1}^\tau \frac{K(1+\epsilon^2_t)}{V_t}+\sum_{t=1}^\tau\frac{\sum_k Q^{(k)} (\epsilon_t-\delta)\mathbb I(\epsilon_t>\delta)}{V_t},\nonumber
\end{align}  where the last inequality holds because $L(0)=0$ and $L(\tau) \geq 0, \forall \tau \geq 0.$ The proof is completed by applying the following inequality 
\begin{align}
\mathbb E\left[\sum_k Q^{(k)}(t)\epsilon_t\mathbb I(\epsilon_t > \delta)\right] \leq Kt(\epsilon_t+\epsilon_t^2) \mathbb I(\epsilon_t > \delta). \nonumber
\end{align}
\section{Proof of Lemma \ref{lemma: elsilon gap}}
\label{app: epsilon gap}
Recall that $\mathbf x^*$ is the optimal solution to optimization problem \eqref{obj-fluid}-\eqref{resource limit-fluid}, so we have 
\begin{align*}
&\sum_{j\in [J]}  x_{c,j}^*=1, ~x_{c,j}^* \geq 0, \forall c \in \mathcal C, j\in [J],\\
&\sum_{c,j}p_cw^{(k)}(c,j) x_{c,j}^* \leq 0, ~\forall k \in \mathcal K. 
\end{align*}
Under Assumption \ref{assumption:slater}, there exists $\mathbf x^{\text{in}}$ such that \begin{align*}
&\sum_{j\in [J]}  x_{c,j}^{\text{in}}=1, ~x_{c,j}^* \geq 0, \forall c \in \mathcal C, j\in [J],\\
&\sum_{c,j}p_cw^{(k)}(c,j) x_{c,j}^{\text{in}} \leq -\delta, ~\forall k \in \mathcal K. 
\end{align*}
We now define $\mathbf x^{\epsilon_t} = \left(1-\frac{\epsilon_t}{\delta}\right) \mathbf x^* + \frac{\epsilon_t}{\delta} \mathbf x^{\text{in}}.$ We have 
\begin{align*}
&\sum_{j} x^{\epsilon_t}_{c,j} = \left(1-\frac{\epsilon_t}{\delta}\right) \sum_{j} x_{c,j}^* + \frac{\epsilon_t}{\delta}  \sum_{j} x_{c,j}^{\text{in}} = 1,
\end{align*}
and 
\begin{align*}
\sum_c p_c\sum_jw^{(k)}(c,j) x^{\epsilon_t}_{c,j} = \sum_c p_c\sum_jw^{(k)}(c,j) \left[\left(1-\frac{\epsilon_t}{\delta}\right)x_{c,j}^* +  \frac{\epsilon_t}{\delta} x_{c,j}^{\text{in}} \right] \leq -\epsilon_t, ~\forall k \in \mathcal K. 
\end{align*}
Therefore, $\mathbf x^{\epsilon_t}$ is a feasible solution to the tightened optimization problem \eqref{obj-fluid-tightened} - \eqref{resource limit-fluid-tightened} with $\epsilon=\epsilon_t.$ 

Recall that $\mathbf x^{\epsilon_t,*}$ is an optimal solution to the same tightened problem, so
we have
\begin{align*} 
\sum_{c\in\mathcal C,j\in [J]} p_c r(c,j) (x^*_{c,j} -x^{\epsilon_t,*}_{c,j}) 
\stackrel{(a)}{\leq} & \sum_{c,j}p_c r(c,j) \left(x^*_{c,j} - \left(1-\frac{\epsilon_t}{\delta}\right)  x^*_{c,j} - \frac{\epsilon_t}{\delta} x_{c,j}^{\text{in}}\right) \\
\stackrel{(b)}{\leq} & \sum_{c,j}p_c r(c,j) \left(x^*_{c,j} - \left(1-\frac{\epsilon_t}{\delta}\right)  x^*_{c,j}\right) \\
\leq & \sum_{c,j}p_c \left(x^*_{c,j}- \left(1-\frac{\epsilon_t}{\delta}\right)  x^*_{c,j} \right) \\
= & \frac{\epsilon_t}{\delta}\sum_{c,j}p_c x^*_{c,j} = \frac{\epsilon_t}{\delta}
\end{align*}
where (a) holds because $\mathbf x^{\epsilon_t,*}$ is the optimal solution and $\mathbf x^{\epsilon_t}$ is a feasible solution; (b) holds because $x_{c,j}^{\text{in}}\geq 0, \forall c,j.$

\section{Proof of Lemma \ref{lemma: bandits}}\label{app: linucb-r}
To prove Lemma \ref{lemma: bandits}, we first present two important results on ``self-normalized bound for vector-valued martingales'' and ``Confidence Ellipsoid"  from \cite{AbbPalSze_11}.
\begin{lemma}[Theorem 1 in \cite{AbbPalSze_11}]
Let $\{F_t\}_{t=0}^{\infty}$ be a filtration. Let $\{\eta_t\}_{t=1}^{\infty}$ be a real-valued stochastic process such that $\eta_t$ is $F_t$-measurable and $\eta_t$ is conditionally $1$-sub-Gaussian. Let $\{\phi_t\}_{t=1}^{\infty}$ be an $\mathbb R^d$ stochastic process such that $Y_{t-1}$ is $F_{t-1}$-measurable with $||\phi_t||\leq 1, \forall t.$ Assume $\Sigma_0$ is a $d \times d$ positive definite matrix. For any $t \geq 1,$ define
$$\Sigma_t = \Sigma_0 + \sum_{s=1}^{t} \phi_s \phi_s^{\dag} ~~~~~\text{and}~~~~~ S_t = \sum_{s=1}^{t} \eta_s \phi_s.$$ Then, for any $p > 0,$ with probability at least $1-p,$ for all $t \geq 1,$
$$||S_t||^2_{\Sigma_t^{-1}} \leq 2 \log \left(\frac{\det(\Sigma_t)^{1/2}}{p\det(\Sigma_0)^{1/2}}\right).$$ \label{lem: self-norm-bound}
\end{lemma}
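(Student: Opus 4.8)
The plan is to establish this self-normalized tail bound by the \emph{method of mixtures} (pseudo-maximization against a Gaussian family), following \cite{AbbPalSze_11}. First I would, for each fixed $\lambda\in\mathbb R^d$, introduce the scalar exponential process $M_t^\lambda=\exp\big(\langle\lambda,S_t\rangle-\tfrac12\lambda^\dag\bar\Sigma_t\lambda\big)$ with $\bar\Sigma_t:=\sum_{s=1}^t\phi_s\phi_s^\dag=\Sigma_t-\Sigma_0$ and $M_0^\lambda=1$. Since $\phi_t$ is $F_{t-1}$-measurable and $\eta_t$ is conditionally $1$-sub-Gaussian, $\mathbb E[\exp(\langle\lambda,\phi_t\rangle\eta_t)\mid F_{t-1}]\le\exp(\tfrac12\langle\lambda,\phi_t\rangle^2)$; writing $M_t^\lambda=M_{t-1}^\lambda\exp(\langle\lambda,\phi_t\rangle\eta_t-\tfrac12\langle\lambda,\phi_t\rangle^2)$ then gives $\mathbb E[M_t^\lambda\mid F_{t-1}]\le M_{t-1}^\lambda$, so $(M_t^\lambda)_{t\ge0}$ is a nonnegative supermartingale with $\mathbb E[M_0^\lambda]=1$.

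Next I would mix over $\lambda$ against the centered Gaussian density $h$ on $\mathbb R^d$ with covariance $\Sigma_0^{-1}$, setting $M_t:=\int_{\mathbb R^d}M_t^\lambda\,h(\lambda)\,d\lambda$. By Tonelli and the fiberwise supermartingale property, $(M_t)_{t\ge0}$ is again a nonnegative supermartingale with $\mathbb E[M_0]=1$, and completing the square in the Gaussian integral (using $\Sigma_t=\Sigma_0+\bar\Sigma_t$) evaluates it in closed form as $M_t=(\det\Sigma_0/\det\Sigma_t)^{1/2}\exp(\tfrac12\|S_t\|_{\Sigma_t^{-1}}^2)$.

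Finally I would invoke the maximal inequality for nonnegative supermartingales — concretely, apply optional stopping to the bounded times $\tau\wedge n$, where $\tau=\inf\{t\ge0:M_t\ge1/p\}$, and let $n\to\infty$ — to get $\mathbb P(\exists\,t\ge0:M_t\ge1/p)\le p\,\mathbb E[M_0]\le p$. Since the event $\{M_t\ge1/p\}$ coincides with $\{\|S_t\|_{\Sigma_t^{-1}}^2\ge 2\log(\det(\Sigma_t)^{1/2}/(p\det(\Sigma_0)^{1/2}))\}$, the complementary event has probability at least $1-p$, which is the claimed bound.

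The substantive step — and essentially the only nonroutine one — is the mixture device. A single fixed $\lambda$ only controls $\langle\lambda,S_t\rangle$ in one direction, whereas $\|S_t\|_{\Sigma_t^{-1}}$ is effectively a data-dependent supremum over directions, since $\Sigma_t$ is itself built from $\phi_1,\dots,\phi_t$; a union bound over an $\varepsilon$-net of directions would neither recover the $\det\Sigma_t$ normalization nor be valid uniformly in $t$. Integrating the supermartingales against the Gaussian prior converts this online supremum into a single expectation, at the cost of the explicit Gaussian-integral bookkeeping and of justifying the Tonelli interchange and the optional-stopping limit for $M_t$, which need not be uniformly integrable — the latter handled via the monotone limit $P(\tau\le n)\uparrow P(\tau<\infty)$ rather than dominated convergence.
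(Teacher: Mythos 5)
Your proposal is correct: this lemma is not proved in the paper at all --- it is imported verbatim as Theorem~1 of Abbasi-Yadkori, P\'al, and Szepesv\'ari (2011) --- and your method-of-mixtures argument (the fiberwise supermartingale $M_t^{\lambda}$, the Gaussian mixture with covariance $\Sigma_0^{-1}$ evaluated by completing the square, and the uniform-in-$t$ bound via optional stopping at $\tau\wedge n$) is exactly the original proof of that cited theorem, with your bounded-stopping-time limit being an equivalent substitute for the stopping-time/Fatou step used there. One cosmetic note: the hypothesis ``$Y_{t-1}$ is $F_{t-1}$-measurable'' in the paper's statement is a typo for ``$\phi_t$ is $F_{t-1}$-measurable,'' which is the predictability you correctly used.
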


\begin{lemma}[Theorem 2 in \cite{AbbPalSze_11}]\label{lem: confidence bound}
For any $p > 0,$ with probability at least $1-p,$  the following event $\mathcal E$ occurs
\begin{align}
||\hat \theta_{t-1} - \theta_{*}||_{\Sigma_{t-1}} \leq \sqrt{\beta_t(p)}, ~\forall t \geq 1, \nonumber
\end{align}
where $\sqrt{\beta_{\tau}(p)} = m+\sqrt{2\log \frac{1}{p} + d \log \left(1+\tau/d\right)}.$
\end{lemma}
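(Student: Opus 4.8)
The plan is to establish the confidence ellipsoid by analyzing the regularized least-squares estimator $\hat\theta_{t-1} = \Sigma_{t-1}^{-1}\sum_{s=1}^{t-1}\phi(c(s),A(s))R(c(s),A(s))$ and reducing its error to the self-normalized noise process already controlled by Lemma~\ref{lem: self-norm-bound}. Writing $\phi_s = \phi(c(s),A(s))$, $R_s = \langle\theta_*,\phi_s\rangle + \eta_s$, and $S_{t-1} = \sum_{s=1}^{t-1}\eta_s\phi_s$, I would first substitute the reward model into the estimator and use $\sum_{s=1}^{t-1}\phi_s\phi_s^\dag = \Sigma_{t-1} - \Sigma_0 = \Sigma_{t-1} - \mathbf I$ (recall $\Sigma_0 = \mathbf I$) to obtain the clean identity
\begin{equation*}
\hat\theta_{t-1} - \theta_* = \Sigma_{t-1}^{-1}\left(S_{t-1} - \theta_*\right).
\end{equation*}

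Next I would convert the target norm into the self-normalized norm. Using the elementary identity $\|\Sigma_{t-1}^{-1}v\|_{\Sigma_{t-1}} = \|v\|_{\Sigma_{t-1}^{-1}}$ for any $v$, the display above gives $\|\hat\theta_{t-1} - \theta_*\|_{\Sigma_{t-1}} = \|S_{t-1} - \theta_*\|_{\Sigma_{t-1}^{-1}}$, and the triangle inequality in the $\Sigma_{t-1}^{-1}$-norm splits this into a noise term $\|S_{t-1}\|_{\Sigma_{t-1}^{-1}}$ and a regularization term $\|\theta_*\|_{\Sigma_{t-1}^{-1}}$. The regularization term is bounded by $m$ directly: since $\Sigma_{t-1} \succeq \Sigma_0 = \mathbf I$ we have $\Sigma_{t-1}^{-1} \preceq \mathbf I$, hence $\|\theta_*\|_{\Sigma_{t-1}^{-1}} \leq \|\theta_*\| \leq m$ by Assumption~\ref{assumption:reward}.

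For the noise term I would invoke Lemma~\ref{lem: self-norm-bound}, whose hypotheses are met because $\eta_t$ is conditionally $1$-sub-Gaussian and $\|\phi_s\|\leq 1$. This yields, uniformly over $t$ with probability at least $1-p$, the bound $\|S_{t-1}\|_{\Sigma_{t-1}^{-1}}^2 \leq 2\log\left(\det(\Sigma_{t-1})^{1/2}/p\right)$, which (using $\det(\Sigma_0)=1$) equals $\log\det(\Sigma_{t-1}) + 2\log(1/p)$. The last ingredient is a deterministic log-determinant estimate: by AM-GM on the eigenvalues together with $\mathrm{trace}(\Sigma_{t-1}) = d + \sum_{s=1}^{t-1}\|\phi_s\|^2 \leq d + (t-1)$, I get $\det(\Sigma_{t-1}) \leq (1 + (t-1)/d)^d$, so $\log\det(\Sigma_{t-1}) \leq d\log(1 + t/d)$. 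Assembling the two terms gives $\|\hat\theta_{t-1}-\theta_*\|_{\Sigma_{t-1}} \leq m + \sqrt{2\log(1/p) + d\log(1+t/d)} = \sqrt{\beta_t(p)}$, which is the claim, where the slack between $\log(1+(t-1)/d)$ and $\log(1+t/d)$ is absorbed harmlessly.

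The main obstacle is genuinely the self-normalized concentration inequality of Lemma~\ref{lem: self-norm-bound}, since controlling $\|S_{t-1}\|_{\Sigma_{t-1}^{-1}}$ \emph{uniformly} in $t$ requires the method-of-mixtures / pseudo-maximization argument; however, as that result is already stated and available for use, the remaining work is the algebraic decomposition of the least-squares error and the trace-determinant estimate, both of which become routine once the identity $\hat\theta_{t-1}-\theta_* = \Sigma_{t-1}^{-1}(S_{t-1}-\theta_*)$ is in hand.
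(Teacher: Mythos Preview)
Your argument is correct and is exactly the standard derivation from \cite{AbbPalSze_11}: the least-squares error identity $\hat\theta_{t-1}-\theta_*=\Sigma_{t-1}^{-1}(S_{t-1}-\theta_*)$, the triangle inequality in the $\Sigma_{t-1}^{-1}$-norm, the bound $\|\theta_*\|_{\Sigma_{t-1}^{-1}}\leq m$, the application of Lemma~\ref{lem: self-norm-bound}, and the trace-determinant inequality. Note that the paper does not supply its own proof of this lemma; it is simply quoted as Theorem~2 of \cite{AbbPalSze_11}, so there is nothing further to compare against.
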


Recall $\tilde r(c(t),j) = \max_{\theta \in \mathcal B_t}\langle \theta_t, \phi(c(t),j)\rangle = \langle \tilde{\theta}_t, \phi(c(t),j)\rangle$ and $\hat r(c(t),j) = \min\{1,\tilde r(c(t),j)\}.$ So when event $\mathcal E$ occurs, we have 
$$\tilde{r}(c(t),j) \geq \hat{r}(c(t),j)\geq r(c(t),j) ,$$ where the last inequality holds because $r(c,j)\in[0,1];$ and the following bound on $\hat r(c(t),j) - r(c(t),j), \forall j$:
\begin{align*}
\hat r(c(t),j) - r(c(t),j)
\stackrel{(a)}{\leq}& \tilde r(c(t),j) - r(c(t),j)\\
\leq& |\langle \tilde{\theta}_t - \theta_*, \phi(c(t),j)\rangle| \\
\stackrel{(b)}{\leq}& ||\tilde{\theta}_t - \theta_*||_{\Sigma_{t-1}} ||\phi(c(t),j)||_{\Sigma_{t-1}^{-1}} \\
\stackrel{(c)}{\leq}& ||\tilde{\theta}_t - \hat{\theta}_{t-1} ||_{\Sigma_{t-1}} ||\phi(c(t),j)||_{\Sigma_{t-1}^{-1}} + ||\hat{\theta}_{t-1} - \theta_* ||_{\Sigma_{t-1}} ||\phi(c(t),j)||_{\Sigma_{t-1}^{-1}} \\
\stackrel{(d)}{\leq}& \sqrt{\beta_t} ||\phi(c(t),j)||_{\Sigma_{t-1}^{-1}} + ||\hat{\theta}_{t-1} - \theta_* ||_{\Sigma_{t-1}} ||\phi(c(t),j)||_{\Sigma_{t-1}^{-1}} \\
\stackrel{(e)}{\leq}& 2\sqrt{\beta_t} ||\phi(c(t),j)||_{\Sigma_{t-1}^{-1}}\\
\stackrel{(f)}{\leq}& 2\sqrt{\beta_t} \min\left(1, ||\phi(c(t),j)||_{\Sigma_{t-1}^{-1}} \right)
\end{align*}
where (a) holds due to the definition of $\hat r(c(t),j)$ and $r(c(t),j)\leq1;$ (b) holds due to the Cauchy-Schwarz inequality; (c) holds due to the triangle inequality; (d) holds because $\tilde{\theta}_t \in \mathcal B_t = \{ \theta ~|~ ||\theta - {\hat \theta}_{t-1}||_{\Sigma_{t-1}} \leq \sqrt{\beta_t}\};$ (e) holds by Lemma \ref{lem: self-norm-bound} above; and (f) holds because $\hat r(c,j) \leq 1, \forall c, j$ according to the definition.    

Next, we introduce the Elliptical Potential
Lemma (Theorem 11.7 in \cite{CesLug_06}
and Theorem 19.4 in \cite{LatSze_20}) to bound $\sum_{t=1}^{\tau} \min\left(1, ||\phi(c(t),j)||_{\Sigma_{t-1}^{-1}} \right).$
\begin{lemma}
Let $\Sigma_0 =\mathbf I$ and $\phi_0, \phi_1, \cdots, \phi_{t-1} \in \mathbb R^d$ be a sequence of vectors with $||\phi_t|| \leq 1$ for any $t$ and $\Sigma_t =  \mathbf I + \sum_{s=1}^t \phi_s \phi_s^{\dag}.$ Then, 
$$\sum_{t=1}^{\tau} \min\left(1, ||\phi_s||_{\Sigma_{t-1}^{-1}}^2 \right) \leq 2\log \left(\frac{\det \Sigma_{\tau}}{\det \Sigma_{0}}\right) \leq 2d \log\left(\frac{ d + \tau}{ d }\right).$$ \label{lem: sum bound}
\end{lemma}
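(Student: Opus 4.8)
The plan is to convert the sum of squared weighted norms into a telescoping sum of log-determinant increments, and then to bound the total log-determinant by a trace argument.

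First I would invoke the rank-one update identity. Since $\Sigma_t = \Sigma_{t-1} + \phi_t \phi_t^{\dag}$, the matrix determinant lemma gives $\det \Sigma_t = \det \Sigma_{t-1}\,(1 + \phi_t^{\dag}\Sigma_{t-1}^{-1}\phi_t) = \det\Sigma_{t-1}\,(1 + ||\phi_t||_{\Sigma_{t-1}^{-1}}^2)$. Taking logarithms and summing over $t$ telescopes to $\sum_{t=1}^{\tau}\log\bigl(1+||\phi_t||_{\Sigma_{t-1}^{-1}}^2\bigr) = \log\bigl(\det\Sigma_{\tau}/\det\Sigma_0\bigr)$, which already carries the factor appearing in the target.

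Next I would bound each summand on the left by twice the corresponding log term via the elementary inequality $x \le 2\log(1+x)$, valid for $x\in[0,1]$; its verification is immediate, since the gap $2\log(1+x)-x$ vanishes at $x=0$ and has nonnegative derivative $(1-x)/(1+x)$ on $[0,1]$. Applying this with $x = \min\bigl(1,\,||\phi_t||_{\Sigma_{t-1}^{-1}}^2\bigr)$ and then using monotonicity of $\log$ to replace the min inside the logarithm by the full quadratic form yields $\min\bigl(1,||\phi_t||_{\Sigma_{t-1}^{-1}}^2\bigr) \le 2\log\bigl(1+||\phi_t||_{\Sigma_{t-1}^{-1}}^2\bigr)$. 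Summing and substituting the telescoping identity proves the first inequality. I would also remark that in this setup the min is never active: because $\Sigma_0 = \mathbf I$ and each $\phi_s\phi_s^{\dag}\succeq 0$, we have $\Sigma_{t-1}\succeq \mathbf I$, hence $\Sigma_{t-1}^{-1}\preceq \mathbf I$ and $||\phi_t||_{\Sigma_{t-1}^{-1}}^2 \le ||\phi_t||^2 \le 1$; the min is kept only for robustness to a general $\Sigma_0$.

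For the second inequality I would bound the determinant via the trace and AM--GM. With $\Sigma_0 = \mathbf I$ we have $\det\Sigma_0 = 1$ and $\mathrm{tr}(\Sigma_{\tau}) = \mathrm{tr}(\mathbf I) + \sum_{t=1}^{\tau}||\phi_t||^2 \le d + \tau$. Writing the eigenvalues of the positive-definite matrix $\Sigma_{\tau}$ as $\lambda_1,\dots,\lambda_d$, AM--GM gives $\det\Sigma_{\tau} = \prod_i\lambda_i \le \bigl(\tfrac{1}{d}\sum_i\lambda_i\bigr)^d = \bigl(\mathrm{tr}(\Sigma_{\tau})/d\bigr)^d \le \bigl((d+\tau)/d\bigr)^d$, so taking logarithms and multiplying by $2$ closes the chain. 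The main obstacle, insofar as there is one, is purely bookkeeping: applying $x\le 2\log(1+x)$ at the correct argument so that the $\min$ on the left is genuinely dominated by the full log term on the right; the determinant identity, the telescoping, and the trace/AM--GM step are all mechanical.
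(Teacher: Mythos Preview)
Your proof is correct and is precisely the standard argument for the elliptical potential lemma: the matrix determinant lemma plus telescoping for the first inequality, and the AM--GM bound on the determinant via the trace for the second. The paper itself does not supply a proof of this lemma but simply cites it from \cite{CesLug_06} and \cite{LatSze_20}, so there is nothing further to compare.
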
 

According to Lemma \ref{lem: sum bound}, when event $\cal E$ occurs, we have
\begin{align*}
\sum_{t=1}^{\tau}  \hat r(c(t),A(t)) - r(c(t),A(t))  &\leq 2\sqrt{\tau\beta_{\tau}(p) \sum_{t=1}^{\tau} \min\left(1, ||\phi(c(t),A(t))||_{\Sigma_{t-1}^{-1}}^2 \right)} \\
&\leq \sqrt{8d\tau \beta_{\tau}(p) \log \left(\frac{d + \tau}{d}\right)}, 
\end{align*}
for any $j$ and $0<p<1$ with $\beta_{\tau}(p) = m + \sqrt{2 \log \frac{1}{p} + d\log \left(\frac{d+\tau}{d}\right)}.$

We are ready to prove Lemma \ref{lemma: bandits}. Let $p = 1/T$ and recall $\mathcal E$ in Lemma \ref{lem: confidence bound} occurs with probability $1-p$, i.e., $\mathbb P(\mathcal E) = 1-\mathbb P(\bar{\mathcal E})=1-1/T.$ So we have 
\begin{align*}
&\mathbb E\left[ \sum_{t=1}^{\tau} \left(\sum_{j}(\hat r(c(t),j)-r(c(t),j))X_{j}(t)\right)\right] \\
\leq& \tau \mathbb P(\bar{\mathcal E}) +  \mathbb E\left[ \sum_{t=1}^{\tau} \left| \hat r(c(t), A(t)) -  r(c(t), A(t))\right| \big| ~ \mathcal E\right],
\end{align*}
which is upper bounded by 
\begin{align*}
1 + \sqrt{8d\tau \beta_{\tau}(T^{-1}) \log \left(\frac{ d + \tau}{ d}\right)}.
\end{align*}
Furthermore, 
\begin{align*}
&\mathbb E\left[ \sum_{t=1}^{\tau}\sum_{j}(r(c(t),j)-\hat r(c(t),j))x^{\epsilon_t,*}_{c(t),j}\right] \\
\leq &  \tau \mathbb P(\bar{\mathcal E}) +\mathbb E\left[ \sum_{t=1}^{\tau}\sum_{j}(r(c(t),j)-\hat r(c(t),j))x^{\epsilon_t,*}_{c(t),j} \big|~ \mathcal E\right] \\
\leq & 1 
\end{align*}
where the last inequality holds because $\hat r(c(t),j)$ is an over-estimation of $r(c(t),j)$ under $\mathcal E$ in the algorithm, i.e., $\hat r(c(t),j) -  r(c(t),j) \geq 0, \forall c(t), j.$ 

\section{Proof of Lemma \ref{lemma: Q bound}}
\label{app: Q bound}
We first present a lemma which will be used to bound $\mathbb E[\sum_k Q^{(k)}(\tau+1)].$ The lemma is a minor variation of Lemma 4.1 \cite{Nee_16} (a similar result has also been established in an earlier paper \cite{Haj_82}). We present the proof for the completeness of the paper. 
\begin{lemma} \label{drif lemma}
Let $S(t)$ be a random process, $\Phi(t)$ be its Lyapunov function with $\Phi(0)=\Phi_0$ and $\Delta(t) = \Phi(t+1)-\Phi(t)$ be the Lyapunov drift. Given an increasing sequence $\{\varphi_t\},$ $\rho$ and $\nu_{\max}$ with $0 < \rho \leq \nu_{\max},$ if the expected drift $\mathbb E[\Delta(t) | S(t)=s]$ satisfies the following conditions:
\begin{itemize}
    \item[(i)] There exists constants $\rho > 0$ and $\varphi_t > 0$ such that $\mathbb E[\Delta(t)|S(t)=s] \leq -\rho$ when $\Phi(t) \geq \varphi_t,$ and

\item[(ii)] $|\Phi(t+1) - \Phi(t)| \leq \nu_{\max}$ holds with probability one;
\end{itemize}
then we have
\begin{align}
    \mathbb E[e^{\zeta \Phi(t)}] 
    \leq e^{\zeta\Phi_0} + \frac{2e^{\zeta(\nu_{\max}+\varphi_t)}}{\zeta\rho}, \label{eq: q exp bound}
\end{align}
where $\zeta = \frac{\rho}{\nu_{\max}^2+\nu_{\max}\rho/3}.$
\end{lemma}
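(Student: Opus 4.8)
The plan is to run a Hajek-type exponential drift argument, adapted to the time-varying threshold $\varphi_t$. Write $a_t:=\mathbb E[e^{\zeta\Phi(t)}]$. The target is a one-step recursion $a_{t+1}\le(1-\zeta\rho/2)\,a_t+e^{\zeta(\varphi_t+\nu_{\max})}$, after which the claim follows by telescoping and summing a geometric series. The first ingredient is a conditional exponential-drift estimate on the ``bad'' set $\{\Phi(t)\ge\varphi_t\}$. Fix $s$ with $\Phi(s)\ge\varphi_t$ and apply the elementary inequality $e^y\le 1+y+\tfrac{y^2/2}{1-|y|/3}$, valid for $|y|<3$ (it follows by bounding the tail $\sum_{n\ge2}|y|^n/n!$ by a geometric series via $n!/2\ge 3^{n-2}$), to $y=\zeta\Delta(t)$; condition (ii) gives $|y|\le\zeta\nu_{\max}$, and since $\rho\le\nu_{\max}$ we have $\zeta\nu_{\max}=\rho/(\nu_{\max}+\rho/3)\le 3/4<3$, so the inequality applies. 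Taking conditional expectations and using condition (i), $\mathbb E[e^{\zeta\Delta(t)}\mid S(t)=s]\le 1-\zeta\rho+\tfrac{\zeta^2\nu_{\max}^2/2}{1-\zeta\nu_{\max}/3}$, and a direct computation shows the prescribed $\zeta=\rho/(\nu_{\max}^2+\nu_{\max}\rho/3)$ makes the last term equal exactly $\zeta\rho/2$, so $\mathbb E[e^{\zeta\Delta(t)}\mid S(t)=s]\le 1-\zeta\rho/2$. (This is precisely why $\zeta$ has that form.)

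Next I would assemble the one-step recursion by conditioning on $S(t)=s$ and splitting on $\Phi(s)$. If $\Phi(s)\ge\varphi_t$, then $\mathbb E[e^{\zeta\Phi(t+1)}\mid s]=e^{\zeta\Phi(s)}\mathbb E[e^{\zeta\Delta(t)}\mid s]\le(1-\zeta\rho/2)e^{\zeta\Phi(s)}$. If $\Phi(s)<\varphi_t$, condition (ii) gives $\Phi(t+1)\le\varphi_t+\nu_{\max}$ with probability one, hence $\mathbb E[e^{\zeta\Phi(t+1)}\mid s]\le e^{\zeta(\varphi_t+\nu_{\max})}$. In both cases $\mathbb E[e^{\zeta\Phi(t+1)}\mid s]\le(1-\zeta\rho/2)e^{\zeta\Phi(s)}+e^{\zeta(\varphi_t+\nu_{\max})}$ since the omitted summand is nonnegative (here $1-\zeta\rho/2>0$ because $\zeta\rho/2\le 3/8$, using $\rho\le\nu_{\max}$). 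Taking expectation over $S(t)$ gives $a_{t+1}\le(1-\zeta\rho/2)\,a_t+e^{\zeta(\varphi_t+\nu_{\max})}$.

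Finally I would iterate this recursion from $t=0$ with $a_0=e^{\zeta\Phi_0}$, obtaining $a_t\le(1-\zeta\rho/2)^t e^{\zeta\Phi_0}+\sum_{s=0}^{t-1}(1-\zeta\rho/2)^{t-1-s}e^{\zeta(\varphi_s+\nu_{\max})}$. Since $\{\varphi_t\}$ is increasing we have $\varphi_s\le\varphi_t$ for all $s\le t-1$, so $a_t\le(1-\zeta\rho/2)^t e^{\zeta\Phi_0}+e^{\zeta(\varphi_t+\nu_{\max})}\sum_{j=0}^{t-1}(1-\zeta\rho/2)^j\le e^{\zeta\Phi_0}+\tfrac{2}{\zeta\rho}e^{\zeta(\varphi_t+\nu_{\max})}$, where I used $(1-\zeta\rho/2)^t\le1$ and $\sum_{j\ge0}(1-\zeta\rho/2)^j=2/(\zeta\rho)$. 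This is exactly \eqref{eq: q exp bound}.

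The one genuinely delicate point is the calibration in the first step: one must keep the second-order remainder in the $\tfrac{1}{1-|y|/3}$ form (a cruder estimate such as $e^y\le 1+y+y^2$ is not tight enough for this particular $\zeta$), and then verify the algebraic identity $\tfrac{\zeta^2\nu_{\max}^2/2}{1-\zeta\nu_{\max}/3}=\zeta\rho/2$ — this is where the hypothesis $\rho\le\nu_{\max}$ is used, both to ensure $\zeta\nu_{\max}<3$ and to keep $1-\zeta\rho/2$ away from zero. Everything else is bookkeeping; the only mild care needed is that the telescoping respects the $t$-dependence of $\varphi_t$, which the monotonicity of the sequence handles, and that one should \emph{not} exponentiate $1-\zeta\rho/2$ into $e^{-\zeta\rho/2}$ before summing, since that loses the clean constant $2/(\zeta\rho)$.
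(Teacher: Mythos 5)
Your proposal is correct and follows essentially the same route as the paper's proof: the same second-order exponential inequality $e^{y}\leq 1+y+\frac{y^{2}/2}{1-|y|/3}$ calibrated so that the remainder equals $\zeta\rho/2$, the same case split on $\Phi(t)\gtrless\varphi_t$, and the same one-step recursion $a_{t+1}\leq(1-\zeta\rho/2)a_t+e^{\zeta(\varphi_t+\nu_{\max})}$. The only cosmetic difference is that you unroll the recursion and sum the geometric series explicitly (using monotonicity of $\varphi_t$), whereas the paper verifies the same bound by induction.
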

\begin{proof}
The proof follows an induction argument. When $t=0$ and $\Phi(t)=\Phi_0,$ \eqref{eq: q exp bound} holds. Now suppose \eqref{eq: q exp bound} holds at slot $t.$ Then we study the upper bound on $E[e^{\zeta \Phi(t+1)}].$ Let $\zeta$ be a positive number satisfying $0 \leq \zeta\nu_{\max} \leq 3,$  from the proof of Theorem 8 in \cite{ChuLu_06}, we have for any $|x|\leq \nu_{\max,}$ $$e^{\zeta x} \leq 1 + \zeta x + \frac{(\zeta\nu_{\max})^2}{2(1-\zeta\nu_{\max}/3)}.$$
Recall $\Delta(t) = \Phi(t+1) - \Phi(t)$ and $|\Delta(t)| \leq \nu_{\max}$ for any $t,$ so we have
\begin{align*}
e^{\zeta \Phi(t+1)} &= e^{\zeta \Phi(t)} e^{\zeta \Delta(t)}\\
&\leq e^{\zeta \Phi(t)} \left(1+\zeta\Delta(t) + \frac{(\zeta\nu_{\max})^2}{2(1-\zeta\nu_{\max}/3)}\right)\\
&\leq e^{\zeta \Phi(t)} \left(1+\zeta\Delta(t) + \frac{\zeta\rho}{2}\right).
\end{align*}
Suppose $\Phi(t) > \varphi_t.$ From the inequality above and condition (i), we can obtain
$$\mathbb E[e^{\zeta \Phi(t+1)}|\Phi(t)] \leq \mathbb E\left[e^{\zeta \Phi(t)}\left(1+\zeta\Delta(t)+\frac{\zeta\rho}{2}\right)|\Phi(t)\right]\leq e^{\zeta \Phi(t)}\left(1-\frac{\zeta\rho}{2}\right).$$
Suppose $\Phi(t) \leq \varphi_t.$ From  condition (ii), we can obtain 
$$\mathbb E[e^{\zeta \Phi(t+1)}|\Phi(t)] \leq \mathbb E\left[e^{\zeta \Phi(t)} e^{\zeta \Delta(t)}|\Phi(t)\right]\leq e^{\zeta \Phi(t)}e^{\zeta\nu_{\max}}.$$
Combining these two cases, we obtain
\begin{align*}
\mathbb E[e^{\zeta \Phi(t+1)}] =& \mathbb E[e^{\zeta \Phi(t+1)} | \Phi(t) > \varphi_t] \mathbb P(\Phi(t) > \varphi_t)+\mathbb E[e^{\zeta \Phi(t+1)} | \Phi(t) \leq \varphi_t] \mathbb P(\Phi(t) \leq \varphi_t)\\
\leq & \left(1-\frac{\zeta \rho}{2}\right)\mathbb E[e^{\zeta \Phi(t)} | \Phi(t) > \varphi_t] \mathbb P(\Phi(t) > \varphi_t)+e^{\zeta \nu_{\max}}\mathbb E[e^{\zeta \Phi(t)} | \Phi(t) \leq \varphi_t] \mathbb P(\Phi(t) \leq \varphi_t)\\
=& \left(1-\frac{\zeta \rho}{2}\right)\mathbb E[e^{\zeta \Phi(t)}]+ \left(e^{\zeta\nu_{\max}}-\left(1-\frac{\zeta\rho}{2}\right)\right)\mathbb E[e^{\zeta \Phi(t)} | \Phi(t) \leq \varphi_t] \mathbb P(\Phi(t) \leq \varphi_t)\\
\leq& \left(1-\frac{\zeta\rho}{2}\right)\mathbb E[e^{\zeta \Phi(t)}]+ e^{\zeta(\nu_{\max}+\varphi_t)}. 
\end{align*}
Substituting \eqref{eq: q exp bound} for $t$ (the induction assumption) into the last inequality above, we have 
\begin{align*}
\mathbb E[e^{\zeta \Phi(t+1)}] 
\leq& \left(1-\frac{\zeta\rho}{2}\right)\left(e^{\zeta\Phi_0} + \frac{2e^{\zeta(\nu_{\max}+\varphi_t)}}{\zeta\rho}\right)+ e^{\zeta(\nu_{\max}+\varphi_t)} \\
\leq& \left(1-\frac{\zeta\rho}{2}\right)e^{\zeta\Phi_0} + \frac{2e^{\zeta(\nu_{\max}+\varphi_t)}}{\zeta\rho}\\\
\leq& e^{\zeta\Phi_0} + \frac{2e^{\zeta(\nu_{\max}+\varphi_{t+1})}}{\zeta\rho},
\end{align*}
which completes the proof.
\end{proof}

We now apply the lemma above to Lyapunov function \begin{align*}
\bar L(t) = \sqrt{\sum_{k} \left(Q^{(k)}(t)\right)^2} = ||\mathbf Q(t)||_2
\end{align*} as in \cite{ErySri_12}. We prove conditions (i) and (ii) in Lemma \ref{drif lemma} for $\bar L(t)$ are satisfied in the following subsection.

\subsection{Verifying Conditions (i) and (ii) for $\tilde{L}(t)$} 
Given $\mathbf H(t)=\mathbf h$ and $\bar L(t)\geq \varphi_t = \frac{4(V_t  + K(1+\epsilon_t^2))}{\delta},$ the conditional expected drift of $\bar L(t)$ is 
\begin{align*}
&\mathbb E[ ||\mathbf Q(t+1)||_2 - ||\mathbf Q(t)||_2 | \mathbf H(t) = \mathbf h] \\
=& \mathbb E\left[ \sqrt{||\mathbf Q(t+1)||_2^2} - \sqrt{||\mathbf Q(t)||_2^2} | \mathbf H(t) = \mathbf h\right] \\
\leq & \frac{1}{2||\mathbf Q||_2} \mathbb E[ ||\mathbf Q(t+1)||_2^2 - ||\mathbf Q(t)||_2^2 | \mathbf H(t) = \mathbf h] \\
\leq & -\frac{\delta}{2}\frac{||\mathbf Q||_1}{||\mathbf Q||_2} + \frac{V_t   + K(1+\epsilon_t^2)}{||\mathbf Q||_2} \\
\leq&  -\frac{\delta}{2} + \frac{V_t   + K(1+\epsilon_t^2)}{||\mathbf Q||_2} \\
\leq&  -\frac{\delta}{2} + \frac{V_t   + K(1+\epsilon_t^2)}{\varphi_t} = -\frac{\delta}{4}
\end{align*}
where the first inequality holds because $\sqrt{x}$ is a concave function; the second inequality holds by Lemma \ref{lemma: negative drift} below; the third inequality holds because $||\mathbf Q||_1 \geq ||\mathbf Q||_2;$ and the last inequality holds because $||\mathbf Q||_2 \geq \varphi_t.$ 
\begin{lemma}\label{lemma: negative drift}
Recall $L(t) = \frac{1}{2}\sum_{k} \left(Q^{(k)}(t)\right)^2 = ||\mathbf Q(t+1)||_2^2$ and $\epsilon_t \leq \delta/2$ for any $t.$ Under the Pessimistic-Optimistic algorithm, we have  
\begin{align*}
\mathbb E[L(t+1)-L(t)|\mathbf H(t)=\mathbf h] \leq-\frac{\delta}{2}\sum_{k}Q^{(k)}(t) + V_t + K(1+\epsilon_t^2). 
\end{align*}
\end{lemma}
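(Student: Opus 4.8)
The idea is to reuse the one-step drift computation from Appendix~\ref{app: drif in regret} and then apply Lemma~\ref{lemma: drift-inequality} with the \emph{Slater point}, rather than with the $\epsilon_t$-tight optimizer used in the regret proof. Concretely, from the update $Q^{(k)}(t+1)=[Q^{(k)}(t)+\sum_j W^{(k)}(c(t),j)X_j(t)+\epsilon_t]^+$, the bounds $|W^{(k)}|\le 1$, $\sum_j X_j(t)=1$, and $2\epsilon_t\le 1+\epsilon_t^2$, one obtains the pathwise inequality
\[
L(t+1)-L(t)\;\le\;\sum_k Q^{(k)}(t)\Big(\sum_j W^{(k)}(c(t),j)X_j(t)+\epsilon_t\Big)+K(1+\epsilon_t^2),
\]
as in the drift computation at the beginning of Appendix~\ref{app: drif in regret}. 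Taking $\mathbb E[\,\cdot\mid \mathbf H(t)=\mathbf h\,]$ leaves $K(1+\epsilon_t^2)$ untouched, so the task reduces to showing that the expected cost-weighted term is at most $-\tfrac{\delta}{2}\sum_k Q^{(k)}(t)+V_t$.

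For that I would apply Lemma~\ref{lemma: drift-inequality} with $\mathbf x^{\epsilon}$ taken to be the Slater point $\mathbf x^{\text{in}}$ of Assumption~\ref{assumption:slater} and $\epsilon=\epsilon_t$. This is legitimate: since $\epsilon_t\le\delta/2\le\delta$, the Slater inequality $\sum_{c,j}p_c w^{(k)}(c,j)x^{\text{in}}_{c,j}\le-\delta$ gives $\sum_{c,j}p_c w^{(k)}(c,j)x^{\text{in}}_{c,j}+\epsilon_t\le 0$ for every $k$, and together with $\sum_j x^{\text{in}}_{c,j}=1$, $x^{\text{in}}_{c,j}\ge 0$ this means $\mathbf x^{\text{in}}$ is feasible for the $\epsilon_t$-tightened program \eqref{obj-fluid-tightened}--\eqref{resource limit-fluid-tightened}. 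Lemma~\ref{lemma: drift-inequality} then yields
\[
\mathbb E\Big[\sum_k Q^{(k)}(t)\Big(\sum_j W^{(k)}(c(t),j)X_j(t)+\epsilon_t\Big)\,\Big|\,\mathbf h\Big]
\le \mathbb E\Big[\sum_k Q^{(k)}(t)\Big(\sum_j W^{(k)}(c(t),j)x^{\text{in}}_{c(t),j}+\epsilon_t\Big)+V_t\sum_j \hat r(c(t),j)\big(X_j(t)-x^{\text{in}}_{c(t),j}\big)\,\Big|\,\mathbf h\Big],
\]
and, since $\hat r(c(t),j)\in[0,1]$, $\sum_j X_j(t)=1$, and $\hat r(c(t),j)x^{\text{in}}_{c(t),j}\ge 0$, the last term is bounded by $V_t$.

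Finally I would evaluate $\mathbb E[\sum_j W^{(k)}(c(t),j)x^{\text{in}}_{c(t),j}\mid\mathbf h]$. The vector $\mathbf x^{\text{in}}$ is fixed and history-independent, and by Assumptions~\ref{assumption:reward}--\ref{assumption:constraints} the context $c(t)$ and the costs $\{W^{(k)}(c(t),j)\}$ are i.i.d.\ across rounds and independent of $\mathcal F_{t-1}$; hence conditioning on $\mathbf H(t)=\mathbf h$ does not change their law and this conditional expectation equals $\sum_{c,j}p_c w^{(k)}(c,j)x^{\text{in}}_{c,j}\le-\delta$. Combining this with $Q^{(k)}(t)\ge 0$ and $\epsilon_t\le\delta/2$ bounds the cost term by $\sum_k Q^{(k)}(t)(-\delta+\epsilon_t)\le-\tfrac{\delta}{2}\sum_k Q^{(k)}(t)$, and adding back $V_t$ and $K(1+\epsilon_t^2)$ completes the argument. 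The only genuine subtlety — and the step I expect to be the main obstacle — is recognizing that one must feed the full-slack Slater point $\mathbf x^{\text{in}}$ into Lemma~\ref{lemma: drift-inequality} (so that a net $-\delta/2$ survives multiplying each $Q^{(k)}$) rather than the $\epsilon_t$-tight optimum used for the regret bound, while checking that $\mathbf x^{\text{in}}$ remains feasible for the $\epsilon_t$-tightened LP; the remaining manipulations are routine bookkeeping.
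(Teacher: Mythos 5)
Your proposal is correct and follows essentially the same route as the paper: start from the quadratic-Lyapunov drift bound, invoke Lemma~\ref{lemma: drift-inequality} with the comparison point taken to be the Slater point $\mathbf x^{\text{in}}$ (feasible for the $\epsilon_t$-tightened LP since $\epsilon_t\le\delta/2$), bound the $V_t$-weighted reward term by $V_t$ via boundedness of $\hat r$, and use $\sum_{c,j}p_c w^{(k)}(c,j)x^{\text{in}}_{c,j}+\epsilon_t\le-\delta+\delta/2=-\delta/2$ together with $Q^{(k)}(t)\ge 0$. The "subtlety" you flag — substituting the full-slack Slater point rather than the $\epsilon_t$-tight optimizer — is exactly the choice made in the paper's proof.
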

\begin{proof}
According to the conditional expected drift in Lemma \ref{lemma: drift analysis} (given $\mathbf H(t)=\mathbf h$):
\begin{align}
& \mathbb E[L(t+1)-L(t)|\mathbf H(t)=\mathbf h] \nonumber\\
\leq& \sum_{k}Q^{(k)}\left(\sum_{c,j}p_cw^{(k)}(c,j) x^{\epsilon_t}_{c,j} +\epsilon_t\right) -  V_t\mathbb E\left[\sum_{j}\hat r(c(t),j)X_{j}(t)|\mathbf H(t)=\mathbf h\right] + K(1+\epsilon_t^2) \nonumber \\
\leq& \sum_{k}Q^{(k)}\left(\sum_{c,j}p_cw^{(k)}(c,j) x^{\epsilon_t}_{c,j} +\epsilon_t\right) +  V_t\mathbb E\left[\sum_{j}X_{j}(t)|\mathbf H(t)=\mathbf h\right] + K(1+\epsilon_t^2) \nonumber \\
\leq& \sum_{k}Q^{(k)}\left(\sum_{c,j}p_cw^{(k)}(c,j) x^{\epsilon_t}_{c,j} +\epsilon_t\right) + V_t + K(1+\epsilon_t^2) \nonumber \\
\leq& -\frac{\delta}{2}\sum_j\sum_{k}Q^{(k)} + V_t  + K(1+\epsilon_t^2) \label{cond:drift-exact}
\end{align}
where the second inequality holds because $|\hat r(c(t),j)| \leq 1,$ and the third inequality holds because rewards are $\sum_{j}x^{\epsilon_t}_{c,j}=\sum_{j}X_{j}(t)=1.$ The last inequality above holds because under Assumption \ref{assumption:slater}, there exists $\mathbf x$ satisfying $\sum_{j,c}p_cw^{(k)}(c,j) x_{c,j} \leq -\delta, \forall j, k.$ Therefore, by choosing $\mathbf x^{\epsilon_t}=\mathbf x,$ we obtain
$$\sum_{j,c}p_cw^{(k)}(c,j) x^{\epsilon_t}_{c,j} +\epsilon_t = \sum_{j,c}p_cw^{(k)}(c,j) x_{c,j} +\epsilon_t  \leq -\delta + \epsilon_t \leq -\frac{\delta}{2}, ~\forall k.$$

\end{proof}

Moreover, for condition (ii) in Lemma \ref{drif lemma}, we have 
\begin{align*}
||\mathbf Q(t+1)||_2 - ||\mathbf Q(t)||_2 \leq ||\mathbf Q(t+1) - \mathbf Q(t)||_2 \leq ||\mathbf Q(t+1) - \mathbf Q(t)||_1 \leq 2K,
\end{align*}
where the last inequality holds because $|Q^{(k)}(t+1)-Q^{(k)}(t)| \leq 1+\epsilon_t, \forall k$ based on Assumption \ref{assumption:constraints} and $\epsilon_t \leq \delta\leq 1, \forall t.$

\subsection{\textbf{Establishing a bound on $\mathbb E[\sum_{k} Q^{(k)}(t)]$}}\label{sec: bounds on queue}

Let $\varphi_t = \frac{4(V_t + K(1+\epsilon_t^2))}{\delta},$ $\rho = \frac{\delta}{4},$ and $\nu_{\max} = 2K.$ We apply Lemma \ref{drif lemma} for $\bar L(t)$ and obtain 
\begin{align*}
    \mathbb E\left[e^{\zeta ||\mathbf Q(t)||_2}\right] 
    \leq e^{\zeta||\mathbf Q(\tau')||_2} + \frac{2e^{\zeta(\nu_{\max}+\varphi_t)}}{\zeta\rho}\ \hbox{with} ~ \zeta = \frac{\rho}{\nu_{\max}^2 + \nu_{\max}\rho/3},
\end{align*}
which implies that
\begin{align}
    \mathbb E\left[e^{\frac{\zeta}{\sqrt{K}} ||\mathbf Q(t)||_1}\right] 
    \leq e^{\zeta||\mathbf Q(\tau')||_2} + \frac{2e^{\zeta(\nu_{\max}+\varphi_t)}}{\zeta\rho}, \label{eq:exp bound}
\end{align} because $||\mathbf Q(t)||_1 \leq \sqrt{K}||\mathbf Q(t)||_2.$
By Jensen's inequality, we have 
$$e^{\frac{\zeta}{\sqrt{K}} \mathbb E\left[||\mathbf Q(t)||_1\right]} \leq \mathbb E\left[e^{\frac{\zeta}{\sqrt{K}} ||\mathbf Q(t)||_1}\right] \leq e^{\zeta||\mathbf Q(\tau')||_2} + \frac{2e^{\zeta(\nu_{\max}+\varphi_t)}}{\zeta\rho},$$ which implies
\begin{align*}
    \mathbb E\left[\sum_{k} Q^{(k)}(t)\right]=& \mathbb E\left[||\mathbf Q(t)||_1\right]\\
    \leq& \frac{\sqrt{K}}{\zeta}\log\left(e^{\zeta||\mathbf Q(\tau')||_2} +\frac{2e^{\zeta(\nu_{\max}+\varphi_t)}}{\zeta\rho}\right) \\ 
    \leq& \frac{\sqrt{K}}{\zeta}\log\left(e^{\zeta||\mathbf Q(\tau')||_2}+\frac{8\nu_{\max}^2}{3\rho^2}e^{\zeta(\nu_{\max}+\varphi_t)}\right) \\
    \leq& \frac{\sqrt{K}}{\zeta}\log\left(\frac{11\nu_{\max}^2}{3\rho^2}e^{\zeta(\nu_{\max}+\varphi_t+||\mathbf Q(\tau')||_2)}\right)\\
    \leq& \frac{3\sqrt{K}\nu_{\max}^2}{\rho}\log\left(\frac{2\nu_{\max}}{\rho}\right)+\sqrt{K}\nu_{\max}+\sqrt{K}\varphi_t + \sqrt{K}||\mathbf Q(\tau')||_2\\
    =& \frac{3\sqrt{K}\nu_{\max}^2}{\rho}\log\left(\frac{2\nu_{\max}}{\rho}\right)+\sqrt{K}\nu_{\max}+\frac{4\sqrt{K}(V_t  + K(1+\epsilon_t^2))}{\delta} + \sqrt{K}||\mathbf Q(\tau')||_2
\end{align*}
where the second, third and fourth inequalities hold because $\zeta = \frac{\rho}{\nu_{\max}^2+\nu_{\max}\rho/3}$ and $0 < \rho \leq \nu_{\max}.$ The proof is completed by $||\mathbf Q(\tau')||_2 \leq \tau'+\sqrt{K}\sum_{t=1}^{\tau'}\epsilon_t.$

\section{Proof of Corollary \ref{cor: tail prob}} \label{app: tail prob}
Recall in Section \ref{sec: cv}, we have 
\begin{align}
\sum_{t=1}^{\tau} \sum_jW^{(k)}(c(t),j) X_{j}(t)\leq  Q^{(k)}(\tau+1)-\sum_{t=1}^{\tau}\epsilon_t. \nonumber
\end{align}
We analyze the tail probability in the following.  
\begin{align}
\Pr\left(Q^{(k)}(\tau+1)-\sum_{t=1}^{\tau}\epsilon_t\geq 0\right)
&\leq \Pr\left(||\mathbf Q(\tau + 1)||_1\geq\sum_{t=1}^{\tau}\epsilon_t \right) \nonumber\\
&\leq \frac{\mathbb E\left[e^{\frac{\zeta}{\sqrt{K}}||\mathbf Q(\tau + 1)||_1} \right]}{e^{\frac{\zeta}{\sqrt{K}}\sum_{t=1}^{\tau}\epsilon_t}}\nonumber\\
&\leq \frac{e^{\zeta||Q(\tau')||_2} + \frac{2e^{\zeta(\nu_{\max}+\varphi_{\tau+1})}}{\zeta\rho}}{e^{\frac{\zeta}{\sqrt{K}}(\sum_{t=1}^{\tau}\epsilon_t)}}\nonumber\\
&\leq \frac{11\nu_{\max}^2}{3\rho^2}e^{\zeta(\nu_{\max}+\varphi_{\tau+1}+||Q(\tau')||_2)-\frac{\zeta}{\sqrt{K}}(\sum_{t=1}^{\tau}\epsilon_t)}, \label{tail prob}
\end{align}
where the third inequality holds according to \eqref{eq:exp bound}.

Recall $\varphi_t = \frac{4(V_t + K(1+\epsilon_t^2))}{\delta},$ $V_t=\delta K^{0.25}\sqrt{\frac{2t}{3}}$ and $\epsilon_t=K^{0.75}\sqrt{\frac{6}{t}}.$ It is not hard to verify that for any $\tau > \tau'',$ we have
\begin{align}
\eqref{tail prob} \leq \frac{11\nu_{\max}^2}{3\rho^2} e^{\zeta\left(\frac{4K(1+\epsilon_{\tau+1}^2)}{\delta}+\tau'+\sqrt{K}\sum_{t=1}^{\tau'}\epsilon_{t}\right)} e^{-\frac{\zeta}{2}\sqrt{\frac{\tau}{K}}}\leq e^{-\frac{\zeta}{3}\sqrt{\frac{\tau}{K}}}, \label{eq:tail prob}
\end{align}
where 
\begin{align*}
\tau'' =&\left(\frac{200K^{2.5}}{\delta}\log\frac{27K}{\delta} +\frac{48K^{1.5}}{\delta}+\frac{288K^{2.5}}{\delta^2}\right)^2\\
\geq &\left(
\frac{6\sqrt{K}}{\zeta}\log \frac{11\nu_{\max}^2}{3\rho^2} + \frac{48K^{1.5}}{\delta}+\frac{288K^{2.5}}{\delta^2}\right)^2
\end{align*}
for $\zeta = \frac{\rho}{\nu_{\max}^2+\nu_{\max}\rho/3},$  $\rho = \frac{\delta}{4}$ and $\nu_{\max} = 2K,$ which completes the proof.  

\section{Linear Costs}
\label{app:linearcost}
In this section, we consider the case where the cost signals $W^{(k)}(c(t),A(t))$ are revealed after action $A(t)$ is taken and we do not have access to noisy estimates of the costs before the action is taken. However, we assume the costs can be linearly parameterized as in \cite{PacGhaBar_20}. Without loss of generality, we consider only single constraint, i.e., $K=1.$ 
After the learner takes action $A(t),$ beside receiving reward $R(c(t), A(t)),$ the learner also observes cost $$W(c(t), A(t))=w(c(t),A(t)) + \xi(t),$$ where $w(c, j) = \langle \mu_{*}, \psi(c, j) \rangle,$ $\psi(c,j)\in \mathbb R^d$ is a $d$-dimensional feature vector for (context, action) pair $(c,j),$ $\mu_{*} \in \mathbb R^d$ is an unknown underlying vector to be learned, and $\xi(t)$ is a zero-mean random variable. Here we assume the feature space of the costs also has dimension $d$ for convenience, but it is not necessary. 

Next, we present a standard assumption on the cost $w(c,j),$ a new version of the pessimistic-optimistic algorithm, and our main results on the regret and constraints violations. For convenience, we define  operator $(x)|^{h}_{l} = \max\{l,\min\{h, x\}\}, \forall l \leq h.$   

\begin{assumption}
The mean cost $w(c, j)=\langle \mu_*, \psi(c,j) \rangle\in [-1, 1]$ with $||\psi(c,j)||\leq 1, ||\mu_*|| \leq 1$ for any $c \in \mathcal C,\  j \in [J],$ and  $\xi(t)$ is zero-mean $1$-subgaussian conditioned on $\{\mathcal F_{t-1}, A(t)\}$. \label{assumption:W}
\end{assumption}

\vspace{0.1in}
\hrule
\vspace{0.1in}
\noindent{\bf A Pessimistic-Optimistic Algorithm}
\vspace{0.1in}
\hrule
\vspace{0.1in}

\noindent {\bf Initialization:} $Q(1)=0,$ $\mathcal B_1 = \{\theta | ||\theta||_{\Sigma_{r,0}} \leq \sqrt{\beta_1} \}, \mathcal U_1 = \{\mu | ||\mu||_{\Sigma_{w,0}} \leq \sqrt{\beta_1} \}, 
\Sigma_{r,0} = \Sigma_{w,0} = \mathbf I~\text{and}~\sqrt{\beta_1} = 1 + \sqrt{2\log T}.$

\noindent For $t=1,\cdots, T,$ 
\begin{itemize}[leftmargin=*]
\item \noindent {\bf Set:}  
$V_t=\frac{\delta d\sqrt{t}\log(1+T)}{4}$ and $\epsilon_t=\frac{4d\log(1+T)}{\sqrt{t}}.$

\item {\bf LinUCB (Optimistic):} Use LinUCB to estimate $r(c(t),j)$ and $W(c(t),j)$ for all $j:$ $$\hat r(c(t),j) = \tilde r(c(t),j)\big|^{1}_{-\infty} ~~\text{with}~~ \tilde r(c(t),j) = \max_{\theta \in \mathcal B_t} \langle \theta, \phi(c(t),j)\rangle.$$
$$\widecheck{W}(c(t),j) =  \widetilde W(c(t),j) \big|^{1}_{-1} ~~\text{with}~~ \widetilde W(c(t),j) = \min_{\mu \in \mathcal U_t} \langle \mu, \psi(c(t),j)\rangle.$$

\item {\bf MaxValue:} Compute {\em pseudo-action-value} of context $c(t)$ for all action $j,$ and take the action $j^*$ with the highest pseudo-action-value, breaking a tie arbitrarily
\begin{align*}
j^*\in \arg\max_j \underbrace{\hat r(c(t),j) - \frac{1}{V_t} \widecheck{W}(c(t),j)Q(t)}_{\text{pseudo action value of $(c(t),j)$}}.
\end{align*}

\item {\bf Dual Update (Pessimistic):} Update the estimate of dual variable $Q(t):$
\begin{align}
  Q(t+1) =& \left[Q(t) + \sum_j \widecheck  W(c(t),j)X_j(t) + \epsilon_t \right]^{+}.\label{eq:dual-update-W}
 \end{align}

\item {\bf Confidence Set Update:} Set $\sqrt{\beta_{t+1}} = 1 + \sqrt{2 \log T + d\log \left(\frac{d+t}{d}\right)}.$ Update $\Sigma_{r,t},$ $\Sigma_{w,t},$ $\hat \theta_t,$ $\hat \mu_t,$ $\mathcal B_{t+1},$ and $\mathcal U_{t+1}$ according to the received reward and cost signals $R(c(t), j^*)$ and $W(c(t), j^*):$
\begin{align*}
\Sigma_{r,t} = \Sigma_{r,t-1} + \phi(c(s),j^*) \phi^{\dag}(c(s),j^*), ~~~~
\hat \theta_t = \Sigma^{-1}_{r,t}\sum_{s=1}^t \phi(c(s),A(s)) R(c(s), A(s)), \\
\Sigma_{w,t} = \Sigma_{w,t-1} + \psi(c(s),j^*) \psi^{\dag}(c(s),j^*), ~~~~
\hat \mu_t = \Sigma^{-1}_{w,t}\sum_{s=1}^t \psi(c(s),A(s)) W(c(s), A(s)), \\
\mathcal B_{t+1} = \{\theta ~|~ ||\theta - \hat \theta_{t} ||_{\Sigma_{r,t}} \leq  \sqrt{\beta_{t+1}} \},~~~~ \mathcal U_{t+1} = \left\{\mu ~|~ ||\mu - \hat \mu_{t} ||_{\Sigma_{w,t}} \leq \sqrt{\beta_{t+1}} \right\}.
\end{align*}
\end{itemize}
\vspace{0.1in}
\hrule
\vspace{0.1in}
 
\begin{theorem}[Formal Statement of Theorem \ref{thm:linearcost}]
Under Assumptions \ref{assumption:reward}-\ref{assumption:W}, the pessimistic-optimistic algorithm presented in this Section
achieves the following regret and constraint violations bounds for any $\tau \in [T]:$ 
\begin{align*}
\mathcal R(\tau) \leq& \frac{235d^4\log^4(1+T)}{\delta^3}+ \frac{(8d+24)\sqrt{\tau}\log(1+T)}{\delta}+2+10d\sqrt{\tau}\log(1+T)
\\
\cal V(\tau)\leq&  \left(\frac{128d^2\log^2(1+T)}{\delta^2}+\frac{48}{\delta}\log\left(\frac{16}{\delta}\right)+\frac{12}{\delta} + 2d(4-\sqrt{\tau})\log(1+T) \right)^{+}.
\end{align*} \hfill{$\square$}
\label{thm:formal-UCB-W}
\end{theorem}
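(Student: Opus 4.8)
The plan is to follow the proof of Theorem~\ref{thm:formal-UCB} in Section~\ref{sec:main proof} essentially line by line, with one extra ``cost‑estimation'' layer coming from the fact that $w(c,j)$ is now learned through the lower confidence bound $\widecheck W$. First I would fix the good event $\mathcal E$ on which the confidence ellipsoids contain the truth for \emph{all} $t$: applying Lemma~\ref{lem: confidence bound} to the reward sequence $(\mathcal B_t,\theta_*)$ and to the cost sequence $(\mathcal U_t,\mu_*)$ with $p=1/T$ and a union bound gives $\Pr(\mathcal E)\ge 1-2/T$. On $\mathcal E$ we get the one‑sided bounds $\hat r(c,j)\ge r(c,j)$ and, crucially, $\widecheck W(c,j)\le w(c,j)$ (the clipping to $[-1,1]$ preserves this under Assumption~\ref{assumption:W}), together with the width bounds $\hat r-r\le 2\sqrt{\beta_t}\min(1,\|\phi\|_{\Sigma_{r,t-1}^{-1}})$ and $w-\widecheck W\le 2\sqrt{\beta_t}\min(1,\|\psi\|_{\Sigma_{w,t-1}^{-1}})$. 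Off $\mathcal E$ everything is controlled crudely using $|\hat r|,|\widecheck W|\le 1$ and the deterministic bound $Q(t)\le\sum_{s<t}(1+\epsilon_s)=\tilde{\mathcal O}(d\sqrt t\log T)$.

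For the regret I would reuse the decomposition \eqref{intuition} verbatim (with $K=1$): a tightness gap $\sum_t\sum_{c,j}p_cr(c,j)(x^*_{c,j}-x^{\epsilon_t,*}_{c,j})$, two reward‑mismatch terms in $\hat r-r$, and a Lyapunov‑drift term. Lemma~\ref{lemma: elsilon gap} bounds the tightness gap by $\sum_t\epsilon_t/\delta$ with no change, and Lemma~\ref{lemma: bandits} (via the elliptical‑potential Lemma~\ref{lem: sum bound}) bounds the reward‑mismatch terms, producing the $\tilde{\mathcal O}(d\sqrt\tau)=10d\sqrt\tau\log(1+T)$ contribution. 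The key observation for the drift term is that Lemmas~\ref{lemma: drift-inequality}, \ref{lemma: drift analysis} and \ref{lemma: drift in regret} go through with $\widecheck W$ in place of $w$ and \emph{no additional error term}: the pointwise maximization in \textbf{MaxValue} of $V_t\hat r-Q\widecheck W$ dominates $V_t\hat r-Q\widecheck W$ evaluated at the feasible reference $x^{\epsilon_t,*}$ (resp.\ the Slater point when $\epsilon_t>\delta$), and since $Q(t)\ge 0$ and $\widecheck W\le w$ on $\mathcal E$ we have $Q\widecheck W x^{\epsilon_t,*}\le Q w\,x^{\epsilon_t,*}$, which is exactly the direction needed to invoke feasibility $\sum_{c,j}p_cw(c,j)x^{\epsilon_t,*}_{c,j}\le-\epsilon_t$. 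The drift bound is therefore $\sum_t t(\epsilon_t+\epsilon_t^2)\mathbb I(\epsilon_t>\delta)/V_t$ as before. Plugging the new $\epsilon_t=4d\log(1+T)/\sqrt t$ and $V_t=\delta d\sqrt t\log(1+T)/4$ into $\sum_t\epsilon_t$, $\sum_t 1/V_t$ and the burn‑in sum (active for $t\lesssim d^2\log^2(1+T)/\delta^2$, the source of the $\tilde{\mathcal O}(d^4/\delta^3)$ additive constant) and absorbing logarithmic factors yields the stated regret bound; the $\tau\mathbb P(\bar{\mathcal E})\le 1$ corrections account for the ``$+2$''.

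For the constraint violation I would start from the dual dynamics \eqref{eq:dual-update-W}, which give $\sum_{t\le\tau}\sum_j\widecheck W(c(t),j)X_j(t)\le Q(\tau+1)-\sum_{t\le\tau}\epsilon_t$, and then split
$\sum_{t\le\tau}\sum_j w(c(t),j)X_j(t)=\sum_{t\le\tau}\sum_j\widecheck W(c(t),j)X_j(t)+\sum_{t\le\tau}\sum_j\bigl(w-\widecheck W\bigr)(c(t),j)X_j(t)\le\bigl(Q(\tau+1)-\sum_{t\le\tau}\epsilon_t\bigr)+\tilde{\mathcal O}(d\sqrt\tau)$ on $\mathcal E$, where the last term is the cost‑side elliptical‑potential bound. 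It then remains to bound $\mathbb E[Q(\tau+1)]$ exactly as in Lemma~\ref{lemma: Q bound}: verify conditions (i)--(ii) of the exponential‑drift Lemma~\ref{drif lemma} for $\bar L(t)=Q(t)$ with $\rho=\delta/4$, $\nu_{\max}=2$ and $\varphi_t=4(V_t+1+\epsilon_t^2)/\delta$, the negative‑drift condition being the single‑constraint analogue of Lemma~\ref{lemma: negative drift} (again valid on $\mathcal E$ because $\widecheck W\le w$ lets us use the Slater point). This gives $\mathbb E[Q(\tau+1)]=\tilde{\mathcal O}(d^2/\delta^2)+\tilde{\mathcal O}(d\sqrt\tau/\delta)$‑type bounds, and since the $\sqrt\tau$‑growing part of $\mathbb E[Q(\tau+1)]+\tilde{\mathcal O}(d\sqrt\tau)$ carries a strictly smaller coefficient than $\sum_{t\le\tau}\epsilon_t\approx 8d\sqrt\tau\log(1+T)$, the right‑hand side is negative for $\tau\ge\tau''=O(d^2\log^2(1+T)/\delta^4)$, i.e.\ $\mathcal V(\tau)=0$; for $\tau<\tau''$ one reads off the constant bound.

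I expect the main obstacle to be the bookkeeping around the good event $\mathcal E$ inside the Lyapunov analysis: Lemmas~\ref{lemma: drift in regret} and \ref{drif lemma} are stated for (conditional) expectations that must hold for \emph{all} $t$, whereas the favorable sign $\widecheck W\le w$ only holds on $\mathcal E$. One must either run the whole drift argument conditioned on $\mathcal E$ and then show the $\Pr(\bar{\mathcal E})\le 2/T$ event contributes only $\tilde{\mathcal O}(1)$, using $Q(t)\le\sum_{s<t}(1+\epsilon_s)=\tilde{\mathcal O}(d\sqrt t\log T)$ so the off‑event contribution is $O(\tau\cdot d\sqrt\tau\log T/T)$ for $\tau\le T$, or re‑derive a robust version of Lemma~\ref{drif lemma} that tolerates a rare failure event. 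The second delicate point I would check is that the enlarged $\epsilon_t$ simultaneously (a) stays at least the combined cost‑estimation‑plus‑Slater slack so the negative‑drift inequality still closes, and (b) is small enough that $\sum_t\epsilon_t/\delta$ and the burn‑in sum stay within the claimed regret bound — i.e.\ the constants $4$ and $\tfrac14$ appearing in $\epsilon_t$ and $V_t$ are tuned precisely to close both inequalities at once.
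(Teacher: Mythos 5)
Your proposal follows essentially the same route as the paper's Appendix \ref{app:linearcost}: same regret decomposition, Lemma \ref{lemma: elsilon gap} and the elliptical-potential/LinUCB bounds reused for both reward and cost, the drift lemmas adapted by replacing $w$ with $\widecheck W$ via the one-sided confidence bound, the exponential-drift Lemma \ref{drif lemma} to bound $\mathbb E[Q(\tau+1)]$, and the new choices of $\epsilon_t,V_t$ to close both bounds. One correction on the single point you flagged as an obstacle: the paper does \emph{not} run the drift argument conditioned on a global good event, and its drift lemma is \emph{not} free of an extra error term. In Lemma \ref{lemma: drift analysis-W} the problematic quantity $Q\,\mathbb E\bigl[\sum_j(\widecheck W(c(t),j)-w(c(t),j))x^{\epsilon_t}_{c(t),j}\bigr]$ is bounded unconditionally, per round, by $Q\,\mathbb P\bigl(\widecheck W> w\bigr)\le Q/T\le 1+\epsilon_t$ (using $p=1/T$ in \eqref{eq: key fact W} and the deterministic bound $Q(t)\le T(1+\epsilon_t)$), which is why the drift constant becomes $(2+\epsilon_t+\epsilon_t^2)$ rather than $(1+\epsilon_t^2)$, and the same constant propagates into $\varphi_t$ and Lemmas \ref{lemma: negative drift-W}--\ref{lemma: Q bound-W}. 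This is exactly the ``robust version'' you anticipated, and it sidesteps the distributional-bias issue of conditioning on $\mathcal E$ inside the Lyapunov analysis (conditioning on a horizon-wide event perturbs the per-round i.i.d. structure of $c(t)$ and the costs that the drift computation relies on); your claim that the drift lemmas go through ``with no additional error term'' would need that conditioning and is therefore not quite how the argument closes, but the extra $O(1)$ per round is harmless and the rest of your outline, including the violation split through $Q(\tau+1)$ plus the cost-side elliptical-potential term and the coefficient comparison against $\sum_t\epsilon_t\approx 8d\sqrt{\tau}\log(1+T)$, matches the paper.
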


According to Theorem \ref{thm:formal-UCB-W}, we have $${\cal R}(\tau)=\tilde{\cal O}\left(\frac{d}{\delta}\sqrt{\tau}+\frac{d^4}{\delta^3}\right).$$ We observe that the regret grows sub-linearly in round $\tau$ and polynomially in the dimension of the reward and cost features $d,$ and the inverse of Slater's constant $\delta.$ 
For the constraint violation, we observe 
\begin{equation}
{\cal V}(\tau)=\begin{cases}
O\left(\frac{d^2\log^2(1+T)}{\delta^2} \right)& {\tau} \leq \left(4  + 30\log\frac{16}{\delta}+ \frac{64d\log T}{\delta^2}\right)^2\\
0&\hbox{otherwise}
\end{cases}.
\end{equation}
We observe that the constraint violation requires $O\left(\frac{d^2\log^2T}{\delta^4}\right)$ rounds to reach zero because it takes time to learn the cost parameter vector $\mu_*$.

The proof of Theorem \ref{thm:formal-UCB-W} follows that of Theorem \ref{thm:formal-UCB}. Lemmas \ref{lemma: drift analysis-W} and \ref{lemma: negative drift-W} corresponds to Lemmas \ref{lemma: drift analysis} and \ref{lemma: negative drift}, respectively, which include  additional terms due to estimating $\widecheck{W}(c, j).$ To proceed, we first provide a confidence bound on true cost parameters of $\mu_{*}$ similar with Lemma \ref{lem: confidence bound}.

\begin{lemma}[Confidence Bound of $\mu_{*}$]\label{lem: confidence bound-W}
For any $p > 0,$ with probability at least $1-p,$ for all $t \geq 1,$
\begin{align}
||\hat \mu_{t-1} - \mu_{*}||_{\Sigma_{w,t-1}} \leq \sqrt{\beta_t(p)}, \nonumber
\end{align}
where $\sqrt{\beta_{\tau}(p)} = 1+\sqrt{2\log (1/p) + d \log \left(1+\tau/d\right)}.$
\end{lemma}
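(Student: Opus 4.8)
The plan is to recognize that this is the standard ridge-regression confidence ellipsoid and that its statement is structurally identical to Lemma~\ref{lem: confidence bound} for the reward parameter $\theta_*$, only with the cost features $\psi$, cost noise $\xi$, design matrix $\Sigma_{w,t}$, and regularization constant $1$ (since $\Sigma_{w,0}=\mathbf I$) in place of their reward counterparts. Thus I would essentially transcribe the argument behind Theorem~2 of \cite{AbbPalSze_11} (reproduced as Lemma~\ref{lem: confidence bound}) to the cost model $W(c(s),A(s))=\langle\mu_*,\psi(c(s),A(s))\rangle+\xi(s)$, invoking the self-normalized martingale bound Lemma~\ref{lem: self-norm-bound} as the single probabilistic ingredient.

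First I would write $\psi_s=\psi(c(s),A(s))$ and, using $\Sigma_{w,t-1}=\mathbf I+\sum_{s=1}^{t-1}\psi_s\psi_s^{\dag}$ so that $\sum_{s=1}^{t-1}\psi_s\psi_s^{\dag}=\Sigma_{w,t-1}-\mathbf I$, decompose the estimation error as
\begin{align}
\hat\mu_{t-1}-\mu_* = \Sigma_{w,t-1}^{-1}\sum_{s=1}^{t-1}\psi_s\xi(s) - \Sigma_{w,t-1}^{-1}\mu_*. \nonumber
\end{align}
Applying the triangle inequality for the $\Sigma_{w,t-1}$-norm splits the bound into a noise term $||S_{t-1}||_{\Sigma_{w,t-1}^{-1}}$, where $S_{t-1}=\sum_{s=1}^{t-1}\psi_s\xi(s)$, and a regularization term $||\Sigma_{w,t-1}^{-1}\mu_*||_{\Sigma_{w,t-1}}=||\mu_*||_{\Sigma_{w,t-1}^{-1}}\le||\mu_*||\le 1$, where the last step uses $\Sigma_{w,t-1}\succeq\mathbf I$ together with Assumption~\ref{assumption:W}. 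This regularization term is exactly the source of the leading ``$1$'' in $\sqrt{\beta_t(p)}$, in contrast to the ``$m$'' appearing for the reward bound where $||\theta_*||\le m$.

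Next I would bound the noise term with Lemma~\ref{lem: self-norm-bound}, applied with $\phi_s\leftarrow\psi_s$, $\eta_s\leftarrow\xi(s)$, and $\Sigma_0=\mathbf I$: with probability at least $1-p$, simultaneously for all $t\ge 1$,
\begin{align}
||S_{t-1}||_{\Sigma_{w,t-1}^{-1}}^2 \le 2\log\!\left(\frac{\det(\Sigma_{w,t-1})^{1/2}}{p}\right). \nonumber
\end{align}
Since $||\psi_s||\le 1$ gives $\mathrm{tr}(\Sigma_{w,t-1})\le d+(t-1)$, the AM--GM / trace--determinant inequality yields $\det(\Sigma_{w,t-1})\le(1+(t-1)/d)^d\le(1+t/d)^d$, hence $||S_{t-1}||_{\Sigma_{w,t-1}^{-1}}^2\le 2\log(1/p)+d\log(1+t/d)$. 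Combining with the regularization term gives $||\hat\mu_{t-1}-\mu_*||_{\Sigma_{w,t-1}}\le 1+\sqrt{2\log(1/p)+d\log(1+t/d)}=\sqrt{\beta_t(p)}$, as claimed.

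The only step requiring genuine care is verifying the filtration/measurability hypotheses of Lemma~\ref{lem: self-norm-bound}: the feature $\psi_s$ must be predictable (determined before $\xi(s)$) and $\xi(s)$ must be conditionally $1$-subgaussian. In this linear-cost setting the cost is observed only after $A(s)$ is played, and $A(s)$ is chosen from past data together with the current context, so $\psi_s=\psi(c(s),A(s))$ is indeed fixed before $\xi(s)$ is realized; Assumption~\ref{assumption:W} supplies subgaussianity conditioned on $\{\mathcal F_{s-1},A(s)\}$. This is precisely the filtration already used for the reward confidence set in Lemma~\ref{lem: confidence bound}, so I expect no new obstacle — the content of the lemma is a direct reuse of the known ridge-regression analysis with $\theta_*,\phi,\eta$ replaced by $\mu_*,\psi,\xi$.
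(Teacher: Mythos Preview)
Your proposal is correct and matches the paper's approach: the paper does not give a separate proof of Lemma~\ref{lem: confidence bound-W} but simply states it as the cost-analogue of Lemma~\ref{lem: confidence bound} (Theorem~2 of \cite{AbbPalSze_11}), which is exactly the identification you make. Your write-up in fact supplies more detail than the paper, correctly deriving the leading ``$1$'' from $\|\mu_*\|\le 1$ in Assumption~\ref{assumption:W} and verifying the filtration hypotheses needed to invoke Lemma~\ref{lem: self-norm-bound}.
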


Based on Lemma \ref{lem: confidence bound-W} above and the definition of $\widecheck{W}(c(t), j),$ we have for all $t \geq 1,$
\begin{align}\label{eq: key fact W}
    \mathbb P\left(\widecheck{W}(c(t), j) - w(c(t),j) \leq 0\right) \geq 1- p, ~~\forall p > 0.
\end{align}
We next establish the regret and constraints violations of the pessimistic-optimistic algorithm to prove Theorem \ref{thm:formal-UCB-W}. 
\subsection{Regret Bound}
Define $L(t) = \frac{1}{2} \left(Q(t)\right)^2.$ We first study the expected drift conditioned on the current state $\mathbf H(t)=[\mathbf Q(t),  \hat{\mathbf r}(t), \widecheck{\mathbf W}(t)]=\mathbf h=[\mathbf Q, \hat{\mathbf r},\widecheck{\mathbf W}]$ as in Section \ref{sec:main proof}. 
\begin{lemma}\label{lemma: drift analysis-W}
Let $\mathbf x^{\epsilon_t}$ be a feasible solution to the tightened problem  \eqref{obj-fluid-tightened}-\eqref{resource limit-fluid-tightened} with $\epsilon=\epsilon_t$ if  $\epsilon_t\leq\delta;$ otherwise $\mathbf x^{\epsilon_t}$ be a feasible solution to the tightened problem  \eqref{obj-fluid-tightened}-\eqref{resource limit-fluid-tightened} with $\epsilon=\delta$. The expected Lyapunov drift satisfies
\begin{align}
\mathbb E[L(t+1)-L(t)|\mathbf H(t)=\mathbf h] \leq& -  V_t \mathbb E\left[\sum_{j\in [J]} \hat r(c(t),j)  \left(x^{\epsilon_t}_{c(t),j}-X_{j}(t)\right) \Big| \mathbf H(t)=\mathbf h\right] \\
&+ Q(\epsilon_t-\delta)\mathbb I(\epsilon_t > \delta)  + (2+\epsilon_t+\epsilon_t^2). \nonumber
\end{align}  
\end{lemma}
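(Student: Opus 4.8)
The plan is to mirror the proof of Lemma~\ref{lemma: drift analysis} (Appendix~\ref{app: drif in regret}) specialized to $K=1$, the only genuinely new ingredient being that the never‑observed cost $w(c(t),j)$ is everywhere replaced by its pessimistic LinUCB estimate $\widecheck W(c(t),j)$ --- both in \textbf{MaxValue} and in the \textbf{Dual Update} \eqref{eq:dual-update-W} --- so a confidence‑set argument is needed to pass from $\widecheck W$ back to $w$.

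First, from $L(t)=\tfrac12 Q(t)^2$ and \eqref{eq:dual-update-W}, drop the $[\cdot]^+$ (which only enlarges the square) to bound $L(t+1)-L(t)$ by $Q(t)\big(\sum_j\widecheck W(c(t),j)X_j(t)+\epsilon_t\big)$ plus half the square of $\sum_j\widecheck W(c(t),j)X_j(t)+\epsilon_t$; since exactly one $X_j(t)=1$ and $|\widecheck W|\le 1$ after clipping, that quadratic term is at most $\tfrac12(1+\epsilon_t)^2\le 1+\epsilon_t+\epsilon_t^2$ (using $\epsilon_t\le\delta\le 1$). Next, add and subtract $V_t\sum_j\hat r(c(t),j)X_j(t)$ and apply the \textbf{MaxValue} optimality exactly as in Lemma~\ref{lemma: drift-inequality}: since $A(t)$ maximizes $V_t\hat r(c(t),j)-\widecheck W(c(t),j)Q(t)$ over $j$ and $\{x^{\epsilon_t}_{c(t),j}\}_j$ is a probability vector over the actions, $Q(t)\sum_j\widecheck W(c(t),j)X_j(t)-V_t\sum_j\hat r(c(t),j)X_j(t)$ is at most the same expression with $x^{\epsilon_t}_{c(t),j}$ replacing $X_j(t)$. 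Rearranging yields
\[
L(t+1)-L(t)\le Q(t)\Big(\sum_j\widecheck W(c(t),j)\,x^{\epsilon_t}_{c(t),j}+\epsilon_t\Big)-V_t\sum_j\hat r(c(t),j)\big(x^{\epsilon_t}_{c(t),j}-X_j(t)\big)+(1+\epsilon_t+\epsilon_t^2).
\]

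Now take $\mathbb E[\,\cdot\mid\mathbf H(t)=\mathbf h\,]$; since $c(t)$ is i.i.d.\ and independent of $\mathbf H(t)$ while $\widecheck W$ is determined by $\mathbf h$, the first term becomes $Q(t)\big(\sum_{c,j}p_c\widecheck W(c,j)x^{\epsilon_t}_{c,j}+\epsilon_t\big)$. The crux is to replace $\widecheck W$ by $w$ here. By Lemma~\ref{lem: confidence bound-W} the event $\{\mu_*\in\mathcal U_t\}$ holds with probability at least $1-1/T$, and on it --- this is exactly \eqref{eq: key fact W} together with $w\in[-1,1]$ --- one has $\widecheck W(c,j)\le w(c,j)$ for all $c,j$; combined with $x^{\epsilon_t}_{c,j}\ge 0$ and the feasibility of $\mathbf x^{\epsilon_t}$ for the tightened program \eqref{obj-fluid-tightened}-\eqref{resource limit-fluid-tightened} (with parameter $\epsilon_t$ if $\epsilon_t\le\delta$, else $\delta$), this gives $\sum_{c,j}p_c\widecheck W(c,j)x^{\epsilon_t}_{c,j}\le-\min\{\epsilon_t,\delta\}$, hence $Q(t)\big(\sum_{c,j}p_c\widecheck W(c,j)x^{\epsilon_t}_{c,j}+\epsilon_t\big)\le Q(t)(\epsilon_t-\delta)\mathbb I(\epsilon_t>\delta)$. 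On the complementary event the same term is crudely $\le Q(t)(1+\epsilon_t)$, but it carries the $1/T$ factor and $Q(t)=O(\sqrt{T})$ under the algorithm, so its contribution is an $O(1)$ additive constant; collecting everything yields the stated bound with additive term $(2+\epsilon_t+\epsilon_t^2)$.

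The main obstacle is precisely this bridging step. In Lemma~\ref{lemma: drift analysis} the true per‑round cost is observed, so Slater feasibility applies verbatim to the quantity that drives the drift; here one must instead exploit that $\widecheck W$ is a \emph{lower} confidence bound --- replacing $w$ by $\widecheck W$ only shrinks the reference policy's estimated cost and thus preserves (indeed strengthens) the negative drift --- while carefully controlling the rare event $\{\mu_*\notin\mathcal U_t\}$ on which $\widecheck W$ may overestimate $w$; this rare event is the source of the extra $1+\epsilon_t$ relative to the $K=1$ specialization of Lemma~\ref{lemma: drift analysis}. A minor bookkeeping point is that $\hat r$ and $\widecheck W$ are now clipped (above by $1$, and to $[-1,1]$, respectively), which is used both when bounding the quadratic term above and the reward terms in the companion Lemma~\ref{lemma: negative drift-W}.
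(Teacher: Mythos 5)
Your proposal is correct and follows essentially the same route as the paper's proof: rerun the Lyapunov-drift plus MaxValue comparison of Lemma~\ref{lemma: drift-inequality} with $\widecheck W$ in place of the observed cost, use pessimism (Lemma~\ref{lem: confidence bound-W}, i.e.\ \eqref{eq: key fact W}: $\widecheck W\le w$ on the confidence event) together with feasibility of $\mathbf x^{\epsilon_t}$ for the tightened program to reduce the reference-cost term to $Q(\epsilon_t-\delta)\mathbb I(\epsilon_t>\delta)$, and absorb the probability-$1/T$ failure event into the additive constant; the paper does the bridging by adding and subtracting $w(c,j)$ rather than splitting on the good/bad event, which is only a cosmetic difference. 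The one point to fix is your justification of the failure-event term: invoking $Q(t)=O(\sqrt{T})$ is circular here (that bound is a consequence of Lemma~\ref{lemma: Q bound-W}, which rests on this drift lemma via Lemma~\ref{lemma: negative drift-W}, and it is not a deterministic bound in any case); instead use the trivial deterministic bound $Q(t)\le T(1+\epsilon_t)$ (the dual variable increases by at most $1+\epsilon_s$ per round), exactly as the paper does, which turns the $\tfrac{1}{T}$-weighted term into the extra $1+\epsilon_t$ and yields the stated $(2+\epsilon_t+\epsilon_t^2)$.
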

\begin{proof}
By following the same steps in the proof of Lemma \ref{lemma: drift in regret} (or Lemma \ref{lemma: drift analysis}), we can conclude
\begin{align}
& \mathbb E[L(t+1)-L(t)|\mathbf H(t)=\mathbf h] \nonumber\\
\leq& \mathbb E\left[Q
\left(\sum_{j}\widecheck W(c(t),j) x^{\epsilon_t}_{c(t),j}+\epsilon_t\right)\Big|\mathbf H(t)=\mathbf h\right] \mathbb I(\epsilon_t \leq \delta) \nonumber\\
+& \mathbb E\left[Q
\left(\sum_{j}\widecheck W(c(t),j) x^{\epsilon_t}_{c(t),j}+\delta\right)\Big|\mathbf H(t)=\mathbf h\right] \mathbb I(\epsilon_t > \delta) \nonumber\\
&+ V_t \mathbb E\left[\sum_{j}\hat r(c(t),j) (X_{j}(t)-x^{\epsilon_t}_{c(t),j})\Big|\mathbf H(t)=\mathbf h\right] + (1+\epsilon_t^2) + Q(\epsilon_t -\delta)\mathbb I(\epsilon_t > \delta) \label{eq: drift-term-middle-W}.
\end{align}
Next, we connect $\widecheck W(c(t),j)$ with $w(c(t),j)$ in \eqref{eq: drift-term-middle-W} as follows
\begin{align}
\eqref{eq: drift-term-middle-W}\stackrel{(a)}{=}& Q
\left(\sum_{c,j}p_cw(c,j) x^{\epsilon_t}_{c,j}+\epsilon_t\right)\mathbb I(\epsilon_t \leq \delta) + Q
\left(\sum_{c,j}p_cw(c,j) x^{\epsilon_t}_{c,j}+\delta\right)\mathbb I(\epsilon_t > \delta) \nonumber\\
&+ Q\mathbb E\left[\sum_{j}
\left(\widecheck W(c(t),j) - w(c(t),j)\right)x^{\epsilon_t}_{c(t),j}\right] -  V_t \mathbb E\left[\sum_{j} \hat r(c(t),j)  (x^{\epsilon_t}_{c(t),j}-X_{j}(t)) \Big| \mathbf H(t)=\mathbf h\right] \nonumber\\
&+ (1+\epsilon_t^2) + Q(\epsilon_t-\delta)\mathbb I(\epsilon_t>\delta),\nonumber\\
\stackrel{(b)}{\leq}& Q\mathbb E\left[\sum_{j}
\left(\widecheck W(c(t),j) - w(c(t),j)\right)x^{\epsilon_t}_{c(t),j}\right]-  V_t \mathbb E\left[\sum_{j} \hat r(c(t),j)  (x^{\epsilon_t}_{c(t),j}-X_{j}(t)) \Big| \mathbf H(t)=\mathbf h\right] \nonumber\\
& + (1+\epsilon_t^2)+Q(\epsilon_t-\delta)\mathbb I(\epsilon_t>\delta),\nonumber\\
\stackrel{(c)}{\leq}&  -  V_t \mathbb E\left[\sum_{j} \hat r(c(t),j)  (x^{\epsilon_t}_{c(t),j}-X_{j}(t)) \Big| \mathbf H(t)=\mathbf h\right] +Q(\epsilon_t-\delta)\mathbb I(\epsilon_t>\delta)+ (2+\epsilon_t+\epsilon_t^2),\nonumber
\end{align}  
where (a) holds by adding and subtracting $w(c,j);$ (b) holds because $\mathbf x^{\epsilon_t}$ is a feasible solution to \eqref{obj-fluid-tightened}-\eqref{resource limit-fluid-tightened}; and (c) holds by invoking $p=1/T$ in \eqref{eq: key fact W} and noting $Q(t) \leq T(1+\epsilon_t), \forall t,$ that
\begin{align*}
&Q\mathbb E\left[
\left(\widecheck W(c(t),j) - w(c(t),j)\right)x^{\epsilon_t}_{c(t),j}\right] \\
\leq& Q\mathbb E\left[\left.
\left(\widecheck W(c(t),j) - w(c(t),j)\right)x^{\epsilon_t}_{c(t),j} ~\right|~ \widecheck W(c(t),j) > w(c(t),j)\right]\Pr\left(\widecheck W(c(t),j) > w(c(t),j)\right) \\
\leq& \frac{Q}{T}\leq 1+\epsilon_t.
\end{align*}
\end{proof}

Following the same analysis in Section \ref{sec: regret}, we have the additional term 
\begin{align}
\mathbb E\left[\sum_{t=1}^{\tau} Q(t)\epsilon_t\mathbb I(\epsilon_t > \delta)\right] \leq \sum_{t=1}^{\tau} t(\epsilon_t+\epsilon_t^2) \mathbb I(\epsilon_t > \delta) \nonumber
\end{align}
in the regret in Section \ref{sec: regret}, and we conclude  
\begin{align*}
{\cal R}(\tau)\leq & \sum_{t=1}^{\tau}\frac{ t(\epsilon_t+\epsilon_t^2) \mathbb I(\epsilon_t > \delta)}{V_t}+\sum_{t=1}^\tau \frac{\epsilon_t}{\delta}+ \sum_{t=1}^{\tau}\frac{2+\epsilon_t+\epsilon_t^2}{V_t} + 2 + \sqrt{8d\tau \beta_{\tau}(T^{-1}) \log \left(\frac{ d + \tau}{ d}\right)}.
\end{align*}
By choosing $\epsilon_t=\frac{4d\log(1+T)}{\sqrt{t}}$ and $V_t=\frac{\delta d\sqrt{t}\log(1+T)}{4},$ we obtain
\begin{align*}
\mathcal R(\tau) \leq& \frac{235d^4\log^4(1+T)}{\delta^3}+  \frac{(8d+24)\sqrt{\tau}\log(1+T)}{\delta}+2+10d\sqrt{\tau}\log(1+T).
\end{align*}

\subsection{Constraints Violations}\label{sec: cv-W}
\begin{lemma}[A new version of Lemma \ref{lemma: negative drift}]\label{lemma: negative drift-W}
Assume $\epsilon_t \leq \delta/2.$ Under the Pessimistic-Optimistic algorithm, we have  
\begin{align*}
\mathbb E[L(t+1)-L(t)|\mathbf H(t)=\mathbf h] \leq-\frac{\delta}{2}Q(t) + V_t + (2+\epsilon_t+\epsilon_t^2). 
\end{align*}
\end{lemma}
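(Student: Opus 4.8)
The plan is to reuse the argument of Lemma~\ref{lemma: negative drift}, but starting from the \emph{intermediate} bound that appears inside the proof of Lemma~\ref{lemma: drift analysis-W} --- specifically the expression obtained just after the step marked~(a) there, in which $\widecheck W(c(t),j)$ has already been replaced by $w(c(t),j)$ but the feasibility of $\mathbf x^{\epsilon_t}$ has \emph{not} yet been used to discard the term $Q\big(\sum_{c,j}p_c w(c,j)x^{\epsilon_t}_{c,j}+\epsilon_t\big)$. Under the present hypothesis $\epsilon_t\le\delta/2<\delta$, every term in that bound carrying the factor $\mathbb I(\epsilon_t>\delta)$ vanishes, so the conditional drift is at most the sum of four terms: $Q\big(\sum_{c,j}p_c w(c,j)x^{\epsilon_t}_{c,j}+\epsilon_t\big)$; the cost-estimation gap $Q\,\mathbb E[\sum_j(\widecheck W(c(t),j)-w(c(t),j))x^{\epsilon_t}_{c(t),j}]$; the reward term $-V_t\,\mathbb E[\sum_j\hat r(c(t),j)(x^{\epsilon_t}_{c(t),j}-X_j(t))\mid\mathbf H(t)=\mathbf h]$; and $(1+\epsilon_t^2)$.

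Next I would instantiate $\mathbf x^{\epsilon_t}$ with the Slater point $\mathbf x^{\text{in}}$ supplied by Assumption~\ref{assumption:slater}; this is admissible because $\sum_{c,j}p_c w(c,j)x^{\text{in}}_{c,j}\le-\delta\le-\epsilon_t$, so $\mathbf x^{\text{in}}$ is in particular feasible for the $\epsilon_t$-tightened problem. With this choice the first term satisfies
\[
Q\Big(\sum_{c,j}p_c w(c,j)x^{\text{in}}_{c,j}+\epsilon_t\Big)\ \le\ Q\big(-\delta+\epsilon_t\big)\ \le\ Q\big(-\delta+\tfrac{\delta}{2}\big)\ =\ -\tfrac{\delta}{2}\,Q ,
\]
which supplies the negative-drift term in the statement. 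The reward term is handled exactly as in the proof of Lemma~\ref{lemma: negative drift}: bounding $\hat r(c(t),j)\in[0,1]$ and using $\sum_j x^{\text{in}}_{c(t),j}=\sum_j X_j(t)=1$ gives $-V_t\,\mathbb E[\sum_j\hat r(c(t),j)(x^{\text{in}}_{c(t),j}-X_j(t))\mid\mathbf H(t)=\mathbf h]\le V_t$.

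The only term with no counterpart in the known-cost case is the cost-estimation gap, which I would bound precisely as in step~(c) of the proof of Lemma~\ref{lemma: drift analysis-W}: restricting to the event $\{\widecheck W(c(t),j)>w(c(t),j)\}$ and applying \eqref{eq: key fact W} with $p=1/T$ yields $Q\,\mathbb E[\sum_j(\widecheck W(c(t),j)-w(c(t),j))x^{\text{in}}_{c(t),j}]\le Q/T$, and since $Q(1)=0$ and $Q$ increases by at most $1+\epsilon_t$ per round one has $Q(t)\le T(1+\epsilon_t)$, so this term is at most $1+\epsilon_t$. Adding the four bounds gives $-\tfrac{\delta}{2}Q+V_t+(1+\epsilon_t)+(1+\epsilon_t^2)=-\tfrac{\delta}{2}Q+V_t+(2+\epsilon_t+\epsilon_t^2)$, which is the claim. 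I do not expect a genuine obstacle here: the one subtlety is organizational --- one must \emph{not} use feasibility of $\mathbf x^{\epsilon_t}$ the way Lemma~\ref{lemma: drift analysis-W} does (doing so would discard the very term from which Slater's slack $\delta$ is extracted), and one should double-check that the crude bound $Q(t)\le T(1+\epsilon_t)$ combined with the $1/T$ confidence level genuinely leaves the estimation-gap contribution at the $O(1)$ level required by the statement.
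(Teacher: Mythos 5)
Your proposal is correct and follows essentially the route the paper intends: the paper states Lemma~\ref{lemma: negative drift-W} without a separate proof, indicating it is Lemma~\ref{lemma: negative drift} with the extra cost-estimation term handled as in step~(c) of Lemma~\ref{lemma: drift analysis-W}, which is exactly your reconstruction (keep the Slater term $Q(\sum_{c,j}p_c w(c,j)x^{\epsilon_t}_{c,j}+\epsilon_t)\le -\tfrac{\delta}{2}Q$, bound the $V_t$-reward term crudely by $V_t$, and bound the estimation gap by $Q/T\le 1+\epsilon_t$). The minor loosenesses you inherit (e.g.\ treating $\hat r$ as lying in $[0,1]$ and the $Q/T$ step) are present in the paper's own arguments, so your proof matches it at the same level of rigor.
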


Define $\tau'$ be the first time such that $\epsilon_{\tau'} \leq \delta/2,$ that is, $\epsilon_\tau > \delta/2, \forall \tau < \tau'.$ Note that $Q(\tau') \leq \sum_{t=1}^{\tau'}(1+\epsilon_t)$  because $Q$ can increase by at most $(1+\epsilon_t)$ in each round. 
\begin{lemma}\label{lemma: Q bound-W} 
For any time $\tau\in [T]$ such that $\tau \geq \tau',$ i.e., $\epsilon_\tau \leq \delta/2,$ we have
\begin{align*}
\mathbb E\left[Q(\tau)\right]\leq \frac{48}{\delta}\log\left(\frac{16}{\delta}\right)+2+\frac{4(V_\tau + (2+\epsilon_t+\epsilon_t^2))}{\delta} + \sum_{t=1}^{\tau'}(1+\epsilon_t).
\end{align*}
\end{lemma}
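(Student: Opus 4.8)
The plan is to mirror the argument used for Lemma \ref{lemma: Q bound} in Appendix \ref{app: Q bound}, now working with the new conditional drift bound from Lemma \ref{lemma: negative drift-W} in place of Lemma \ref{lemma: negative drift}. Since we are in the single-constraint case $K=1$, the Lyapunov function $\bar L(t)=\|\mathbf Q(t)\|_2$ simply equals $Q(t)$, so the ``$\sqrt{x}$ is concave'' reduction step becomes unnecessary and we can work directly with $Q(t)$. First I would verify the two hypotheses of the general exponential-moment lemma (Lemma \ref{drif lemma}) for the process $\Phi(t)=Q(t)$: condition (i) says that whenever $Q(t)\geq \varphi_t := \frac{4(V_t+(2+\epsilon_t+\epsilon_t^2))}{\delta}$, the expected drift is at most $-\rho$ with $\rho=\delta/4$; this follows immediately from Lemma \ref{lemma: negative drift-W}, because $\mathbb E[L(t+1)-L(t)\mid \mathbf H(t)=\mathbf h]\leq -\frac{\delta}{2}Q(t)+V_t+(2+\epsilon_t+\epsilon_t^2)$ combined with the concavity bound $\mathbb E[Q(t+1)-Q(t)]\leq \frac{1}{2Q(t)}\mathbb E[Q(t+1)^2-Q(t)^2]$ gives, for $Q(t)\geq \varphi_t$, a drift bounded by $-\frac{\delta}{2}+\frac{V_t+(2+\epsilon_t+\epsilon_t^2)}{\varphi_t}=-\frac{\delta}{4}$. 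Condition (ii) holds with $\nu_{\max}=2$ since $|Q(t+1)-Q(t)|\leq 1+\epsilon_t\leq 2$ by Assumption \ref{assumption:W} and $\epsilon_t\leq\delta\leq 1$.

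Next I would apply Lemma \ref{drif lemma} starting the induction at time $\tau'$ (rather than at $0$), exactly as in Appendix \ref{app: Q bound}, to obtain
\[
\mathbb E\!\left[e^{\zeta Q(t)}\right]\leq e^{\zeta Q(\tau')}+\frac{2e^{\zeta(\nu_{\max}+\varphi_t)}}{\zeta\rho},\qquad \zeta=\frac{\rho}{\nu_{\max}^2+\nu_{\max}\rho/3},
\]
valid for all $t\geq \tau'$. Then Jensen's inequality gives $e^{\zeta\mathbb E[Q(t)]}\leq \mathbb E[e^{\zeta Q(t)}]$, so $\mathbb E[Q(t)]\leq \frac{1}{\zeta}\log\!\big(e^{\zeta Q(\tau')}+\frac{2e^{\zeta(\nu_{\max}+\varphi_t)}}{\zeta\rho}\big)$. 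Substituting $\rho=\delta/4$, $\nu_{\max}=2$, so $\zeta=\frac{\delta/4}{4+\delta/6}$, I would bound the logarithm crudely as in the $K$-constraint proof: combine the two exponentials into a single one, pull the prefactor $\frac{2}{\zeta\rho}$ (which is $\Theta(1/\delta^2)$, a universal $\frac{8\nu_{\max}^2}{3\rho^2}$-type bound times a small constant) out of the log, and use $\log(\cdot)$ of a polynomial-in-$1/\delta$ prefactor plus $\zeta(\nu_{\max}+\varphi_t+Q(\tau'))$ inside. Dividing by $\zeta$ and using $\frac{1}{\zeta}=\frac{\nu_{\max}^2+\nu_{\max}\rho/3}{\rho}\leq \frac{\text{const}\cdot\nu_{\max}^2}{\rho}=O(1/\delta)$ yields
\[
\mathbb E[Q(\tau)]\leq \frac{3\nu_{\max}^2}{\rho}\log\!\Big(\frac{2\nu_{\max}}{\rho}\Big)+\nu_{\max}+\varphi_\tau+Q(\tau'),
\]
which after plugging $\nu_{\max}=2$, $\rho=\delta/4$, $\varphi_\tau=\frac{4(V_\tau+(2+\epsilon_\tau^2+\epsilon_\tau))}{\delta}$, $\tfrac{48}{\delta}\geq \tfrac{3\cdot 4}{\delta/4}$, and $Q(\tau')\leq\sum_{t=1}^{\tau'}(1+\epsilon_t)$ gives exactly the claimed bound
\[
\mathbb E[Q(\tau)]\leq \frac{48}{\delta}\log\!\Big(\frac{16}{\delta}\Big)+2+\frac{4(V_\tau+(2+\epsilon_\tau+\epsilon_\tau^2))}{\delta}+\sum_{t=1}^{\tau'}(1+\epsilon_t).
\]

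I anticipate the only genuinely delicate point is the constant-chasing in the logarithm: one must check that the crude union of the two exponential terms and the replacement of $\frac{2}{\zeta\rho}$ by a clean constant-over-$\delta^2$ quantity really does close with the stated constants ($48$, $\log(16/\delta)$, the additive $2$), given that $\zeta$ now depends on $\delta$ through $\nu_{\max}=2$ rather than through $\nu_{\max}=2K$. Everything else is a transcription of the $K$-constraint proof with $K=1$, and the drift verification is immediate from Lemma \ref{lemma: negative drift-W}. The proof of Lemma \ref{lemma: negative drift-W} itself (needed as input) follows the proof of Lemma \ref{lemma: negative drift} but carries the extra $\frac{Q}{T}\leq 1+\epsilon_t\leq 2$ term coming from the optimistic cost estimate $\widecheck W$ via \eqref{eq: key fact W}, which accounts for the change from $K(1+\epsilon_t^2)$ to $(2+\epsilon_t+\epsilon_t^2)$ in the drift.
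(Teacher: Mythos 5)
Your proposal is correct and follows essentially the same route the paper intends: the paper gives no separate proof of this lemma but derives it exactly as you do, by feeding the drift bound of Lemma \ref{lemma: negative drift-W} into the exponential-moment machinery of Lemma \ref{drif lemma} (with $\Phi(t)=Q(t)$, $\rho=\delta/4$, $\nu_{\max}=2$, $\varphi_t=\frac{4(V_t+(2+\epsilon_t+\epsilon_t^2))}{\delta}$, induction started at $\tau'$), then Jensen's inequality and the same constant-chasing as in Appendix \ref{app: Q bound} with $K=1$. Your constants ($48/\delta$, $\log(16/\delta)$, the additive $2$, and $Q(\tau')\leq\sum_{t=1}^{\tau'}(1+\epsilon_t)$) check out.
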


According to the dynamic defined in \eqref{eq:dual-update-W}, we have
\begin{align*}
&\sum_{t=1}^\tau \sum_jW(c(t),j) X_{j}(t) \\
\leq& Q(\tau+1) + \sum_{t=1}^\tau \left(W(c(t),j) - \widecheck W(c(t),j)\right) X_{j}(t) -\sum_{t=1}^\tau\epsilon_t\\
\leq& Q(\tau+1) + \sum_{t=1}^\tau \left|W(c(t),j) - \widetilde W(c(t),j)\right| X_{j}(t) -\sum_{t=1}^\tau\epsilon_t,
\end{align*} where we used the fact $Q(0)=0.$ 
Following the steps in the proof of Lemma \ref{lemma: bandits} in Appendix \ref{app: linucb-r}, we have
\begin{align*}
\mathbb E\left[ \sum_{t=1}^{\tau} \left|W(c(t),j) - \widetilde W(c(t),j)\right|X_{j}(t)\right] 
\leq& 1 + \sqrt{8d\tau \beta_{\tau}(T^{-1}) \log \left(\frac{ d + \tau}{ d}\right)} \\
\leq& 1+5d\sqrt{\tau}\log(1+T),
\end{align*}
where we used $m= 1$ in Assumption \ref{assumption:W}. 

We next follow the steps in Section \ref{sec: cv}. By choosing $\epsilon_t=\frac{4d\log(1+T)}{\sqrt{t}}$ and $V_t=\frac{\delta d\sqrt{t}\log(1+T)}{4},$  we have $\epsilon_\tau \leq \delta/2$ for $\tau \geq \frac{64d^2\log^2(1+T)}{\delta^2}.$ From Lemma \ref{lemma: Q bound-W}, we conclude that 
\begin{align*}
\mathcal V(\tau) \leq& \left(\mathbb E[Q(\tau+1)] + 1 + 5d\sqrt{\tau}\log(1+T)- \sum_{t=1}^{\tau} \epsilon_t\right)^{+}\\
\leq& \left(\frac{128d^2\log^2(1+T)}{\delta^2}+\frac{48}{\delta}\log\left(\frac{16}{\delta}\right)+\frac{12}{\delta} + 2d(4-\sqrt{\tau})\log(1+T) \right)^{+}. 
\end{align*}

\end{document}